\newcommand{\commentout}[1]{}
\newtheorem{theorem}{Theorem}
\newtheorem{lemma}{Lemma}
\newtheorem{proposition}{Proposition}
\newtheorem{definition}{Definition}
\newtheorem{assumption}{Assumption}
\newtheorem{remark}{Remark}
\newcommand{\rhof}{\rho_f}
\newcommand{\fs}{\mathfrak{s}}
\newcommand{\fS}{\mathfrak{S}}
\newcommand{\rmT}{{{\rm T}}}
\newcommand{\Rad}{{{\rm Rad}}}
\title{Understanding In-Context Learning on \\ Structured
Manifolds: Bridging Attention \\
to Kernel Methods}
\author{Zhaiming Shen$^{1}$, Alexander Hsu$^{2}$, Rongjie Lai$^{2}$, Wenjing Liao$^{1}$  \\
$^1$School of Mathematics, Georgia Institute of Technology, Atlanta, GA 30332 \\
  $^2$Department of Mathematics, Purdue University, West Lafayette, IN 47907 \\
  $^1$\texttt{\{zshen49,wliao60\}@gatech.edu} \\
$^2$\texttt{\{hsu297,lairj\}@purdue.edu} \\
% \texttt{\{lairj\}@purdue.edu} \\
% \texttt{\{wliao60\}@gatech.edu} \\
}
\begin{document}

\maketitle

\begin{abstract}
While in-context learning (ICL) has achieved remarkable success in natural language and vision domains, its theoretical understanding—particularly in the context of structured geometric data—remains unexplored. This paper initiates a theoretical study of ICL for regression of H\"older functions on manifolds. We establish a novel connection between the attention mechanism and classical kernel methods, demonstrating that transformers effectively perform kernel-based prediction at a new query through its interaction with the prompt. This connection is validated by numerical experiments, revealing that the learned query–prompt scores for H\"older functions are highly correlated with the Gaussian kernel. Building on this insight, we derive generalization error bounds in terms of the prompt length and the number of training tasks. When a sufficient number of training tasks are observed, transformers give rise to the minimax regression rate of H\"older functions on manifolds, which scales exponentially {with respect to the prompt length with the exponent depending on the intrinsic dimension of the  manifold, rather than the ambient space dimension.} Our result also characterizes how the generalization error scales with the number of training tasks, shedding light on the complexity of transformers as in-context kernel algorithm learners. Our findings provide foundational insights into the role of geometry in ICL and novels tools to study ICL of nonlinear models.

%Previous: While in-context learning (ICL) has achieved remarkable success in natural language and vision domains, its theoretical understanding—particularly in the context of structured geometric data—remains unexplored. In this work, we initiate a theoretical study of ICL for regression of H\"older functions on manifolds. By establishing a novel connection between the attention mechanism and classical kernel methods, we derive generalization error bounds in terms of the prompt length and the number of training tasks. When a sufficient number of training tasks are observed, transformers give rise to the minimax regression rate of H\"older functions on manifolds, which scales exponentially with the intrinsic dimension of the  manifold, rather than the ambient space dimension. Our result also characterizes how the generalization error scales with the number of training tasks, shedding light on the complexity of transformers as in-context algorithm learners. Our findings provide foundational insights into the role of geometry in ICL and novels tools to study ICL of nonlinear models.
\end{abstract}

\section{Introduction}
The Transformer architecture, first introduced by \citet{Vaswani17}, has fundamentally reshaped machine learning, driving significant advancements in natural language processing (NLP), computer vision, and other domains. Unlike traditional feedforward and convolutional neural networks, transformers employ an attention mechanism that allows each token to interact with others and selectively aggregate information based on learned relevance scores. This mechanism enables more flexible and context-aware representation learning. Transformers now serve as the foundational architecture for large language and video generation
models, such as GPT \citep{achiam2023gpt}, BERT \citep{devlin2018bert}, SORA \citep{videoworldsimulators2024} and their successors.

These empirical successes have demonstrated the in-context learning (ICL) capability of transformers, in which models can perform learning tasks by conditioning on a given set of examples, known as a prompt, provided at inference time, without any additional parameter updates \cite{brown2020language,radford2019language,liu2023pre,garg2022can}. The ICL phenomenon of transformers has also sparked substantial research interest in developing theoretical explanations of its underlying mechanisms. In \citet{bai2023transformers,zhang2024trained,von2023transformers,akyurek2022learning,cole2024provable}, transformers are proved for ICL of linear models, including least squares, ridge regression, Lasso, generalized linear models and linear inverse problems.

Beyond linear models, transformers are studied for nonlinear models in \cite{Yun19,Takakura23,TransformerClassifier,Havrilla24,shen2025transformers}, with the goal of learning a single function, classifier, or sequence-to-sequence mapping.
Specifically, \citet{Yun19} proved that transformer models can universally approximate continuous sequence-to-sequence functions with compact support, while the network size grows exponentially with respect to the sequence dimension.   
\citet{Takakura23} studied the approximation and estimation ability of transformers for sequence-to-sequence functions with anisotropic smoothness on infinite-dimensional inputs. 
\citet{TransformerClassifier} studied binary classification  with transformers when the posterior probability function exhibits  a hierarchical composition model with H\"older smoothness. In \citet{Havrilla24,shen2025transformers}, transformers are proved to leverage low-dimensional geometric structures of data \citep{Havrilla24} or  machine learning tasks \citep{shen2025transformers}. %such that the generalization error crucially depends on the intrinsic dimension of the data or learning tasks. 
While these works focus on a single learning task, ICL involves multiple learning tasks performed within the same model by leveraging prompts to adapt to each task on the fly, highlighting a form of task generalization without explicit retraining. It is worth noting that such task generalization also appear in the operator learning setting such as solving partial differential equations \citep{yang2023incontext,cole2024provable} and learning on the space of probability measures \citep{huang2025unsupervised,cole2026incontext}. While in-context operator learning has been explored with feedforward neural networks \citep{chiu2025context}, transformer-based in-context learning (ICL) has received increasing attention due to its strong empirical performance.

A theoretical understanding of ICL in transformers---especially in settings involving structured data with a geometric prior---remains limited and largely unexplored. 
In this work, we initiate a theoretical study of ICL for manifold regression.  A manifold hypothesis is incorporated into our regression model to leverage low-dimensional geometric structures of data. Recent works have demonstrated that, under a manifold hypothesis of data, feedforward and convolutional residual networks give rise to a sample complexity depending on the intrinsic dimension \citep{shaham2018provable,Chen22,chen2019efficient,liu2021besov,nakada2020adaptive,SH19}. Empirical evidence has shown that the neural scaling laws of transformers depend on the intrinsic dimension of data \citep{kaplan2020scaling,KaplanIDScalingLaws}, while theoretical justifications, especially for ICL, are  limited.

A central insight of this paper is the interpretation of transformers as learning kernel methods for function regression. Our study establishes a novel connection between the attention mechanism and classical kernel methods, showing that token interactions within attention can be interpreted as constructing an interaction kernel used to perform regression. Based on this connection, we construct a transformer neural network to exactly implement kernel regression, which builds an approximation theory of transformers for in-context manifold regression. {To be more precise, let $\fs=\{\xb_1,f(\xb_1),\xb_2,f(\xb_2),\ldots,\xb_n,f(\xb_n);\xb_{n+1}\}$ be a prompt, we explicitly construct a transformer ${\rm T}^*_{h}$ to exactly implement the kernel regression estimator  $\cK_h(\fs)$   such that 
\begin{equation}
   {\rm T}^*_{h}\left(\fs\right)=\cK_h(\fs): = 
   \frac{\sum_{i=1}^n \exp{(-\|\xb_{n+1}-\xb_i\|^2/h^2)}f(\xb_i)}{\sum_{i=1}^n \exp{(-\|\xb_{n+1}-\xb_i\|^2/h^2)}},
   %\frac{\sum_{i=1}^n K_h(\xb_{n+1}-\xb_i)y_i}{\sum_{i=1}^n K_h(\xb_{n+1}-\xb_i)},
\end{equation}
where the Gaussian kernel of bandwidth $h>0$ is used. 
Our construction shows that transformer-based ICL can implement kernel regression with \emph{zero approximation error}. A formal statement can be found in Lemma \ref{ThmApprox1}. } This perspective not only illuminates the internal workings of transformers in the in-context regression setting, but also motivates a theoretical framework for analyzing their generalization performance. {Moreover, this connection is validated by  numerical experiments on the regression of H\"older functions, revealing that the learned query–prompt scores in the last transformer layer  are highly correlated with the Gaussian kernel. } 

Based on this key insight, our theoretical contribution for the generalization error of the transformer-based ICL can be summarized as follows: Let $\cM$ be a  $d$-dimensional compact Riemannian manifold in $\RR^D$. We consider the ICL of $\alpha$-H\"older ($0<\alpha\le 1$) functions on $\cM$ given a prompt of length $n$. During training, one observes the regression of $\Gamma$ functions/tasks, where each function/task is provided on a prompt of length $n$. At inference time, a prompt of length $n$ is given for a new $\alpha$-H\"older function on $\cM$, and the goal is to predict the function value at a new input. Under this setting, we prove that the squared generalization error of  transformer-based ICL is upper bounded by 
\begin{equation}
\label{eq:intro}
C_1\left({nD^3\Gamma^{-\frac 1 2}\sqrt{\log(nD\Gamma)}}
\right)+C_2\left(n^{-\frac{2\alpha}{2\alpha+d}}[\log n]^{1+\frac{3d}{2}}\right),
\end{equation}
with constants $C_1,C_2$.
A formal statement of our result can be found in Theorem \ref{Thm}. Our result sheds light on theoretical understandings of transformer-based ICL in the following aspects:

$\bullet$ \textbf{Scaling Law of Transformers as Algorithm Learners.} The first error term in \eqref{eq:intro} characterizes the scaling law of transformers as in-context kernel algorithm learners. When a transformer is trained on $\Gamma$ regression tasks, it can learn a kernal regression algorithm and generalize to a new task, with the generalization error given in the first term in \eqref{eq:intro}.

$\bullet$ \textbf{Minimax Regression Error with a Prompt of Length $n$.} The second error term in \eqref{eq:intro} indicates the scaling law of transformers to make predictions based on a Prompt of Length $n$. It matches the lower bound of $n^{-\frac{2\alpha}{2\alpha+d}}$ \citep{gyorfi2006distribution} for the regression of H\"older functions up to a $\log$ factor %of $[\log n]^{1+\frac{3d}{4}}$
, and thereby demonstrating that transformers can achieve near-optimal performance if $\Gamma$ is  large. Specifically, if $\Gamma \gtrsim n^{\frac{4\alpha}{2\alpha+d}+\delta}n^2D^6\log(nD)$ for some  $\delta>0$, then the second error term in \eqref{eq:intro} dominates the first  term.

$\bullet$ \textbf{Dependence on the Intrinsic Dimension.} By leveraging  low-dimensional geometric structures of data, the  error in \eqref{eq:intro} has an exponential dependence on $d$ rather than the ambient dimension $D$. This improvement offers foundational insight into the role of geometry in ICL.

{\bf Organization.} In this paper, we present some preliminaries in Section \ref{sec:pre} and the problem setup in Section \ref{sec:icl}. We bridge attention to kernel methods in Section \ref{sec:main} and present the generalization error bound in Section \ref{sec:errorbound}. Related works are discussed in Section \ref{sec:related}. Finally, we make conclusion and discuss the limitation of our paper in Section \ref{sec:conclude}.

{\bf Notation.~}
Throughout this paper, vectors are denoted by boldface letters, while scalars and matrices are denoted by standard (non-bold) letters. For a vector $\xb\in \RR^D$, we use $\|\xb\|$ to denote its Euclidean norm.  For a function $f:\Omega\rightarrow \RR$, we denote its $L^\infty$ norm as $\|f\|_{L^\infty(\Omega)}:=\sup_{\xb \in \Omega}|f(\xb)|$. %In this paper, transformers operate on matrix $H \in \RR^{d_{embed}\times \ell}$ where $d_{embed}$ is the token dimension and $\ell$ is the token length. We denote the $i$-th token by $h_i$, When a network ${\rm A}:\RR^{d_{embed}\times \ell} \rightarrow \RR^{d_{embed}\times \ell}$ is applied on $H$, we denote the $i$-th token in the output by $[A(H)]_i$.

\section{Preliminaries}
\label{sec:pre}
In this section, we introduce preliminary definitions about manifolds, H\"older functions on manifolds, and the transformer neural networks used in this paper.

{\bf Manifolds and H\"older Functions on Manifolds.~}
In this paper, we consider that data are sampled in a compact $d$-dimensional Riemannian manifold $\cM$ isometrically embedded in $\RR^D$. 
Mathematically, 
a $d$-dimensional \emph{manifold} $\mathcal{M}$ is a topological space where each point has a neighborhood that is homeomorphic to an open subset of $\mathbb{R}^d$. Furthermore, distinct points in $\mathcal{M}$ can be separated by disjoint neighborhoods, and $\mathcal{M}$ has a countable basis for its topology. A formal definition of manifold and more definitions on geodesic distance and the reach of manifold are provided in Appendix \ref{appsub:def}.

This work considers in-context regression of H\"older functions on $\cM$. 

\begin{definition}[H\"older function on a manifold]
A function $f:\mathcal{M}\to\mathbb{R}$
is H\"{o}lder continuous with H\"{o}lder exponent $\alpha\in (0,1]$  and H\"{o}lder constant $L>0$ if
\[|f(\xb)-f(\xb')|\leq Ld^{\alpha}_{\mathcal{M}}(\xb,\xb') \text{ }\text{ }\text{for all} \text{ }\text{ } \xb,\xb'\in\mathcal{M}.\]
\end{definition}
% \begin{definition}[Manifold]
% A $d$-dimensional \emph{manifold} $\mathcal{M}$ is a topological space where each point has a neighborhood that is homeomorphic to an open subset of $\mathbb{R}^d$. Furthermore, distinct points in $\mathcal{M}$ can be separated by disjoint neighborhoods, and $\mathcal{M}$ has a countable basis for its topology.
% \end{definition}
% \begin{definition}[Geodesic distance on manifold]
% The \emph{geodesic distance} on the manifold between $v,v'\in\mathcal{M}$ is defined as 
%     \begin{align*}
%         d_{\mathcal{M}}(v,v'):=\inf\{|\gamma|: \gamma\in C^1([t,t']), \gamma:[t,t']\to\mathcal{M}, \gamma(t)=v, \gamma(t')=v'\},
%     \end{align*}
% where the length is defined by $|\gamma|:=\int_t^{t'}\|\gamma'(s)\|_2 ds$.
% The existence of a length-minimizing geodesic $\gamma:[t,t']\to\mathcal{M}$ between any two points $v=\gamma(t),v'=\gamma(t')$ is guaranteed by the Hopf–Rinow theorem \citep{hopf1931}.
% \end{definition}

% \begin{definition}[H\"older function on manifold]
% Let $\cM$ be a compact d-dimensional Riemannian manifold isometrically embedded in $\RR^D$. A function $f:\mathcal{M}\to\mathbb{R}$
% is H\"{o}lder continuous with H\"{o}lder exponent $\alpha\in (0,1]$  and H\"{o}lder constant $L>0$ if
% \[|f(\zb)-f(\zb')|\leq Ld^{\alpha}_{\mathcal{M}}(\zb,\zb') \text{ }\text{ }\text{for all} \text{ }\text{ } \zb,\zb'\in\mathcal{M}.\]
% \end{definition}
%Definitions  on the reach of manifold and covering number in a metric space are given in Appendix \ref{appsub:def}. 

{\bf Attention and Transformer Blocks.~}
%\label{subsec:tranformer}
%This paper establishes the approximation and generalization bound for functions defined on tubular Manifold via transformer. 
We consider ICL using transformer-based networks structure~\cite{Vaswani17} in this paper. We briefly review attention and multi-head attention here.

\begin{definition}[Attention and Multi-head Attention]
Attention with the Query, Key, Value matrices $Q,K,V\in\mathbb{R}^{d_{embed}\times d_{embed}}$ is defined as
\begin{equation}
\textstyle 
   {\rm A}_{K,Q,V}(H)=VH\sigma((KH)^\top QH).
\end{equation}
% For convenience,  we denote the i-th token in the output by 
% %The next formulation (\ref{attentionhi}) is particularly useful for analyzing token-token interactions, which is our primary focus in the paper.   
% %\rj{No need to use this in the main body part, we may can move this in the appendix if more space is needed} \wen{if we have space, maybe we can keep this, since this is the formula of attention most related with kernel methods, I  added some explanation about this connection below}
% \begin{equation} \label{attentionhi}
% \textstyle    {\rm A}(h_i):=[{\rm A}(H)]_i=\sum_{j=1}^{\ell}\sigma(\langle Qh_i, Kh_j\rangle)Vh_j, 
% \end{equation} 
% where $h_i$ is the $i$-th column of $H$.
% %and $ {\rm A}(h_i):=[{\rm A}(H)]_i$ denotes the $i$-th column in the output of ${\rm A}$. 
% This formula illustrates that the attention mechanism performs a weighted average of token values based on their pairwise interactions.
The multi-head attention (MHA) with $m$ heads is given by
\begin{equation}
  \textstyle  {\rm MHA}(H)=\sum_{j=1}^m V_jH\sigma((K_jH)^\top Q_jH).
\end{equation}
\end{definition}
\vspace{-2mm}
We want to point out that in this paper we apply ReLU as the activation function of the attention modules from the first to the penultimate layers in the transformer, and apply Softmax for the last layer.
A transformer block is a residual composition of the
form
\begin{equation}
       {\rm B}(\theta;H) = {\rm FFN}({\rm MHA}(H) + H) + {\rm MHA}(H) + H.
\end{equation}
where {\rm FFN} is a feed-forward neural network operating tokenwise on the input.

\section{In-Context Regression on Manifold}
\label{sec:icl}

{\bf Problem Setup.~} Empirical evidence from image~\citep{roweis2000nonlinear,tenenbaum2000global,Pope2021TheID} and language datasets~\citep{KaplanIDScalingLaws,Havrilla24}. suggests the presence of underlying low-dimensional geometric structures in high-dimensional data. Motivated by this observation, our study adopts a geometric prior by assuming that the data $\xb$ lies on a  Riemannian manifold $\cM$ of intrinsic dimension $d$, isometrically embedded in  $\RR^D$ with $d\ll D$.

With this geometric prior, we consider in-context learning for regression of functions defined on $\cM$. More precisely, given a prompt/task as 
\begin{equation}
\fs=\{\xb_1,y_1,\xb_2,y_2,\ldots,\xb_n,y_n;\xb_{n+1}\}  \ \text{ with } y_i=f(\xb_i),
\label{eq:promp}\end{equation} 
where $\xb_i$'s are i.i.d. samples from a distribution $\rho_{\xb}$ supported on $\cM$ and $f$ is sampled from $\rho_f$, a distribution in the function space $\{f:\ \cM\to\RR\}$, the goal is to predict $f(\xb_{n+1})$ based on the following in-context learning problem.

 Given $\{f^\gamma\}_{\gamma=1,\ldots,\Gamma} \overset{i.i.d.}{\sim}\rhof$ and the corresponding training set $\fS:=\{\fs^\gamma\}_{\gamma=1}^\Gamma$ provided by $\fs^\gamma=\{\xb_1^\gamma,y^\gamma_1,\xb_2^\gamma,y_2^\gamma,\ldots,\xb_n^\gamma,y_n^\gamma;\xb_{n+1}^\gamma,y_{n+1}^\gamma\}$ with $\{\xb_i^\gamma\} \overset{i.i.d.}{\sim} \rho_{\xb}$ and $ y_i^\gamma=f^\gamma(\xb^\gamma_i)$, we minimize the empirical risk:
 \vspace{-2mm}
\begin{equation}
  \hat{\rmT} \in \argmin_{\rmT_\theta \in \cT}   \cR_{n,\Gamma}(\rmT_\theta) \text{ } \text{ where } \text{ }\cR_{n,\Gamma}(\rmT_\theta):= \frac{1}{\Gamma}\sum_{\gamma=1}^\Gamma  \Big(\rmT_\theta(\{\xb^\gamma_i,y^\gamma_i\}_{i=1}^n\};\xb^\gamma_{n+1}) - y^\gamma_{n+1}\Big)^2 
    \label{eq:RnT}
\end{equation}
where $\rmT_\theta$ is a transformer neural network  parameterized by $\theta$ and $\cT$ is a transformer network class to be specified. 
%Here the $\xb_i^\gamma$'s are i.i.d. samples from $\rho_{\xb}$.
% The global minimizer is denoted by  $\hat{\rmT}$ such that
% \begin{equation}
% \hat{\rmT} \in \argmin_{\rmT_\theta \in \cT} \cR_{n,\Gamma}(\rmT_\theta),
% \label{eq:RnT}
% \end{equation}
% where $\cT$ is a transformer network class to be specified. %specified in Definition \ref{def:TNC} with its architecture specified in Definition \ref{def:TN}.
Our goal is to study the squared generalization error of $\hat\rmT$ on a random test sample $\fs$ (independent of training data) in \eqref{eq:promp}:
\begin{equation}
\cR_n(\hat\rmT(\fs)):= (\hat\rmT(\{\xb_i,y_i\}_{i=1}^n\};\xb_{n+1}) - f(\xb_{n+1}))^2.
\label{eq:RhatTs}
\end{equation}
This generalization error can be characterized by the mean squared generalization error defined as:
\begin{equation}
\cR_n(\hat\rmT) = \EE_{\fS} \EE_{\fs} \left[\cR_n(\hat\rmT(\fs))\right]
\label{eq:mse1}
\end{equation}
where  the expectation $\EE_{\fs}$ is taken for the test sample $\fs$ and the expectation $\EE_{\fS}$ is  taken for the joint distribution of the training samples.

{\bf Transformer Network Class.~} To describe the ICL problem more precisely,  let us specify the transformer network class.
%\begin{definition} [Transformer Network]
%\label{def:TN}
We define a transformer network ${\rm T}_{\theta}(\cdot)$ with weights parametrized by $\theta$ as consisting of an embedding layer, a positional encoding module, a sequence of transformer blocks, and a decoding layer, i.e., for an input  $\fs$ defined in (\ref{eq:promp})
    \begin{equation} \label{Trep}
        {\rm T}_{\theta}(\fs):={\rm DE}\circ {\rm B}_{L_T}\circ\cdots\circ {\rm B}_1\circ ({\rm PE}+{\rm E}(\fs)),
    \end{equation}
    Here ${\rm E}$ is a linear embedding and ${\rm PE}$ is the operation of adding positional encoding (see their definitions in Appendix \ref{appsub:epe}). ${\rm PE}+{\rm E}\left(\fs \right)$ embeds $\fs$ as a matrix $H$
    \begin{equation} \label{eq:Hmatrix}
    H =  {\rm PE}+{\rm E}\left(\fs \right) =
\begin{bmatrix}
    \xb_{1} & \cdots & \xb_{n} & \xb_{n+1} & \mathbf{0}  \\
    y_1 & \cdots & y_n & 0 & \mathbf{0} \\ 
    0 & \cdots & \cdots & \cdots & 0 \\
    \mathcal{I}_1 & \cdots & \cdots & \cdots & \mathcal{I}_{\ell} \\
    1 & \cdots & \cdots & \cdots & 1
\end{bmatrix}
\in \mathbb{R}^{d_{embed}\times \ell}=\mathbb{R}^{(D+5)\times \ell}.
\end{equation}
In matrix $H$, each column is a token, and each token has dimension $d_{embed}=D+5$. 
The first $D+2$ rows are data terms. The $(D+3)$-th and $(D+4)$-th rows contain the well-known sinusoidal positional encodings $\mathcal{I}_j=(\cos(\frac{j\pi}{2\ell}),\sin(\frac{j\pi}{2\ell}))^\top$, which determines how each token will interact with another through the attention mechanism. 
The last row contains the constant entries all equal to $1$. 
It is crucial to note that the data terms are dynamic, whereas the positional encoding and constant terms remain static. 
Furthermore, ${\rm B}_1,\cdots,{\rm B}_{L_T}:\mathbb{R}^{d_{embed}\times\ell}\to\mathbb{R}^{d_{embed}\times\ell}$ are the transformer blocks (with ReLU activation from the first to the penultimate layers and Softmax activation for the last layer in the attention module) where each block consists of the residual composition of multi-head attention layers  and feed-forward layers. ${\rm DE}:\mathbb{R}^{d_{embed}\times\ell}\to\mathbb{R}$ is the decoding layer which outputs the element in the $(D+1)$-th row and $(n+1)$-th column as the final output.

Our ICL problem is considered in the following networks class:
\begin{definition} 
[Transformer Network Class] \label{def:TNC}
The transformer network class with weights $\theta$ is
    \begin{align*}
        &\mathcal{T}(L_{\rm T}, m_{\rm T},d_{embed},\ell,L_{\rm FFN},w_{\rm FFN},R,\kappa) \\
        &=\Big\{{\rm T}_{\theta}(\cdot) \text{ }| \text{ } {\rm T}_{\theta}(\cdot)  \text{ has the form }(\ref{Trep}) \text{ with } L_{\rm T} \text{ transformer blocks, } 
        \text{at most } 
        m_{\rm T}\text{ attention heads in} \\ &\quad\quad\quad\quad\quad\quad \text{each block, embedded dimension } d_{embed}, \text{number of hidden tokens } \ell, \text{and } L_{\rm FFN} 
        \text{ layers} \\
        &\quad\quad\quad\quad\quad\quad \text{of feed-forward networks with}
        \text{ hidden width } w_{\rm FFN},
         \text{ with output }    \|{\rm T}_{\theta}(\cdot)\|_{L^{\infty}(\mathbb{R}^D)}\leq R
         \\
    &\quad\quad\quad\quad\quad\quad \text{and weight magnitude } \|\theta\|_{\infty}\leq \kappa\Big\}.
    \end{align*}
\end{definition}
Throughout the paper, we will shorten the notation $\mathcal{T}(L_{\rm T}, m_{\rm T},d_{embed},\ell,L_{\rm FFN},w_{\rm FFN},R,\kappa)$ as $\mathcal{T}$ as long as there is no ambiguity in the context.

\section{Bridging Attention to Kernel Methods}
\label{sec:main}
One key insight of this paper is to interpret transformers used in ICL as a mechanism for learning kernel methods in function regression.  This interpretation not only illuminates the internal workings of transformers in the in-context regression setting, but also motivates a theoretical framework to understand transformers in ICL.
%This perspective not only sheds light on the internal workings of transformers in the in-context regression setting, but also motivates a theoretical framework for analyzing their generalization performance. 
%Specifically, our analysis in Theorem \ref{Thm} reveals how transformers implicitly learn a kernel regression scheme, and this connection enables us to identify and quantify the generalization error components in our main theorem. In particular, the proof delineates how the error arises from the approximation of the underlying kernel, offering a clear understanding of the learning dynamics at play.

%One key novelty in establishing Theorem \ref{Thm} is to bridge the attention mechanism to classical kernel regression, which provides interpretable insights into the attention mechanism .

{\bf Constructing
a Transformer to Implement Kernel Method.~} The classical (Nadaraya–Watson) kernel estimator \citep{nadaraya1964estimating,watson1964smooth} is a well-established way for the estimation of $f(\xb_{n+1})$ given $\{(\xb_i,f(\xb_i))\}_{i=1}^n$. It outputs 
\begin{equation} \label{Kh}
    \cK_h(\fs): = \frac{\sum_{i=1}^n K_h(\xb_{n+1}-\xb_i)y_i}{\sum_{i=1}^n K_h(\xb_{n+1}-\xb_i)}, \ \text{ with } y_i = f(\xb_i).
\end{equation}
where we choose $K_h(u)=e^{-\frac{\|u\|^2}{h^2}}$ to be the unnormalized Gaussian kernel with bandwidth $h>0$. 
% \zhai{We may need the assumption $h\geq\epsilon$ for some positive $\epsilon$ in the proof of Lemma \ref{ThmApprox2}} and \ref{ThmApprox3}. \wen{We might need to remove this assumption, since the final result will require choosing $h=n^{-p}$ for some $p>0$.}
% \zhai{Do we need to write $h\in (0,1)$ into the Assumption?}
%Lemma \ref{ThmApprox1} is proved in Appendix \ref{app:proofpropa1}. Here we briefly summarize our proof idea for Lemma \ref{ThmApprox1}, which leverages the connection between the attention mechanism and kernel estimator.
The transformer's attention mechanism can be interpreted as a form of kernel method, where the attention scores function analogously to kernel-based importance weights over input tokens. Our idea is to first use the interaction mechanism in attention to construct several layers of transformer blocks which takes the input $H$ in \eqref{eq:Hmatrix} and outputs the following matrix:
\begin{equation}
\label{eq:Hmatrix1}
    H = \begin{bmatrix}
    \mathbf{x}_{1} & \cdots & \mathbf{x}_{n} & \mathbf{x}_{n+1} & \mathbf{x}_{n+1}-\mathbf{x}_1 & \cdots & \mathbf{x}_{n+1}-\mathbf{x}_n \\
    y_1 & \cdots & y_n & 0 & -\frac{\|\mathbf{x}_{n+1}-\mathbf{x}_1\|^2}{h^2} & \cdots & -\frac{\|\mathbf{x}_{n+1}-\mathbf{x}_n\|^2}{h^2}  \\ 
    0 & \cdots & \cdots & \cdots & y_1 & \cdots & y_n \\
    \mathcal{I}_1 & \cdots & \cdots & \cdots & \cdots & \cdots & \mathcal{I}_{2n+1} \\
    1 & \cdots & \cdots & \cdots & \cdots & \cdots & 1
\end{bmatrix}\in\mathbb{R}^{(D+5)\times(2n+1)}.
\end{equation}
We will present the construction details which operates on the $H$ in \eqref{eq:Hmatrix} and gives rise to the $H$ in \eqref{eq:Hmatrix1} in Appendix \ref{app:proofpropa1}.
This operation accounts for the first to the penultimate layer in our transformer network. 

In the final layer, we apply a single-head attention  ${\rm A}$ with a mask from the $(n+2)$-th to the $(2n+1)$-th token with  value matrix $V=\eb_{D+1}\eb_{D+2}^{\top}$, and sparse query, key matrices $Q,K$ such that
% \vspace{1mm}
% \begin{equation*}
% Q^{data} = 
% \left[\begin{array}{ccccccc|ccc} 
% 	0 & & & &  & & &  0 & 0 & 0 \\ 
%      & \ddots & & & & & & \vdots & \vdots & \vdots \\
%      & & 0 & & & & & 0 & 0 & 1 \\ 
% 	 & & & \ddots & & & & 0 & 0 & 0 \\ 
%      & & &  & 0 & & & \vdots & \vdots & \vdots\\
%      & & & & & 0 & & 0 & 0 & 0 
% \end{array}\right],
% \quad 
% K^{data} = 
% \left[\begin{array}{ccccccc|ccc} 
% 	0 & & & &  & & &  0 & 0 & 0 \\ 
%      & \ddots & & & & & & \vdots & \vdots & \vdots \\
%      & & 1 & & & & & 0 & 0 & 0 \\ 
% 	 & & & \ddots & & & & 0 & 0 & 0 \\ 
%      & & &  & 0 & & & \vdots & \vdots & \vdots\\
%      & & & & & 0 & & 0 & 0 & 0 
% \end{array}\right] 
% \end{equation*}
% \vspace{2mm}
% \\
$Q^{data}\in\mathbb{R}^{(D+2)\times(D+5)}$ and $K^{data}\in\mathbb{R}^{(D+2)\times(D+5)}$. The exact form of $Q^{data}$ and $K^{data}$ are given in the last step of the proof of Lemma \ref{ThmApprox1}.

Then, the $(n+1)$-th output token is
\begin{align*}
 \textstyle   [{\rm A}(H)]_{n+1} & \textstyle = \sum_{j=n+2}^{2n+1}{\rm softmax}\left(\langle Q^{data}h_{n+1}, K^{data}H\rangle\right)_{j} Vh_{j} \\ 
    & \textstyle = \sum_{j=1}^n\frac{e^{-\|\mathbf{x}_{n+1}-\mathbf{x}_j\|^2/h^2}}{\sum_{j=1}^n e^{-\|\mathbf{x}_{n+1}-\mathbf{x}_j\|^2/h^2}}\cdot (y_j\eb_{D+1})= \cK_h(\fs)\cdot \eb_{D+1}.
\end{align*}
Here, we denote $\eb_j$ as the elementary vector with all entries zero except for the $j$-th entry, which is 1. 
Therefore, the residual attention gives
\begin{equation*}
    {\rm A}(H)+H = \begin{bmatrix}
    \mathbf{x}_{1} & \cdots & \mathbf{x}_{n} & \mathbf{x}_{n+1} & \mathbf{x}_{n+1}-\mathbf{x}_1 & \cdots & \mathbf{x}_{n+1}-\mathbf{x}_n \\
    y_1 & \cdots & y_n & \cK_h(\fs) & -\frac{\|\mathbf{x}_{n+1}-\mathbf{x}_1\|^2}{h^2} & \cdots & -\frac{\|\mathbf{x}_{n+1}-\mathbf{x}_n\|^2}{h^2}  \\ 
    0 & \cdots & \cdots & \cdots & y_1 & \cdots & y_n \\
    \mathcal{I}_1 & \cdots & \cdots & \cdots & \cdots & \cdots & \mathcal{I}_{2n+1} \\
    1 & \cdots & \cdots & \cdots & \cdots & \cdots & 1
\end{bmatrix}\in\mathbb{R}^{(D+5)\times(2n+1)},
\end{equation*}
where the  kernel estimator $\cK_h(\fs)$ is realized in $(D+1)$-th row and $(n+1)$-th column. Finally, the decoding operation produces this element in the $(D+1)$-th row and $(n+1)$-th column as the output.

This connection between the transformer network and the kernel estimator in \eqref{Kh} can be rigorously established, that is, \textit{we prove that transformers can exactly implement the kernel estimator \eqref{Kh} without any error.}  We summarize it as the following lemma, whose proof is in Appendix \ref{app:proofpropa1}.

\begin{lemma} \label{ThmApprox1}
Let $\cM\subset [-b,b]^D$. Suppose the prompt $\fs$ in  \eqref{eq:promp} satisfies: the $\xb_i$'s are i.i.d. samples from a distribution $\rho_{\xb}$ supported on $\cM$  %the $\xb_i$'s are bounded, i.e. $\{\xb_i\}_{i=1}^{n+1}\subset [-b,b]^D$ and the  $f(\xb)_i$'s are bounded  such that $|f(\xb_i)|\leq R$ for $i=1,\ldots,n+1$.
%Suppose $\cM\subset [-b,b]^D$ 
and $f:\mathcal{M}\to\mathbb{R}$ is bounded, i.e. $\|f\|_{L^{\infty}(\mathcal{M})}\leq R$. %Let $\fs$ be a prompt in \eqref{eq:promp}, where $\{\xb_i\}_{i=1}^{n+1}$ are i.i.d. samples from $\rho_{\xb}$ and $f$ is sampled from $\rho_f$. 
Let $\cK_h(\cdot)$ be the empirical kernel estimator defined in \eqref{Kh}. Then there exists a
transformer network ${\rm T}^*_h\in\mathcal{T}(L_T, m_T,d_{embed},\ell,L_{\text{FFN}},w_{\text{FFN}},R,\kappa)$ with parameters 
\begin{align*}
&L_{\rm T}=5, \text{ } m_{\rm T}=nD, \text{ } d_{embed}=D+5, \text{ } \ell=2n+1,\\
&L_{\rm FFN}=O(1), \text{ } w_{\rm FFN}=D+5, \text{ } \kappa=O\left({D^8n^2b^8R^4}/{h^8}\right)
\end{align*}
such that for any sample $\fs$ in the form of \eqref{eq:promp}, we have
\begin{equation}
    {\rm T}^*_{h}\left(\fs\right)=\cK_h(\fs).
\end{equation}
The notation $O(\cdot)$ hides absolute constants. %\wen{I just changed it to absolute constants} \zhai{Yes, $O(\cdot)$ should only depend on absolute constants.}
\end{lemma}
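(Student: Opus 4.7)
The plan is to prove Lemma \ref{ThmApprox1} by explicit construction: we will build transformer weights that realize the Nadaraya--Watson estimator of \eqref{Kh} \emph{exactly}, rather than approximately. The five transformer blocks split into two functional stages. Blocks $1$--$4$ operate purely as ``computational routing'' whose task is to transform the input matrix $H$ in \eqref{eq:Hmatrix} into the augmented matrix in \eqref{eq:Hmatrix1}; block $5$ then applies a single-head softmax attention to collapse that augmented matrix into $\cK_h(\fs)$. Throughout, I will exploit three structural features of the embedding: the sinusoidal positional codes $\mathcal{I}_j$ in rows $D+3, D+4$ (used to route information between specified tokens), the constant $1$ in row $D+5$ (used to introduce affine shifts inside ReLU), and the bilinearity of attention in the token matrix $H$ (used to generate quadratic forms without any FFN approximation of $x\mapsto x^2$).

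For the routing stage, I will realize four tasks, each via a block with a small number of attention heads followed by a shallow ReLU FFN. (i) Copy $\xb_{n+1}$ from token $n+1$ into the data rows of tokens $n+2,\dots,2n+1$: set the key--query pair so that $\sigma((KH)^\top QH)$, with ReLU sharpened by the positional inner products $\mathcal{I}_i^\top \mathcal{I}_j$, is essentially an indicator that selects source token $n+1$ and target token $n+1+i$. (ii) Subtract $\xb_i$ at target position $n+1+i$ by an analogous routing with opposite sign. (iii) Copy $y_i$ into row $D+2$ of token $n+1+i$ by a routing head with value matrix $V = \eb_{D+2}\eb_{D+1}^\top$. (iv) Compute $-\|\xb_{n+1}-\xb_i\|^2/h^2$ and deposit it in row $D+1$ of token $n+1+i$. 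This is the only genuinely quadratic task: I choose $K,Q$ so that the pre-activation score $(Kh_m)^\top Q h_m = \|\xb_{n+1}-\xb_i\|^2$ at token $m = n+1+i$; since this quadratic form is non-negative, the ReLU acts as the identity and the quadratic is preserved without approximation error. A value matrix proportional to $-\tfrac{1}{h^2}\eb_{D+1}$ combined with the residual connection then places the desired scalar in the correct slot. The product with the constant-$1$ row together with $1/h^2$ factors is where the $\kappa$-bound $O(D^8 n^2 b^8 R^4/h^8)$ begins to accumulate, and the head count $m_T = nD$ is spent on decoupling the $D$ coordinate directions over the $n$ target tokens.

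For the final block, given the augmented matrix of \eqref{eq:Hmatrix1}, I set up a single softmax-attention head whose query--key structure selects token $n+1$ as the query and tokens $n+2,\dots,2n+1$ as keys (again via the positional codes), with the key projection reading row $D+1$ of the key tokens, which holds exactly $-\|\xb_{n+1}-\xb_i\|^2/h^2$. The softmax then produces the normalized Gaussian weights $e^{-\|\xb_{n+1}-\xb_i\|^2/h^2}/\sum_j e^{-\|\xb_{n+1}-\xb_j\|^2/h^2}$. Choosing $V = \eb_{D+1}\eb_{D+2}^\top$ reads $y_i$ from row $D+2$ and deposits the weighted sum in row $D+1$ of token $n+1$, where the decoder ${\rm DE}$ reads it off to return $\cK_h(\fs)$ exactly. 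Parameter accounting (depth $5$, width $D+5$, heads $O(nD)$, weight magnitude $\kappa$) is a bookkeeping step once each head is written out.

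The main obstacle is task (iv) above: emitting the precise quadratic form $\|\xb_{n+1}-\xb_i\|^2$ into one designated row and column of $H$ while guaranteeing that all unwanted cross-terms and off-target entries vanish. This requires a carefully coordinated multi-head construction with sparse $Q,K$ aligned to the positional codes so that the ReLU attention simultaneously acts as the identity on the target quadratic and as a hard zero everywhere else; the coordinate-by-coordinate decomposition of $\|\xb_{n+1}-\xb_i\|^2$ is what forces the head count to scale as $nD$ and drives the dependence of $\kappa$ on $D$ and $1/h$. Once this step is solidified, the remaining routing steps and the final softmax block follow by analogous but simpler constructions, yielding the stated exact implementation.
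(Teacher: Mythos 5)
Your construction mirrors the paper's proof: blocks $1$--$4$ use positionally-gated ReLU attention heads (the Interaction/Gating/Decrementing Lemmas) to build the augmented matrix \eqref{eq:Hmatrix1}, and block $5$ uses a masked single-head softmax to produce $\cK_h(\fs)$ exactly. The only minor departure is in step (iv): you compute the nonnegative quadratic form $\|\xb_{n+1}-\xb_i\|^2$ inside the ReLU and negate/scale via the value matrix, whereas the paper computes $-\|\xb_{n+1}-\xb_i\|^2/h^2 + M$ inside the ReLU and removes $M$ afterward with a decrementing FFN -- both are valid, so this is a cosmetic difference rather than a different argument.
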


\begin{remark}[Universality]
In Lemma \ref{ThmApprox1}, the network architecture and weight parameters of $\rmT^*_h$ are universal for different functions $f$ and points $\{\xb_i\}_{i=1}^{n+1}$. The weight parameters only depend on $D,n,b,R,h$. This construction indicates that  transformer can universally implement the kernel regression algorithm with zero approximation error. 
\end{remark}

{\bf Validating the Correlation between Attention Scores and Kernel Function.~} 
To validate that transformer does indeed perform kernel regression implicitly, we conduct simulated experiment to compare the attention scores in the last layer of the trained transformer and  the Gaussian kernel %$K_1(\xb)=e^{-\|\xb\|^2}$
$e^{-\|\xb_{n+1}-\xb_i\|^2}$ to see if there is a strong correlation between the two. 

\begin{figure}[t]
       \centering
       \begin{tabular}{ccc}       \hspace{-0.3cm}\includegraphics[width=0.34\linewidth]{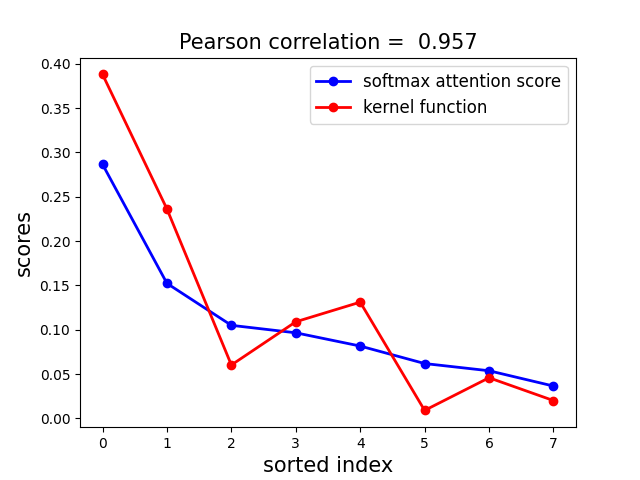} &      \hspace{-0.3cm}\includegraphics[width=0.34\linewidth]{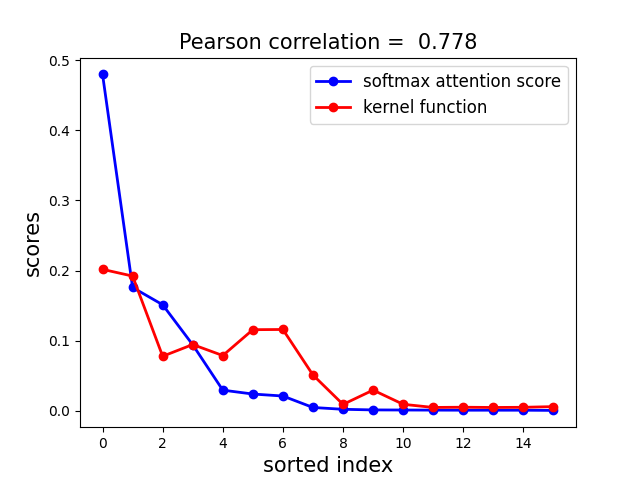} & \hspace{-0.3cm}\includegraphics[width=0.34\linewidth]{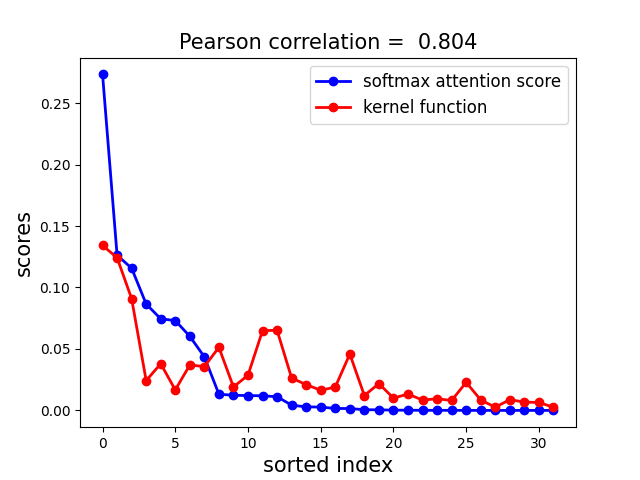}     \\
       \hspace{-0.3cm}\includegraphics[width=0.34\linewidth]{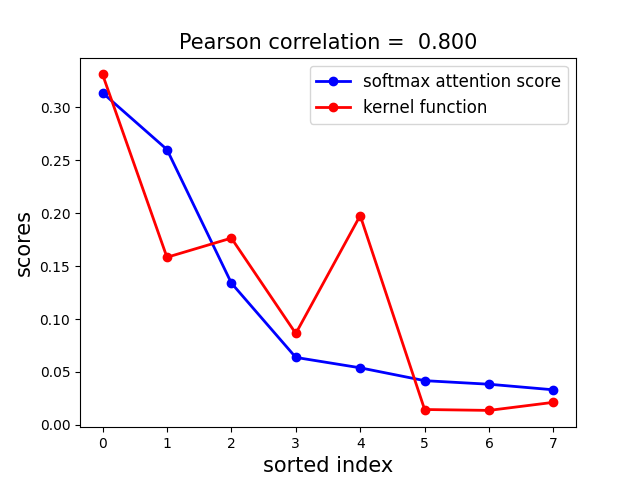} &      \hspace{-0.3cm}\includegraphics[width=0.34\linewidth]{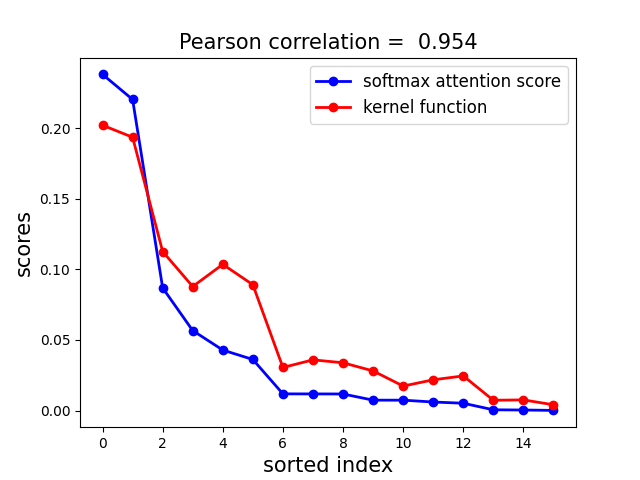}  & \hspace{-0.3cm}\includegraphics[width=0.34\linewidth]{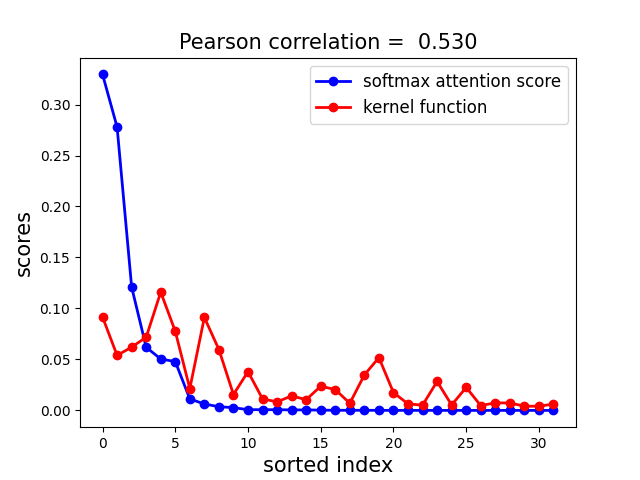}   
       \end{tabular}
       \vspace{-3mm}\caption{ Examples of  attention scores and Gaussian kernel with in-context length $n=8$ (first column), $n=16$ (second column), $n=32$ (third column) respectively. The top and bottom rows are the plots at two different samples. This figure shows  a strong correlation between attention scores and Gaussian kernel.}
       \label{fig:AttenScore}
   \end{figure}

 \begin{figure}[t]
       \centering
       \begin{tabular}{ccc}       \hspace{-0.3cm}\includegraphics[width=0.34\linewidth]{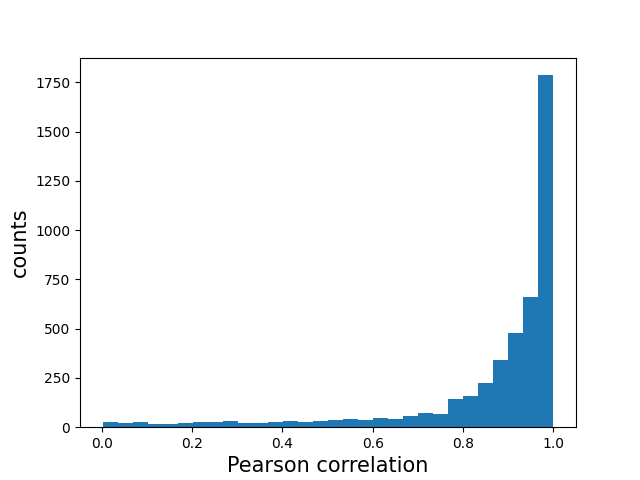} &     
       \hspace{-0.3cm}\includegraphics[width=0.34\linewidth]{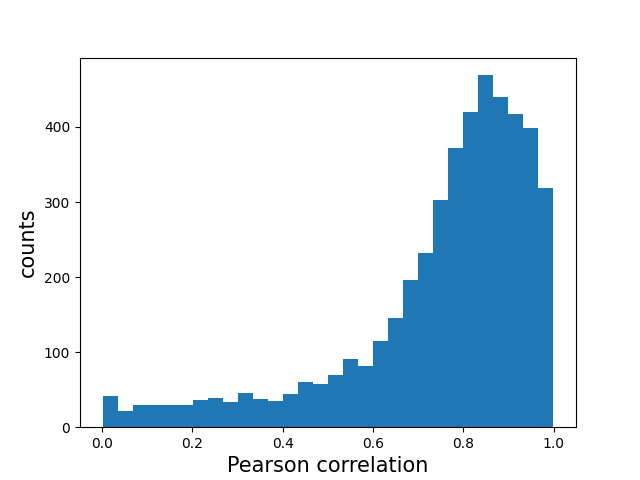} & 
       \hspace{-0.3cm}\includegraphics[width=0.34\linewidth]{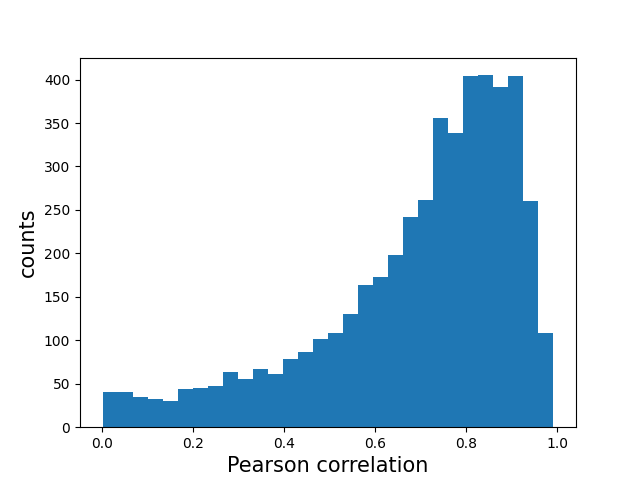}     \\
       $n=4$ & $n=8$ & $n=16$
       \\
       \end{tabular}
       \vspace{-3mm}\caption{ Histograms of the Pearson correlation for $n=4,8,16$ respectively.  The ones with negative correlation are not included in this plot, while they only account for a small amount. 
       The total counts for positive correlation are  $4588,4598,4771$ out of a total of $5000$ samples in each case respectively.}
       \label{fig:hist}
   \end{figure}
   
In this simulation, we fix $\cM=S^2$ (the 2-dimensional sphere), and we consider the target function $f:S^2\to\mathbb{R}$ to be the linear combination of the real part of the first 10 spherical harmonics on the two-dimensional sphere $S^2$. More precisely, let $s_1(\theta,\phi),\cdots,s_{10}(\theta,\phi)$ be the real part of the first $10$ spherical harmonics on $S^2$. For each task, we uniformly random sample the coefficients $w_k^{\gamma}\in [0,1]$, and $\theta_i^{\gamma}\in [0,\pi]$, $\phi_i^{\gamma}\in [0,2\pi]$, and generate 
$y_{i}^{\gamma} = \sum_{k=1}^{10}w_{k}^{\gamma} s_k(\theta_i^{\gamma},\phi_i^{\gamma}),$
where $i=1,\cdots,n$ (in-context length) and $\gamma=1,\cdots,\Gamma$ (number of training tasks).
Let $x^{\gamma}_{1,i}=\sin(\theta_i^{\gamma})\cos(\phi_i^{\gamma})$, $x^{\gamma}_{2,i}=\sin(\theta_i^{\gamma})\sin(\phi_i^{\gamma})$, $x^{\gamma}_{3,i}=\cos(\theta_i^{\gamma})$.
For each task, the training sample writes as the embedding matrix $H$ shown in \eqref{eq:Hexperiments} in the Appendix \ref{app:experiments}. We fix the number of training and testing tasks $\Gamma = 50000$ and vary the in-context length $n=4,8,16,32$.

% \begin{wrapfigure}{R}{0.43\textwidth}
%     \centering

%     \begin{tabular}{c}
%     \includegraphics[width=0.43\textwidth]{figures/plot_scores_sentences2.png}  
   
%      \end{tabular}
%      \vspace{-3mm}
%     \caption{Softmax attention scores for real language data. Compared with the reference kernel function $k(x)=\frac{e^{-x}}{2}$.}
% 	\label{fig:AttenScoreReal}
% \end{wrapfigure} 

\begin{wrapfigure}{R}{0.43\textwidth}
    \centering
    \vspace{-0.5cm}
    \includegraphics[width=0.43\textwidth]{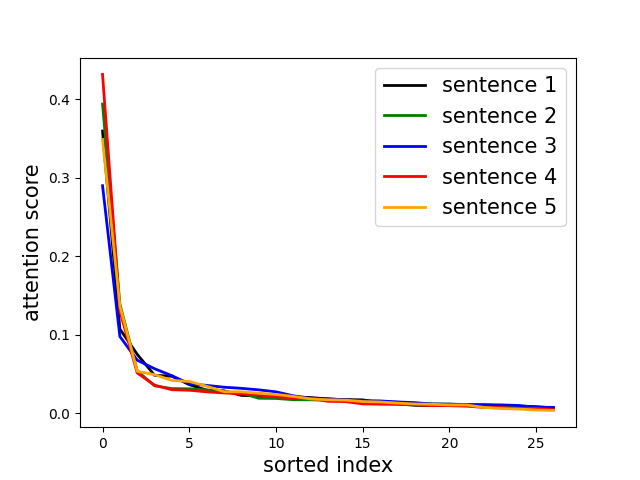}       \vspace{-0.6cm}
    \caption{Softmax attention scores for real language data.}
	\label{fig:AttenScoreReal}
\end{wrapfigure}

  \begin{table}[t]
\centering
\caption{Average Pearson correlation coefficients ($\pm$ standard deviation) and the corresponding $p$-values ($\pm$ standard deviation)}
\label{tab:pearsonnonlinear}
\vspace{2mm}
	\begin{tabular}{ccc}
	\toprule
           In-context length $n$ & Pearson correlation coefficient & $p$-value  \\
    \midrule
    4  & 0.86 $\pm$ 0.21  & 0.14 $\pm$ 0.21 \\
    %(4588/5000) \\
	8  & 0.75 $\pm$ 0.22 & 0.09 $\pm$ 0.19 \\ %(4598/5000) \\
16 & 0.69 $\pm$ 0.22 & 0.06 $\pm$ 0.17  \\%(4771/5000)  \\
32 & 0.67 $\pm$ 0.19 & 0.03 $\pm$ 0.12 \\%(4850/5000)  \\
	\bottomrule
	\end{tabular}
\end{table} 

Figure \ref{fig:AttenScore}
 plots the attention scores (after sorting from the highest to the lowest value) in the last layer of the trained transformer and compares it against the Gaussian kernel (sorted according to the corresponding attention scores), which demonstrates a strong correlation between the two quantities. The distribution of the Pearson correlation values are plotted in Figure \ref{fig:hist}, we can see that most correlations are concentrated around $0.8$, showing that the attention score and Gaussian kernel are highly correlated with each other. More exemplar plots of attention scores and kernel function scores are provided in Figure \ref{fig:AttenScoreAppendix_nonlinear} in Appendix \ref{app:experiments}. The average Pearson correlation coefficients between the two scores and the corresponding $p$-values are also reported in Table \ref{tab:pearsonnonlinear}. The results are averaged over $5000$ independent random testing samples. More details of the experimental setup are provided in Appendix \ref{app:experiments}.

To further test how the curve of attention scores look like for real language data, we input five user generated sentences (with length about 20 - 30) into the pretrained GPT2 \citep{radford2019language} and then plot the attention score for one of the heads in the model's last layer after sorting the score of each word from highest to the lowest value.  The curves in Figure \ref{fig:AttenScoreReal} shows that the attention scores for the real language data do exhibit some kernel shape.

%   \begin{table}[t]
% \centering
% \caption{Average Pearson correlation ($\pm$ standard deviation) and the corresponding $p$-values ($\pm$ standard deviation)}
% \label{tab:pearson_isotropic}
% \vspace{2mm}
% 	\begin{tabular}{ccccc}
% 	\toprule
%            \multirow{2}{*}{In-context length} & \multicolumn{4}{c}{Nonlinear} \\
%     \cmidrule{2-5}
%     & \multicolumn{2}{c}{Pearson correlation} & \multicolumn{2}{c}{$p$-value} \\
%     \cmidrule{1-5}
%     4  & 0.74 $\pm$ 0.24 & &  0.23 $\pm$ 0.24 & \\
% 	8  & 0.78 $\pm$ 0.20 & &  0.08 $\pm$ 0.18 & \\
% 16 & 0.77 $\pm$ 0.20 & & 0.05 $\pm$ 0.12  &  \\
% 32 & 0.72 $\pm$ 0.19 & &  0.03 $\pm$ 0.13  &  \\
% 	\bottomrule
% 	\end{tabular}
% \end{table} 

%   \begin{table}[t]
% \centering
% \caption{Average Pearson correlation ($\pm$ standard deviation) and the corresponding $p$-values ($\pm$ standard deviation) for the linear case and nonlinear case (with non-isotropic kernel)}
% \label{tab:pearson_nonisotropic}
% \vspace{2mm}
% 	\begin{tabular}{ccccc}
% 	\toprule
%            \multirow{2}{*}{In-context length} & \multicolumn{4}{c}{Linear / Nonlinear} \\
%     \cmidrule{2-5}
%     & \multicolumn{2}{c}{Pearson correlation} & \multicolumn{2}{c}{$p$-value} \\
%     \cmidrule{1-5}
%     4  & 0.54 $\pm$ 0.30 / 0.63 $\pm$ 0.27 & & 0.46 $\pm$ 0.30 / 0.37 $\pm$ 0.27 & \\
% 	8  & 0.40 $\pm$ 0.24 / 0.47 $\pm$ 0.21 & & 0.39 $\pm$ 0.29 / 0.31 $\pm$ 0.25 & \\
% 16 & 0.39 $\pm$ 0.24 / 0.46 $\pm$ 0.23 & & 0.28 $\pm$ 0.29 / 0.18 $\pm$ 0.24  &  \\
% 32 & 0.35 $\pm$ 0.22 / 0.35 $\pm$ 0.16 & & 0.21 $\pm$ 0.27 / 0.15 $\pm$ 0.21  &  \\
% 	\bottomrule
% 	\end{tabular}
% \end{table} 

\section{Transformer-Based ICL Generalization Error Bound} \label{sec:errorbound}
Based on the connection between transformer and kernel methods, we derive a generalization error bounds for transformer-based ICL involving structured data. By imposing a geometric prior, we assume that $\xb$ is sampled on a low-dimensional manifold $\cM$, and $f$ is a function on the manifold $\cM$. This assumption leverages low-dimensional geometric structures in data which have been empirically observed in image \citep{roweis2000nonlinear,tenenbaum2000global,Pope2021TheID} and language datasets \citep{KaplanIDScalingLaws,Havrilla24}.

%\rj{we may can move these two assumptions in the first paragraph of sec 3. Not need to repeat them.}
\begin{assumption} \label{assumpx} 
Let $\cM$ be a compact $d$-dimensional Riemannian manifold isometrically embedded in $\RR^D$, $\cM \subset [-b,b]^D$ for some $b>0$, and $\cM$ has a positive reach $\tau_{\cM}>0$ (reach is defined in Appendix \ref{appsub:def}). Suppose $\rho_{\xb}$ is the uniform distribution on $\cM$.
    %Each coordinate of $\xb$ is uniformly bounded by the interval $[-b,b]$ for some $b>1$.
    
\end{assumption}

\begin{assumption}
\label{assumpf} Let  $\alpha\in (0,1]$, $R,L>0$, and $\rho_f$ be a probability distribution in the function space 
\[\mathcal{F}:=\{f: \mathcal{M}\to\mathbb{R}:\ f \text{ is } \alpha\text{-H\"older  with H\"older constant no more than } L, \text{ and } \|f\|_{L^{\infty}(\mathcal{M})}\le R\}.\] 
%In addition, we assume $\|f\|_{L^{\infty}(\mathcal{M})}\le R$ for some $R>0$.
\end{assumption}

\commentout{
\begin{assumption} \label{assumprho}
    The distribution $\rho_{\xb}$ is bounded below by some positive constant $c_0$: $\rho_{\xb}\geq c_0>0$.
\end{assumption}
}

%\zhai{Our main result in Theorem \ref{Thm} establishes that the mean squared generalization error depends on the intrinsic dimension $d$ ...}

Our main theorem about the generalization error of transformer-based ICL is given below.
\begin{theorem}
\label{Thm}
    Suppose $\cM$, $\rho_{\xb}$ and $f$, $\rho_f$ satisfy Assumptions~\ref{assumpx} and \ref{assumpf} respectively. If we choose the transformer network class $\mathcal{T}(L_{\rm T}, m_{\rm T},d_{embed},\ell,L_{\text{FFN}},w_{\text{FFN}},R,\kappa)$ with parameters 
\begin{align*}
&L_{\rm T}=5, \text{ } m_{\rm T}=O(Dn), \text{ } d_{embed}=D+5, \text{ } \ell=2n+1,\\
&L_{\rm FFN}=O(1), \text{ } w_{\rm FFN}=D+5, \text{ } \kappa=O\left(D^8n^{\frac{4\alpha+2d+8}{2\alpha+d}}b^8R^2\right),
\end{align*}
where $O(\cdot)$ hides the dependency on the absolute constants.
% $=O\left(\frac{D^8n^2b^8R^2}{h^8}\right)$
% then the minimizer $\hat\rmT$ defined in \eqref{eq:RnT} satisfies the squared generalization error bound:
% \begin{equation}
% \cR_n(\hat\rmT)\leq C_1\left(\frac{1}{\sqrt{\Gamma}}\right)+C_2\left(\frac{\log^d(1/h)}{nh^{d}}+h^{2\alpha}\log^2(1/h)\right),
% \end{equation}
% where the constant  $C_1$ depends on $n,D,\log(n),\log(D),\log(\Gamma),\log(1/h)$, and constant $C_2$ depends on $d,L,R,\tau_{\cM}$.
Then the minimizer $\hat\rmT$ defined in \eqref{eq:RnT} satisfies the squared generalization error bound:
 %\begin{equation}
 %\label{eq:mainerrbound}\cR_n(\hat\rmT)\leq C_1\left(\frac{nD^3\sqrt{\log(nD\Gamma)}}{\sqrt{\Gamma}}\right)+C_2\left(n^{-\frac{2\alpha}{2\alpha+d}}\log^d n\right), 
 %\end{equation}
 \begin{equation}
 \label{eq:mainerrbound}
     \cR_n(\hat\rmT)\leq C_1\left({nD^3\Gamma^{-\frac 1 2}\sqrt{\log(nD\Gamma)}}\right)+C_2\left(n^{-\frac{2\alpha}{2\alpha+d}}\log^{1+\frac{3d}{2}} n\right),
 \end{equation}
% where the constant $C_1$ depends on $d,\alpha$, and constant $C_2$ depends on $d,\alpha,L,R,\tau_{\cM}$. 
where the constant $C_1$ depends on absolute constants, and the constant $C_2$ depends on $d,\alpha,L,R,\tau_{\cM}$.
\end{theorem}

{This paper focuses on the scaling between the generalization error and $\Gamma, n$, and the constants $C_1,C_2$ may not be optimal. Note that in the worst case, $C_2$ can depend on $d^{{d}/{2}}$.} 
The proof roadmap of Theorem \ref{Thm} is presented in Appendix \ref{app:roadmap} and more details of the proof are provided in Appendix \ref{app:deferredproofs}. 
Lemma \ref{ThmApprox1} is utilized as a key step to prove Theorem \ref{Thm}. Theorem \ref{Thm} also offers insights into several key aspects of transformer-based ICL, which is discussed in the introduction.

{\bf Validating the Generalization Error Bound.~} 
We conduct simulated experiments to validate our generalization error bound \eqref{eq:mainerrbound} in Theorem \ref{Thm} while varying $n$ (prompt length) and $\Gamma$ (number of training tasks). The data generating procedure is the same as the experiments in Section \ref{sec:main}.  Figure \ref{fig:MSEvsGamma} plots the average Mean Squared Error (MSE) over $30$ repetitions on the testing data against the number of tasks $\Gamma$ and the prompt length $n$. More details of the experiments are given in Appendix \ref{app:experiments}.

The top row of Figure \ref{fig:MSEvsGamma} shows the testing MSE with respect to $\Gamma$ in $\log$-$\log$ scale when the prompt length is fixed to be $n = 16, 64, 256$ respectively. In $\log$-$\log$ scale, the slope initially coincides with the theoretical slope of $-0.5$ in the first term of \eqref{eq:mainerrbound}, and then slightly shifts above it. This is consistent with our error bound in \eqref{eq:mainerrbound}, as when $\Gamma$ increases and $n$ is fixed, the second term starts to dominate the total error.
%we see the logarithm of testing MSE starts coincide and then slightly shifts above the theoretical slope $-0.5$ in terms of $\Gamma$ if ignoring the log terms inside the first term in \eqref{eq:mainerrbound}. It makes sense as if $\Gamma$ increases while $n$ is fixed, the second term starts to dominate the total error. 
In the bottom row of Figure \ref{fig:MSEvsGamma}, we plot  the logarithm of testing MSE in terms of the prompt length $n$ when the number of tasks is fixed to be $\Gamma = 400, 1600, 6400$ respectively. In our error bound \eqref{eq:mainerrbound}, both terms depend on $n$ while the first term increases and the second term decreases as $n$ increases. The testing MSE decays as $n$ increases, and the rate of decay depends on the balance of the two terms in \eqref{eq:mainerrbound}. The larger $\Gamma$ is, the more dominant the second term in \eqref{eq:mainerrbound} is, and therefore the rate of convergence of the testing MSE is faster as $n$ increases. By comparing the three plots in the bottom row of Figure \ref{fig:MSEvsGamma} with $\Gamma = 400, 1600, 6400$ respectively, we observe a faster rate of convergence with respect to $n$ when $\Gamma$ is larger, which is consistent with our theory.  

% slower than linear functions as expected, since both error terms in \eqref{eq:mainerrbound} depend on $n$.

\begin{figure}[h]
       \centering
       \begin{tabular}{ccc}       \hspace{-0.3cm} \includegraphics[width=0.34\linewidth]{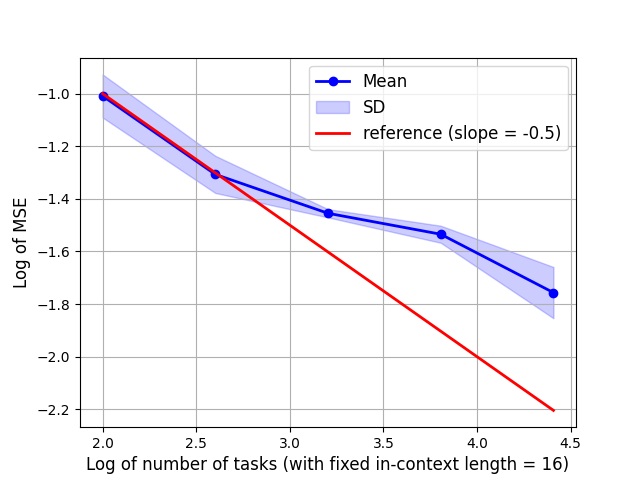} &
       \hspace{-0.3cm}\includegraphics[width=0.34\linewidth]{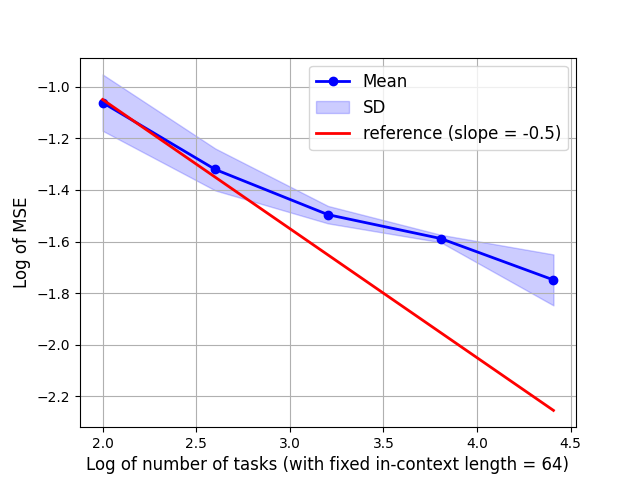}   & \hspace{-0.3cm} \includegraphics[width=0.34\linewidth]{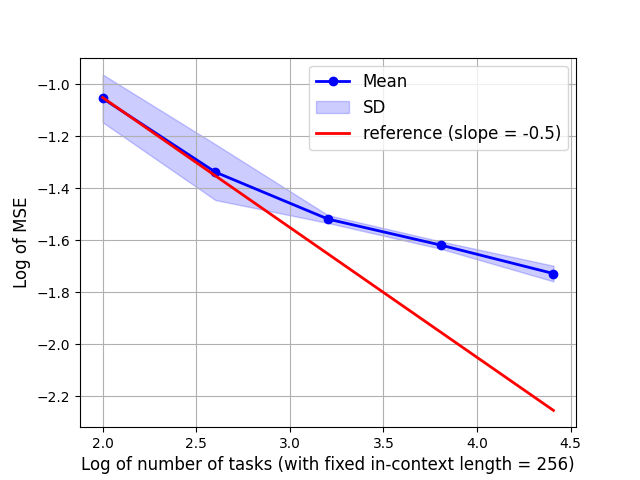} \\
       \hspace{-0.3cm}\includegraphics[width=0.33\linewidth]{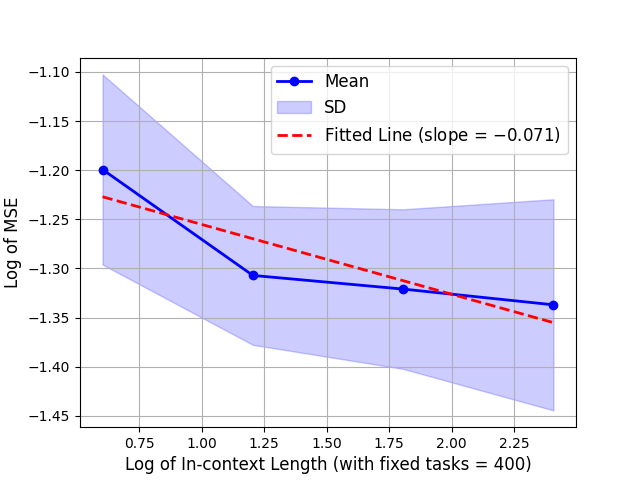} &
       \hspace{-0.3cm}\includegraphics[width=0.33\linewidth]{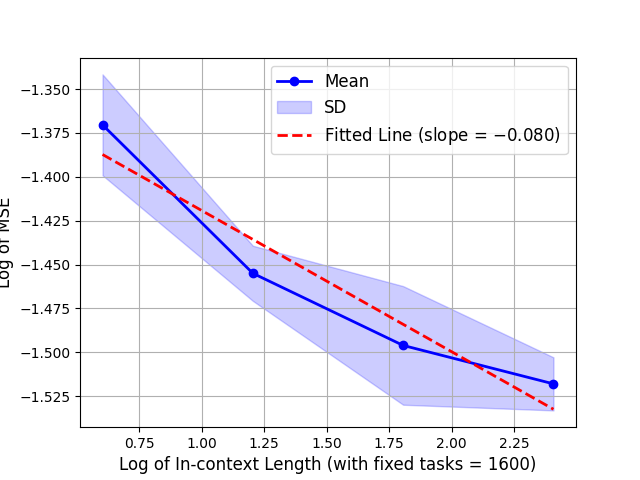}   & 
       \hspace{-0.3cm}
       \includegraphics[width=0.33\linewidth]{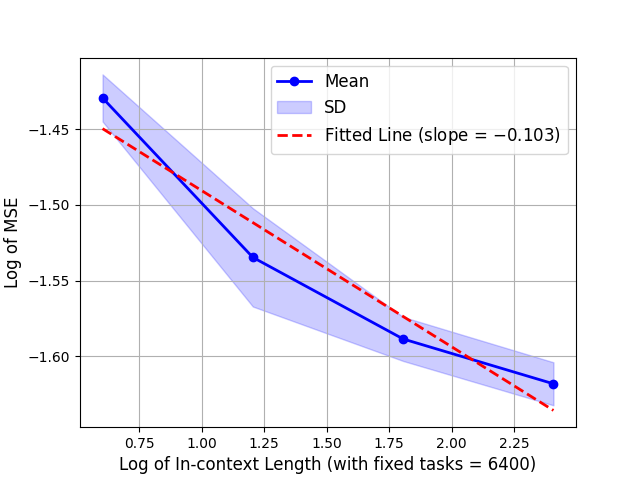}
       \end{tabular}
       \vspace{-2mm}
       \caption{Top row: MSE v.s. number of tasks $\Gamma$ (with fixed prompt length $n = 16, 64, 256$, respectively). Bottom row: MSE v.s. prompt length $n$ (with fixed tasks $\Gamma = 400, 1600, 6400$ respectively). All plots are in log10-log10 scale.}
       \label{fig:MSEvsGamma}
   \end{figure}

\section{Roadmap for the Proof of Theorem \ref{Thm}}
\label{sec:roadmap}
\vspace{-1.5mm}
In this section, we present a roadmap for the proof of our main result in Theorem \ref{Thm}. We defer a more detailed discussion of the roadmap to Appendix \ref{app:roadmap} and all the proof details to Appendix \ref{app:deferredproofs}.
 
{\bf Bias-Variance Error Decomposition.} We first decompose the squared generalization error at the test sample $\fs$ in $\eqref{eq:RhatTs}$ as follows:
\begin{align*}
\cR_n(\hat\rmT(\fs)) =\text{ } & 
%% TERM I
{\cR_n(\hat\rmT(\fs)) - \cR_{n,\Gamma}(\hat\rmT) }
%% TERM III
 +{\cR_{n,\Gamma}(\hat\rmT) - \cR_{n,\Gamma}(\rmT^*)}
%% TERM II
+ {\cR_{n,\Gamma}(\rmT^*) - \cR_n(\rmT^*(\fs))} 
%% TERM IV
+
{\cR_n(\rmT^*(\fs))},
\end{align*}
where $\rmT^*$ is a transformer network which approximates the in-context  kernel estimator ${\cK}_h$ in \eqref{Kh}. One term satisfies ${\cR_{n,\Gamma}(\hat\rmT) - \cR_{n,\Gamma}(\rmT^*)}\le 0$ since $\hat\rmT$ is the minimizer of $\cR_{n,\Gamma}$ given in \eqref{eq:RnT}.
%\begin{equation}
% {\cK}_h(\fs) = \frac{\sum_{i=1}^n K_h(\xb_{n+1}-\xb_i) y_i}{\sum_{i=1}^n K_h(\xb_{n+1}-\xb_i) }.
% \label{eq:kernelreg}
%\end{equation}
After taking expectations, we can decompose the mean squared generalization error \eqref{eq:mse1} as follows:
\begin{align}
&\cR_n(\hat\rmT) =  \EE_{\fS} \EE_{\fs} \left[ \cR_n(\hat\rmT(\fs))\right]
\label{eq:errde1}
\\
\le & 
%% TERM I
\underbrace{\EE_{\fS} \left(\EE_{\fs}\left[\cR_n(\hat\rmT(\fs))\right] - \cR_{n,\Gamma}(\hat\rmT) \right)}_{{\rm I}}
%% TERM II
+ \underbrace{\EE_{\fS} \Big(\cR_{n,\Gamma}(\rmT^*) -  \EE_{\fs} \left[ \cR_n(\rmT^*(\fs))\right]\Big)}_{{\rm II}} 
%% TERM III
 +
\underbrace{ \EE_{\fs}\left[\cR_n(\rmT^*(\fs))\right]}_{{\rm III}}.
\nonumber 
\end{align}

In this error decomposition,  error ${\rm III}$ denotes the approximation error which we will analyze in Section \ref{sec:approx}. The errors in ${\rm I}$ and ${\rm II}$ denote the statistical error , which we will analyze in Section \ref{sec:stat}.

In \ref{sec:approx} and \ref{sec:stat}, we obtain 
\begin{equation*}
    {\rm III}\leq O\left(\frac{\left[\log\left(h^{-1}\right)\right]^{1+\frac{3d}{2}}}{nh^{d}}+h^{2\alpha}[\log(h^{-1})]^2\right) \quad \text{and}\quad {\rm I}+{\rm II} \leq O\left(\frac{nD^3\sqrt{\log(nD\Gamma/h)}}{\sqrt{\Gamma}}+h^2\right)
\end{equation*}
respectively. Putting them together, we get 
\begin{align*}
\cR_n(\hat\rmT)&\leq {\rm I}+{\rm II}+{\rm III} 
 \leq C_1\left(\frac{nD^3\sqrt{\log(nD\Gamma/h)}}{\sqrt{\Gamma}}\right)+C_2\left(\frac{\left[\log\left(h^{-1}\right)\right]^{1+\frac{3d}{2}}}{nh^{d}}+h^{2\alpha}[\log(h^{-1})]^2\right).
\end{align*}

Finally, choosing $h=n^{-\frac{1}{2\alpha+d}}$ gives rise to \eqref{eq:mainerrbound} in Theorem \ref{Thm}.

\section{Related Works}
\label{sec:related} 
We next discuss some connection and comparison of our result with existing works. This paper highlights bridging the attention mechanism and classical kernel methods. It provides a new interpretation of transformers in ICL, and new tools to address nonlinear models in transformer-based ICL, for example, the recent work by \citet{HsuShenLiaoLai2026}, which allows us to move beyond linear models studied in \citet{bai2023transformers,zhang2024trained,von2023transformers,akyurek2022learning,cole2024provable}. This novel tool also allows us to address multiple tasks in ICL, in contrast to single task learning by transformers studied in
\cite{Yun19,Takakura23,TransformerClassifier,Havrilla24,shen2025transformers}. 

The most closely related work to this paper is \citet{kim2024transformers}, which studied in-context regression of Besov functions in $\RR^D$. \citet{kim2024transformers} derived approximation and generalization error bounds for
a transformer composed of a deep feedforward network and one linear attention layer. There are two main differences between this paper and \citet{kim2024transformers}: 1) Our transformer network has $5$ layers of multi-head attention, and each multi-head attention can be wide, i.e. with $nD$ attention heads.
The feedforward layers in each attention is of a constant order. Such a wide transformer architecture shares some similarity to those used in large language models (LLMs). For example, GPT-2 Small only has $12$ layers with $117$ million parameters \citep{radford2019language}. Our approximation theory is developed by fully leveraging the attention mechanism. In contrast, \citet{kim2024transformers} utilized one linear attention layer and a deep feedforward network for approximation. %where the approximation power  depends on the  feedforward components. 
2) By incorporating low-dimensional geometric structures of data, we prove error bounds {which scale exponentially with respect to the prompt length with the exponent depending on the intrinsic dimension $d$}, while the error bound in \citet{kim2024transformers} scales exponentially depending on the ambient dimension $D$.

Our work is also connected with \citet{li2023transformers}, which derived generalization errors for transformers as in-context algorithm learners. While the framework in \citet{li2023transformers} is general, it does not address some key components in this paper, such as our novel approximation theory bridging the attention mechanism to kernel methods, and our covering number calculation. %\textcolor{blue}{There are also other lines of research such as \citet{lu2025transformer} which leverages the cross-task prior information for linear inverse problem and \cite{cheng2024transformers} which establishes that the transformer can perform functional gradient descent via different kernels for different target functions.}

To understand transformers, the attention mechanism has also been connected to traditional mathematical
models, such as interacting particle systems \citep{geshkovski2025mathematical} and integral-differential
equations \citep{tai2025mathematical}. The connection between the attention mechanism and kernel methods has been explored in  \citet{tsai2019transformer,song2021implicit,yu2024nonlocal,lu2025transformer,cheng2024transformers,han2025understanding}. In particular, the work by \citet{han2025understanding} takes the kernel perspective to understand ICL and empirically demonstrates that the attention and hidden features in LLMs match the behaviors of a kernel regression. %understanding the relation between attention and kernel regression from the Bayesian inference perspectively. 
While our work and these prior studies all draw on the connection between the attention mechanism and kernel methods, our theoretical justification is novel. In particular, the construction of transformers to implement the kernel method in Lemma \ref{ThmApprox1} and the generalization error bound in Theorem \ref{Thm} have not been addressed in literature. Our paper  provides a theoretical framework to understand transformer-based ICL with geometric structures.

Besides transformers, in-context operator learning has also been studied using DeepONets based on feedforward neural networks \citep{chiu2025context}. In contrast, our paper focuses on the role of the attention mechanism in ICL.

\vspace{-1mm}

\section{Conclusion and Discussion}
\label{sec:conclude}
\vspace{-2mm}
{\bf Conclusion.~} This work provides a theoretical foundation for understanding in-context learning (ICL) with transformers in the setting of manifold-structured regression tasks. By establishing a novel connection between the attention mechanism and classical kernel regression, we interpret transformers as implicitly learning kernel-based algorithms for function regression. Our findings offer new theoretical insights into the algorithmic nature of transformers in ICL, establish a rigorous approximation and generalization theory for manifold regression, and provide tools for analyzing nonlinear models under geometric structure. 

Our analysis derives generalization error bounds for $\alpha$-Hölder functions on compact Riemannian manifolds, revealing how the performance of transformers in ICL depends on the prompt length $n$, the number of training tasks $\Gamma$, and the intrinsic geometry of the data. Notably, our results show that transformers can achieve the minimax optimal regression rate up to logarithmic factors when $\Gamma$ is sufficiently large. Furthermore, the derived bounds depend {exponentially with respect to the prompt length with the exponent depending on the intrinsic dimension $d$} of the manifold, rather than the ambient dimension $D$, highlighting the critical role of low-dimensional geometric priors.

{\bf Discussion.~}
Our theoretical analysis focuses on $\alpha$-Hölder regression with fixed-length prompts and a large number of training tasks, under idealized assumptions such as exact kernel implementation via attention. Extending the framework to broader function classes, variable prompt lengths, noisy data or limited data remains an open challenge. Despite these limitations, our work reveals how geometric structure can enhance generalization in ICL and draws a principled connection between attention mechanisms and kernel methods. 
%These insights may inform the design of more efficient and interpretable models in domains where geometry plays a central role, such as scientific computing robotics, and many others where geometric structure is prevalent.

\paragraph{Reproducibility statement}
To support reproducibility of our work, we provide comprehensive details across the main paper and supplementary material. All theoretical results are accompanied by clear assumptions and complete proofs in the appendix. For experimental results, we describe the datasets used, preprocessing steps, and hyperparameter settings in both the main text and the appendix. The implementation code used to generate the experimental results in this paper is available in the supplementary material.

\subsubsection*{Acknowledgments}
Zhaiming Shen and Wenjing Liao are partially supported by National Science Foundation under the NSF DMS 2145167 and the U.S. Department of Energy under the DOE SC0024348. Alexander Hsu
and Rongjie Lai’s research are supported in part by NSF DMS-2401297.

\bibliography{iclr2026_conference}
\bibliographystyle{iclr2026_conference}

\newpage
\appendix

\section{The Use of Large Language Models (LLMs)}

We used ChatGPT for minor language editing, such as grammar correction and improving sentence flow. The scientific content and all writing were created by the authors.

\section{More Definitions}

\subsection{Manifold, Geodesic Distance, Reach of the Manifold and Covering Number}
\label{appsub:def}

\begin{definition}[Manifold]
An $d$-dimensional \emph{manifold} $\mathcal{M}$ is a topological space where each point has a neighborhood that is homeomorphic to an open subset of $\mathbb{R}^d$. Further, distinct points in $\mathcal{M}$ can be separated by disjoint neighborhoods, and $\mathcal{M}$ has a countable basis for its topology.
\end{definition}

With the induced metric on $\cM$, 
the \emph{geodesic distance} on the manifold between $\xb,\xb'\in\mathcal{M}$ is defined as 
    \begin{align*}
        d_{\mathcal{M}}(\xb,\xb'):=\inf\{|\gamma|: \gamma\in C^1([t,t']), \gamma:[t,t']\to\mathcal{M}, \gamma(t)=\xb, \gamma(t')=\xb'\},
    \end{align*}
where the length is defined by $|\gamma|:=\int_t^{t'}\|\gamma'(s)\|_2 ds$.
The existence of a length-minimizing geodesic $\gamma:[t,t']\to\mathcal{M}$ between any two points $\xb=\gamma(t),\xb'=\gamma(t')$ is guaranteed by the Hopf–Rinow theorem \citep{hopf1931}.

\begin{definition} [Medial Axis]
    Let $\mathcal{M}\subseteq\mathbb{R}^D$ be a connected and compact $d$-dimensional submanifold. Its \emph{medial axis} is defined as 
\begin{align*}
\begin{split}
    \text{\rm Med}
    (\mathcal{M}):=\{\xb\in\mathbb{R}^D \text{ }| &\text{ }\exists \pb\neq \qb\in\mathcal{M},
    \|\pb-\xb\|_2 
    =\|\qb-\xb\|_2=\inf_{\zb\in\mathcal{M}}\|\zb-\xb\|_2\},
\end{split}
\end{align*}
which contains all points $\xb\in\mathbb{R}^D$ with set-valued orthogonal projection $\pi_{\mathcal{M}}(\xb)=\argmin_{\zb\in \mathcal{M}}\|\xb-\zb\|_2$.
\end{definition}

\begin{definition} [Local Reach and Reach of a Manifold]
\label{def:reach}
    The \emph{local reach} for $\vb\in\mathcal{M}$ is defined as
$  \tau_{\mathcal{M}}(\vb):=\inf_{\zb\in {\rm Med}(\mathcal{M})}\|\vb-\zb\|_2,
$
which describes the minimum distance needed to travel from $\vb$ to the closure of medial axis. The smallest local reach $\tau_{\mathcal{M}}:=\inf_{\vb\in\mathcal{M}}\tau_{\mathcal{M}}(\vb)$ is called \emph{reach} of $\mathcal{M}$.
\end{definition}

\begin{definition}[Covering Number] \label{coveringDef}
    Let \( (\mathcal{H}, \rho) \) be a metric space, where \( \mathcal{H} \) is the set of objects and \( \rho \) is a metric. For a given \( \epsilon > 0 \), the \emph{covering number} \( \mathcal{N}(\epsilon, \mathcal{H}, \rho) \) is the smallest number of balls of radius \( \epsilon \) (with respect to \( \rho \)) needed to cover \( \mathcal{H} \). More precisely,
    \begin{align*}
        \mathcal{N}(\epsilon, \mathcal{H}, \rho) := \min \{ N \in \mathbb{N}\text{ }|\text{ } \exists \{h_1, \dots, h_N\} \subseteq \mathcal{H}, \forall h \in \mathcal{H} ,\ \exists h_i \text{ such that } \rho(h, h_i) \leq \epsilon \}.
    \end{align*}

\end{definition}

\subsection{Embedding, Positional Encoding, Feed-forward Network Class and Transformer Block Class}
\label{appsub:epe}

\begin{definition}[Embedding Layer]
    Given $\xb_i\in\mathbb{R}^D$ and $y_i\in\mathbb{R}$, the \emph{embedding layer} $E$ takes an input $\fs=\{\xb_1,y_1,\xb_2,y_2,\ldots,\xb_n,y_n;\xb_{n+1}\}$ and maps it to
    \begin{equation*} 
    {\rm E}\left(\fs \right) =
\begin{bmatrix}
    \xb_{1} & \cdots & \xb_{n} & \xb_{n+1} & \mathbf{0}  \\
    y_1 & \cdots & y_n & 0 & \mathbf{0} \\ 
    0 & \cdots & \cdots & \cdots & 0 \\
    0 & \cdots & \cdots & \cdots & 0 \\
    0 & \cdots & \cdots & \cdots & 0 \\
    1 & \cdots & \cdots & \cdots & 1
\end{bmatrix}\in\mathbb{R}^{(D+5)\times\ell}.
\end{equation*}
\end{definition}

\begin{definition}[Positional Encoding]
The \emph{positional encoding} takes an input $\fs$, maps it to $\mathcal{I}_j=(\cos(\frac{j\pi}{2\ell}),\sin(\frac{j\pi}{2\ell}))^\top$ and put those $\mathcal{I}_j$, $j=1,\cdots,\ell$, into the third and second last row in the embedding matrix, i.e.,
\begin{equation*} 
     {\rm PE}\left(\fs \right) =
\begin{bmatrix}
   0 & \cdots & \cdots & \cdots & 0  \\
    0 & \cdots & \cdots & \cdots & 0 \\ 
    0 & \cdots & \cdots & \cdots & 0 \\
    \mathcal{I}_1 & \cdots & \cdots & \cdots & \mathcal{I}_{\ell} \\
    0 & \cdots & \cdots & \cdots & 0
\end{bmatrix}
\in\mathbb{R}^{(D+5)\times \ell}.
\end{equation*}
\end{definition}

% \zhai{not exactly sure if these definitions agree with the definitions in the transformer literature}

With these definitions, we have
\begin{equation*} 
    {\rm PE}+{\rm E}\left(\fs \right) =
\begin{bmatrix}
    \xb_{1} & \cdots & \xb_{n} & \xb_{n+1} & \mathbf{0}  \\
    y_1 & \cdots & y_n & 0 & \mathbf{0} \\ 
    0 & \cdots & \cdots & \cdots & 0 \\
    \mathcal{I}_1 & \cdots & \cdots & \cdots & \mathcal{I}_{\ell} \\
    1 & \cdots & \cdots & \cdots & 1
\end{bmatrix}
\in\mathbb{R}^{(D+5)\times \ell},
\end{equation*}
as defined in \eqref{eq:Hmatrix}.

\begin{definition} [Feed-forward Network Class]
The feed-forward neural network (FFN) class with weights $\theta$ is
\begin{align*}
    \mathcal{FFN}( L_{\rm FFN},& w_{\rm FFN}) =\{{\rm FFN}(\theta;\cdot)\text{ }|\text{ }{\rm FFN}(\theta;\cdot) \text{ }\text{is a FNN} \text{ with at most }L_{\rm FFN} \text{ }\text{layers and width} \text{ } w_{\rm FFN}
    \}.
\end{align*}
\end{definition}
We use ReLU function $\sigma(x)=\max(x,0)$ as the activation function in the feed-forward network. Note that each feed-forward layer is applied tokenwise to an embedding matrix $H$.

\begin{definition}[Transformer Block Class]
The transformer block class with parameters $\theta$ is 
\begin{align*}
    \mathcal{B}(m, L_{\rm FFN}, w_{\rm FFN}) = \{{\rm B}(\theta;\cdot)\text{ }|&\text{ } {\rm B}(\theta;\cdot) \text{ }\text{a {\rm MHA} with }  m  \text{ attention heads, and a {\rm FFN} layer } 
    \\
    & \text{with depth } L_{\rm FFN} \text{ }\text{and width } w_{\rm FFN}\}.
\end{align*}
\end{definition}

\section{Roadmap for the Proof of Theorem \ref{Thm}}
\label{app:roadmap}

In this section, we present a detailed roadmap for the proof of our main result in Theorem \ref{Thm}. We defer all the proof details to Appendix \ref{app:deferredproofs}.
 
{\bf Bias-Variance Error Decomposition.} We first decompose the squared generalization error at the test sample $\fs$ in $\eqref{eq:RhatTs}$ as follows:
\begin{align*}
\cR_n(\hat\rmT(\fs)) =\text{ } & 
%% TERM I
{\cR_n(\hat\rmT(\fs)) - \cR_{n,\Gamma}(\hat\rmT) }
%% TERM III
 +{\cR_{n,\Gamma}(\hat\rmT) - \cR_{n,\Gamma}(\rmT^*)}
%% TERM II
+ {\cR_{n,\Gamma}(\rmT^*) - \cR_n(\rmT^*(\fs))} 
%% TERM IV
+
{\cR_n(\rmT^*(\fs))},
\end{align*}
where $\rmT^*$ is a transformer network which approximates the in-context  kernel estimator ${\cK}_h$ in \eqref{Kh}. One term satisfies ${\cR_{n,\Gamma}(\hat\rmT) - \cR_{n,\Gamma}(\rmT^*)}\le 0$ since $\hat\rmT$ is the minimizer of $\cR_{n,\Gamma}$ given in \eqref{eq:RnT}.
%\begin{equation}
% {\cK}_h(\fs) = \frac{\sum_{i=1}^n K_h(\xb_{n+1}-\xb_i) y_i}{\sum_{i=1}^n K_h(\xb_{n+1}-\xb_i) }.
% \label{eq:kernelreg}
%\end{equation}
After taking expectations, we can decompose the mean squared generalization error \eqref{eq:mse1} as follows:
\begin{align*}
&\cR_n(\hat\rmT) =  \EE_{\fS} \EE_{\fs} \left[ \cR_n(\hat\rmT(\fs))\right]
\label{eq:errde1}
\\
\le & 
%% TERM I
\underbrace{\EE_{\fS} \left(\EE_{\fs}\left[\cR_n(\hat\rmT(\fs))\right] - \cR_{n,\Gamma}(\hat\rmT) \right)}_{{\rm I}}
%% TERM II
+ \underbrace{\EE_{\fS} \Big(\cR_{n,\Gamma}(\rmT^*) -  \EE_{\fs} \left[ \cR_n(\rmT^*(\fs))\right]\Big)}_{{\rm II}} 
%% TERM III
 +
\underbrace{ \EE_{\fs}\left[\cR_n(\rmT^*(\fs))\right]}_{{\rm III}}.
\nonumber 
\end{align*}

In this error decomposition,  error ${\rm III}$ denotes the approximation error which we will analyze in Section \ref{sec:approx}. The errors in ${\rm I}$ and ${\rm II}$ denote the statistical error , which we will analyze in Section \ref{sec:stat}.

\subsection{Approximation Error: Transformers can  Implement Kernel Estimator}
\label{sec:approx}
A key innovation in our proof is establishing an approximation theory for transformers to implement the classical kernel estimator in \eqref{Kh}. Importantly, this implementation is universal for $f$ and the $\xb_i$'s so that the weight matrices in transformer are independent of $f$ and the $\xb_i$'s.

Our approximation error bound is given by Proposition \ref{ThmApprox} below. Since transformers can exactly implement the kernel estimator as shown in Lemma \ref{ThmApprox1}, the approximation error in Proposition \ref{ThmApprox} is the same as the mean squared error given by the kernel estimator.

\begin{proposition}
\label{ThmApprox}
    Suppose $\cM$, $\rho_{\xb}$ and $f$, $\rho_f$ satisfy Assumptions~\ref{assumpx} and \ref{assumpf} respectively. Let $\fs$ be a prompt in \eqref{eq:promp}, where $\{\xb_i\}_{i=1}^{n+1}$ are i.i.d. samples from $\rho_{\xb}$ and $f$ is sampled from $\rho_f$. There exists a transformer network ${\rm T}^*\in\mathcal{T}(L_{\rm T}, m_{\rm T},d_{embed},\ell,L_{\rm FFN},w_{\rm FFN},R,\kappa)$ with parameters 
\begin{align*}
&L_{\rm T}=5, \text{ } m_{\rm T}=O(Dn), \text{ } d_{embed}=D+5, \text{ } \ell=2n+1,\\
&L_{\rm FFN}=O(1), \text{ } w_{\rm FFN}=D+5, \text{ } \kappa=O\left({D^8n^2b^8R^2}/{h^8}\right)
\end{align*}
such that %for any test sample $\fs$ and any function $f$ satisfying Assumption~\ref{assumpf},
\begin{equation}
\EE_{\fs}\left[\cR_n(\rmT^*(\fs))\right]\leq C_3\left(\frac{\left[\log\left(h^{-1}\right)\right]^{1+\frac{3d}{2}}}{nh^{d}}+h^{2\alpha}[\log(h^{-1})]^2\right).
\label{eq:ThmApprox}
\end{equation}
The constant $C_3$ hides the constants depending on $d,L,R,\tau_{\cM}$. 
\end{proposition}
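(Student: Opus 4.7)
The plan is to use Lemma \ref{ThmApprox1} to eliminate the transformer entirely and then analyze the classical Nadaraya--Watson kernel estimator on the manifold. Specifically, the lemma gives a transformer $\rmT^*\in\cT$ with the stated architectural parameters such that $\rmT^*(\fs)=\cK_h(\fs)$ exactly for every prompt $\fs$ of the form \eqref{eq:promp}. Plugging this into the definition \eqref{eq:RhatTs} of $\cR_n$, the approximation error reduces to the purely statistical quantity
\begin{equation*}
\EE_{\fs}\bigl[(\cK_h(\fs)-f(\xb_{n+1}))^2\bigr],
\end{equation*}
taken over $\xb_1,\ldots,\xb_{n+1}$ i.i.d.\ uniform on $\cM$ and $f\sim\rho_f$. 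All remaining work is a bias--variance analysis of a Gaussian kernel smoother on a $d$-manifold; transformers no longer appear.

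\textbf{Bias--variance decomposition.} Conditioning on $\xb_{n+1}$, I would introduce the population quantities
\begin{equation*}
p_h(\xb):=\EE_{\xb'\sim\rho_{\xb}}[K_h(\xb-\xb')],\qquad m_h(\xb):=p_h(\xb)^{-1}\,\EE_{\xb'\sim\rho_{\xb}}[K_h(\xb-\xb')f(\xb')],
\end{equation*}
and split $\cK_h(\fs)-f(\xb_{n+1}) = (\cK_h(\fs)-m_h(\xb_{n+1})) + (m_h(\xb_{n+1})-f(\xb_{n+1}))$. The manifold geometry enters through two uniform facts, both consequences of Assumption \ref{assumpx}: (i) the volume comparison $\mathrm{vol}(B_r(\xb)\cap\cM)\asymp r^d$ for $r<\tau_{\cM}/2$, which yields $p_h(\xb)\asymp h^d$ and $\EE[K_h^2(\xb-\xb')]\lesssim h^d$ uniformly in $\xb\in\cM$; and (ii) the local Euclidean/geodesic comparison $d_{\cM}(\xb,\xb')\le C\|\xb-\xb'\|$ whenever $\|\xb-\xb'\|\le\tau_{\cM}/2$, which transfers the H\"older hypothesis on $f$ into Euclidean estimates compatible with the Gaussian kernel.

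\textbf{Bounding the two pieces.} For the bias $m_h(\xb_{n+1})-f(\xb_{n+1})$, I would truncate the kernel integral at Euclidean radius $r_0\sim h\sqrt{\log(1/h)}$: on $\{\|\xb'-\xb_{n+1}\|\le r_0\}$ the H\"older bound and (ii) give $|f(\xb')-f(\xb_{n+1})|\le C L h^{\alpha}[\log(1/h)]^{\alpha/2}$, while on the complement the Gaussian tail is smaller than any polynomial in $h$ and is absorbed after dividing by $p_h\asymp h^d$. Squaring yields the $h^{2\alpha}[\log(h^{-1})]^2$ contribution to \eqref{eq:ThmApprox}. For the fluctuation, write $\cK_h(\fs)-m_h(\xb_{n+1})=N/D$ with $N=\sum_{i=1}^n K_h(\xb_{n+1}-\xb_i)(f(\xb_i)-m_h(\xb_{n+1}))$ and $D=\sum_{i=1}^n K_h(\xb_{n+1}-\xb_i)$. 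Bernstein's inequality, with the variance bound $\EE[K_h^2]\lesssim h^d$, gives $\Pr(D<np_h/2)\le 2\exp(-c n h^d)$; on the complementary good event, $\EE[N^2]\lesssim n h^d$, hence $\EE[(N/D)^2]\lesssim 1/(nh^d)$, and on the bad event the trivial bound $|\cK_h-f|\le 2R$ contributes $R^2\exp(-c nh^d)$, which is absorbed into the leading terms.

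\textbf{Main obstacle.} The technical bottleneck will be pinning down the precise logarithmic exponents $[\log(h^{-1})]^{1+3d/4}$ and $[\log(h^{-1})]^2$ appearing in \eqref{eq:ThmApprox}. These do not fall out of a single Bernstein step: the Gaussian kernel has unbounded support, so they require a multi-scale decomposition of $\cM$ into annuli of geometrically increasing widths $\sim h\sqrt{\log(1/h)}$ centered at $\xb_{n+1}$, with Bernstein applied in each annulus and a union bound over the $O(h^{-d})$-sized covering of $\cM$ used to make the denominator estimate uniform in $\xb_{n+1}$; the cumulative slack contributes the $[\log(h^{-1})]^{3d/4}$ factor. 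Coupled with this is the ratio structure of the Nadaraya--Watson estimator, which forces $N$ and $D$ to be handled jointly on the event that $D$ is close to its mean. This, rather than either the bias or the variance individually, is where the delicate bookkeeping lies.
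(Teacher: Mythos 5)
Your proposal follows the same overall strategy as the paper: invoke Lemma~\ref{ThmApprox1} to replace $\rmT^*$ exactly by the Nadaraya--Watson estimator $\cK_h$, so that the approximation error collapses to $\EE_\fs[(\cK_h(\fs)-f(\xb_{n+1}))^2]$, and then run a bias--variance analysis of the kernel smoother on the manifold with a truncation radius $\sim h\sqrt{\log(1/h)}$. Your population quantities $p_h$ and $m_h$ are the paper's $D(\xb_{n+1})$ and $\bar\cK_h(f;\xb_{n+1})$, and your bias estimate (near part via the H\"older property and the geodesic--Euclidean comparison, far part via the Gaussian tail divided by $p_h\asymp h^d$) is exactly the content of Lemma~\ref{ThmApprox2}. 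Where you diverge is in the variance piece and in how the pointwise estimates are turned into an expectation. The paper proves high-probability bounds for a single random $\fs$ (Lemmas~\ref{ThmApprox2} and~\ref{ThmApprox3}), then integrates out the failure probability by choosing $\delta=4h^2$, paying only an $R^2\delta\sim R^2 h^2$ penalty on the bad event. You instead propose a direct second-moment bound: Bernstein to show $\Pr(D<np_h/2)\lesssim e^{-cnh^d}$, the trivial bound on the complement, and on the good event $\EE[(N/D)^2\mathds 1_{\text{good}}]\le(2/(np_h))^2\EE[N^2]\lesssim R^2/(nh^d)$. That is sound (the inequality $(N/D)^2\mathds 1_{D>np_h/2}\le(2/(np_h))^2 N^2$ sidesteps the $N$--$D$ dependence you worry about), and if anything is slightly cleaner than the paper's route.

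Your ``main obstacle'' paragraph, however, mischaracterizes where the paper's $[\log(h^{-1})]^{3d/4}$ comes from. There is no multi-scale decomposition of $\cM$ into annuli, and no union bound over an $O(h^{-d})$ covering of $\cM$: the bound needs only an expectation over a random $\xb_{n+1}\sim\rho_\xb$, not a statement uniform over $\cM$, so no covering is required. The paper uses a single truncation scale $\tilde h = Ch$ with $C=\sqrt{d\log(1/h)}$, and the $\log^{3d/4}$ factor is an artifact of how the ratio is controlled there --- the numerator-to-denominator comparison $N(\xb_{n+1})/D(\xb_{n+1})\lesssim RC^d$ multiplied against the relative error of $\hat D_n$, which contributes a further $C^{d/2}$, with $C^{3d/2}=(d\log(1/h))^{3d/4}$. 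Your direct-moment route would in fact produce the cleaner $R^2/(nh^d)$ without those log factors, which is still an upper bound consistent with~\eqref{eq:ThmApprox}; so the log factors are not an obstacle you need to engineer, and reaching for annuli and coverings to manufacture them would be effort misdirected.
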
 

Proposition \ref{ThmApprox} is proved in Appendix \ref{app:proofthm1}. 
Here we illustrate our proof idea.
%It is proved by a decomposition into three error terms via \eqref{eq:ApproxDecompose}. 
The empirical kernel estimator in \eqref{Kh} is applied to $n$ samples in the prompt. When $n\rightarrow \infty$, the empirical kernel estimator in \eqref{Kh} converges to its integral counterpart.
Given any $f\in\mathcal{F}$ and $\xb_{n+1} \sim \rho_{\xb}$, we define the integral form of the kernel estimator as
\begin{equation} \label{barK}
    \bar{\cK}_h(f;\xb_{n+1}):=\frac{\mathbb{E}_{\xb}\left[K_h(\xb_{n+1}-\xb)f(\xb)\right]}{\mathbb{E}_{\xb}\left[ K_h(\xb_{n+1}-\xb)\right]}=\frac{\int K_h(\xb_{n+1}-\xb)f(\xb)d\xb}{\int K_h(\xb_{n+1}-\xb)d\xb},
\end{equation}
where the integral about $\xb$ is about the measure  $d\xb =d\rho_{\xb}$. 
For any test sample $\fs$, the approximation error ${\rm III}$ can be further decomposed into three terms:
\begin{align} 
    \cR_n(\rmT^*_h(\fs))&=\left(\rmT^*_h(\fs)-f(\xb_{n+1})\right)^2 
    \label{eq:ApproxDecompose}
    \\
    &= \left(\rmT^*_h(\fs)-\cK_h(\fs)+\cK_h(\fs)-\bar{\cK}_h(f;\xb_{n+1})+\bar{\cK}_h(f;\xb_{n+1})-f(\xb_{n+1})\right)^2 
    \nonumber
    \\
    &\leq 3
\underbrace{\left(\rmT^*_h(\fs)-\cK_h(\fs)\right)^2}_{\rm A1}
    +
    3\underbrace{\left(\cK_h(\fs)-\bar{\cK}_h(f;\xb_{n+1})\right)^2}_{\rm A2}
    +
    3
    \underbrace{\left(\bar{\cK}_h(f;\xb_{n+1})-f(\xb_{n+1})\right)^2}_{\rm A3} \nonumber
\end{align}
where the error in A1 measures the bias of using a transformer network to implement the kernel regression estimator, the error in A2 measures the variance of kernel regression estimator, and the A3 error measures the bias of using kernel regression estimator to approximate the target function. 
The bias and variance of kernel estimators has been studied in the literature of nonparametric estimation \citep{tang2024adaptive}. 
In our paper, these three error terms are bounded by Lemma \ref{ThmApprox1}, Lemma \ref{ThmApprox2}, and Lemma \ref{ThmApprox3} respectively. 
Our Lemma \ref{ThmApprox1} shows that the error in A1 equals to $0$. Lemma \ref{ThmApprox2} and Lemma \ref{ThmApprox3} give rise to an upper bound of A2 and A3. We defer their proof in Appendix \ref{app:proofpropa2} and Appendix \ref{app:proofpropa3}, respectively.

% The next two propositions are give the bounds on the bias and variance of the kernel regression estimator.

\begin{lemma} \label{ThmApprox2}
    Let $\bar{\cK}_h$ be the integral kernel estimator defined as in \eqref{barK}. For any $\xb_{n+1}\sim \rho_{\xb}$ and any $\mathcal{M}$ and $f$ satisfying Assumptions ~\ref{assumpx} and \ref{assumpf} respectively, 
    \begin{equation}
    \left|\bar{\cK}_h(f;\xb_{n+1}) - f(\xb_{n+1})\right
        |\leq O(h^{\alpha}\log (h^{-1})).
    \end{equation}
    The constant hidden in $O(\cdot)$ depend on $d,L,R,\tau_{\cM}$, where the dependency on $d$ can be exponential in the worse case. 
\end{lemma}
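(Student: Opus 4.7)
\textbf{Proof Plan for Lemma \ref{ThmApprox2}.}

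The plan is to write the difference as a weighted average of $f(\xb)-f(\xb_{n+1})$ against the Gaussian kernel, and then split the integration into a near zone and a far zone whose radius is tuned to balance two sources of error. Concretely, first I would write
\[
\bar{\cK}_h(f;\xb_{n+1})-f(\xb_{n+1}) \;=\; \frac{\int_\cM K_h(\xb_{n+1}-\xb)\bigl[f(\xb)-f(\xb_{n+1})\bigr]\,d\rho_{\xb}(\xb)}{\int_\cM K_h(\xb_{n+1}-\xb)\,d\rho_{\xb}(\xb)},
\]
and fix a truncation radius $r=h\sqrt{C\log(h^{-1})}$ for a sufficiently large constant $C$ (depending only on $d,\alpha,\tau_\cM,L,R$). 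I then split $\cM=\cM_{\rm near}\cup\cM_{\rm far}$ using the Euclidean ball of radius $r$ around $\xb_{n+1}$. The idea is that on $\cM_{\rm near}$, Hölder regularity controls the numerator, while on $\cM_{\rm far}$ the Gaussian tail is so small that even the crude bound $|f(\xb)-f(\xb_{n+1})|\le 2R$ is enough.

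Next I would relate Euclidean to geodesic distance using the positive reach hypothesis. A standard consequence of $\tau_\cM>0$ is that whenever $\|\xb-\xb'\|\le \tau_\cM/2$, one has $d_\cM(\xb,\xb')\le c_\tau\|\xb-\xb'\|$ for some explicit $c_\tau$. Taking $h$ small enough that $r\le \tau_\cM/2$, on $\cM_{\rm near}$ I get $|f(\xb)-f(\xb_{n+1})|\le L c_\tau^\alpha r^\alpha$, and so the near-zone contribution to the numerator is at most $L c_\tau^\alpha r^\alpha\cdot\int_{\cM_{\rm near}}K_h\,d\rho_{\xb}$. On $\cM_{\rm far}$, $K_h(\xb_{n+1}-\xb)\le e^{-r^2/h^2}$ uniformly, and the ambient bound $\mathrm{Vol}(\cM)<\infty$ gives a far-zone numerator bound of order $R\,e^{-r^2/h^2}$.

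The remaining task is a two-sided estimate of the denominator. Using the tubular neighborhood of $\cM$ and a chart based on the tangent space $T_{\xb_{n+1}}\cM$ (valid on a ball of radius comparable to $\tau_\cM$), the uniform measure $\rho_{\xb}$ pulls back to a density on $\RR^d$ that is bounded above and below near the origin. Pushing the Gaussian $e^{-\|\xb_{n+1}-\xb\|^2/h^2}$ through this chart and using that the chart distortion is $1+O(\|\cdot\|^2/\tau_\cM^2)$, the denominator reduces to $\int_{\RR^d} e^{-\|u\|^2/h^2}du$ up to constants, yielding $\int_\cM K_h\,d\rho_{\xb}=\Theta(h^d)$ with constants depending on $d,\mathrm{Vol}(\cM),\tau_\cM$. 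Dividing, the near zone contributes $O(r^\alpha)$ and the far zone contributes $O(h^{-d}e^{-r^2/h^2})$.

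Finally I would plug in $r=h\sqrt{C\log(h^{-1})}$ with $C$ large enough that $h^{-d}e^{-r^2/h^2}\le h^\alpha$; the near term becomes $O\bigl(h^\alpha(\log h^{-1})^{\alpha/2}\bigr)$, which is dominated by $O(h^\alpha\log(h^{-1}))$ since $\alpha\le 1$. The main obstacle I anticipate is making the denominator lower bound uniform in $\xb_{n+1}\in\cM$ with explicit constants; this requires the change-of-variables argument via the exponential map to be carried out carefully so that the distortion of the uniform density and of the Euclidean-vs-geodesic distance are both controlled by $\tau_\cM$ on a scale independent of $\xb_{n+1}$. The compactness of $\cM$ and positivity of $\tau_\cM$ together with the uniformity of $\rho_{\xb}$ make this standard but slightly delicate. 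The Euclidean-to-geodesic comparison and the handling of the boundary of the near zone are the other places where the reach hypothesis is essential.
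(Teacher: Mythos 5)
Your plan is correct and mirrors the paper's argument very closely: same integral rewriting with $f(\xb)-f(\xb_{n+1})$ in the numerator, same near/far decomposition at radius $\propto h\sqrt{\log(h^{-1})}$, same use of the reach to compare Euclidean and geodesic distance for the Hölder bound on the near zone, same Gaussian-tail estimate on the far zone, same tuning of the radius to make the tail term $O(h^\alpha)$. The only place you do a bit more work than necessary is the denominator: the paper only needs a lower bound $\int_\cM K_h\,d\rho_\xb\gtrsim h^d$, which it gets in one line by restricting to $B_h(\xb_{n+1})$, bounding the kernel below by $e^{-1}$, and invoking Niyogi--Smale--Weinberger's ball-volume estimate $\rho_\xb(B_h)\ge \cos^d(\arcsin(h/2\tau_\cM))\,h^d$; your full exponential-map chart argument gives a two-sided $\Theta(h^d)$ bound, which is more than what is needed and is precisely where your anticipated ``slightly delicate'' uniformity-in-$\xb_{n+1}$ concern can be bypassed.
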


\begin{lemma} \label{ThmApprox3}
    Suppose $\cM$, $\rho_{\xb}$ and $f$, $\rho_f$ satisfy Assumptions~\ref{assumpx} and \ref{assumpf} respectively. Let $\fs$ be a prompt in \eqref{eq:promp}, where $\{\xb_i\}_{i=1}^{n+1}$ are i.i.d. samples from $\rho_{\xb}$ and $f$ is sampled from $\rho_f$. Let $\cK_h$ and $\bar{\cK}_h$ be defined as in \eqref{Kh} and \eqref{barK}. %For any test sample $\fs$, and any $\mathcal{M}$ and $f$ satisfying Assumptions~\ref{assumpx} and \ref{assumpf} respectively,
    Then with probability at least $1-\delta$,
    \begin{equation}
        \left|\cK_h(\fs)-\bar{\cK}_h(f;\xb_{n+1})\right
        |\leq 
     O\left(\log^{3d/4}\left(\frac 1 h\right)\sqrt{\frac{\log(4/\delta)}{nh^{d}}}\right) %O\left(\left[\log\left(h^{-1}\right)\right]^{-\frac{d}{4}}\sqrt{\frac{\log(4/\delta)}{nh^{d}}}\right).
    \end{equation}     
    The constants hidden in $O(\cdot)$ depend on $d,R,\tau_{\cM}$, where the dependency on $d$ can be $d^{d/2}$  in the worse case. 
\end{lemma}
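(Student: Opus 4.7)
The plan is to exploit the ratio form of $\cK_h(\fs)$ and $\bar\cK_h(f;\xb_{n+1})$ and to control the numerator and denominator fluctuations separately via Bernstein's inequality, with the key geometric input being a lower bound on the expected kernel weight that uses the manifold hypothesis. Write
$$N_n := \tfrac{1}{n}\sum_{i=1}^n K_h(\xb_{n+1}-\xb_i)\, f(\xb_i), \qquad D_n := \tfrac{1}{n}\sum_{i=1}^n K_h(\xb_{n+1}-\xb_i),$$
and $N := \EE_{\xb}[K_h(\xb_{n+1}-\xb)f(\xb)]$, $D := \EE_{\xb}[K_h(\xb_{n+1}-\xb)]$, so that $\cK_h(\fs)=N_n/D_n$ and $\bar\cK_h(f;\xb_{n+1})=N/D$. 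The elementary identity
$$\cK_h(\fs) - \bar\cK_h(f;\xb_{n+1}) \;=\; \frac{(N_n - N) - \bar\cK_h(f;\xb_{n+1})\,(D_n - D)}{D_n},$$
combined with $|\bar\cK_h|\leq R$ from Assumption~\ref{assumpf}, reduces the task to bounding $|N_n-N|$, $|D_n-D|$, and a lower tail of $D_n$.

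The first substantive step is a geometric estimate. Since $\rho_{\xb}$ is uniform on a compact $d$-dimensional Riemannian manifold with positive reach $\tau_\cM$, a standard volume comparison via the exponential map at $\xb_{n+1}$ (valid on any ball of radius $<\tau_\cM$) gives, for $h$ sufficiently small relative to $\tau_\cM$,
$$D \;\asymp\; \frac{h^d}{\mathrm{vol}(\cM)}\int_{\mathbb{R}^d} e^{-\|\mathbf{v}\|^2}\,d\mathbf{v} \;\gtrsim\; c_{\cM,d}\,h^d,$$
and by the same argument applied with $h$ replaced by $h/\sqrt{2}$, $\EE[K_h(\xb_{n+1}-\xb)^2]\asymp h^d$. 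These two estimates together supply the variance input for Bernstein and the positive lower bound needed to control the denominator.

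Next, conditional on $\xb_{n+1}$, I would apply Bernstein's inequality twice: once to the i.i.d.\ summands of $N_n$, each bounded by $R$ with variance $\leq R^2\EE[K_h^2]\lesssim R^2 h^d$, and once to those of $D_n$, each bounded by $1$ with variance $\lesssim h^d$. A union bound at level $\delta/4$ on each event yields, with probability at least $1-\delta/2$,
$$|N_n - N| + R\,|D_n - D| \;\lesssim\; R\sqrt{\frac{h^d \log(4/\delta)}{n}} + \frac{R\log(4/\delta)}{n}.$$
On the same event, Bernstein applied to $D_n$ together with $D\gtrsim h^d$ gives $D_n\geq D/2\gtrsim h^d$ provided $nh^d\gtrsim \log(4/\delta)$. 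Dividing by $D_n$ then produces the bare rate $R\sqrt{\log(4/\delta)/(nh^d)}$.

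The main technical obstacle is recovering the polylogarithmic prefactor $\log^{3d/4}(1/h)$ in the stated bound, since the naive Bernstein argument above produces only the bare rate. I expect this factor to emerge through a truncation $\tilde K_h := K_h\,\mathbf{1}\{\|\xb_{n+1}-\xb\|\leq h\sqrt{c\log(1/h)}\}$: outside this ball the Gaussian tails contribute a deterministic error that is negligible compared with the target accuracy, while inside the effective integration region is a manifold ball of radius $\sim h\sqrt{\log(1/h)}$, which injects $\log^{d/2}(1/h)$ into the range/sup-norm side of Bernstein. Balancing this against the variance term, together with an additional $\log^{d/4}(1/h)$ absorbed from the $h^d$ in the denominator, should reproduce the exact exponent $3d/4$. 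This is careful bookkeeping rather than new ideas, but it is the one delicate point beyond the standard ratio-estimator argument.
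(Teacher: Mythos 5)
Your proposal takes a genuinely different route from the paper's proof. The paper restricts attention to samples lying in an inflated ball $B_{\tilde h}(\xb_{n+1})$ with $\tilde h = Ch$ and $C = \sqrt{d\log(1/h)}$, controls the number $n_B$ of samples in that ball via a Chernoff-type bound (Lemma~30 of \citet{liao2019adaptive}), and applies Hoeffding's inequality to the \emph{conditional} averages of $K_h$ over $B_{\tilde h}$. Because the conditional mean $\Phi = \rho_{\xb}(B_{\tilde h})^{-1}\int_{B_{\tilde h}} K_h$ is only $\gtrsim C^{-d}$ while the ball mass is $\asymp C^d h^d$, a $C^{d/2}$ polylogarithmic loss appears in the first term of the decomposition, and a separate bound $|N|/|D| \lesssim C^d$ in the second term brings the total loss to $C^{3d/2} = \log^{3d/4}(1/h)$. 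Your plan instead applies Bernstein's inequality directly to the (unconditioned) summands, using the second-moment estimate $\EE[K_h^2] \asymp h^d$ as the variance proxy and the geometric lower bound $D \gtrsim h^d$ to control the denominator. This avoids the ball inflation altogether, and the step where you reach ``the bare rate $R\sqrt{\log(4/\delta)/(nh^d)}$'' is in fact a complete and correct proof of the lemma — indeed of a \emph{stronger} statement, since your rate has no $\log^{3d/4}(1/h)$ prefactor. What Bernstein buys over Hoeffding here is precisely the ability to exploit the small variance $\asymp h^d$ of $K_h$ directly, rather than routing through a conditioning trick whose ball radius must be enlarged by $\sqrt{\log(1/h)}$ to control Gaussian tails.

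Your final paragraph, however, reflects a conceptual misreading of the task. You treat the $\log^{3d/4}(1/h)$ factor as a target you must ``recover'' via a truncation argument, and sketch a heuristic for how to inject it. But the lemma asserts an \emph{upper} bound; once you have established the tighter bound $|\cK_h(\fs)-\bar\cK_h(f;\xb_{n+1})| \lesssim R\sqrt{\log(4/\delta)/(nh^d)}$, the claimed inequality with the extra polylogarithmic factor follows a fortiori (for $h < e^{-1}$, say, so that $\log(1/h) \geq 1$). The speculative truncation machinery — which is in effect what the paper does, and is the source of the polylog loss in its proof — is not needed in your argument and would only degrade your estimate. You should stop at the bare rate and observe that it implies the lemma; the paragraph about balancing $\log^{d/2}$ from the range against $\log^{d/4}$ from the denominator can be deleted entirely.

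Two minor points worth flagging for a full write-up: the volume-comparison lower bound $D \gtrsim c_{\cM,d} h^d$ requires $h$ small relative to $\tau_{\cM}$ and should be stated uniformly over $\xb_{n+1} \in \cM$ (which is fine on a compact manifold with positive reach, but deserves a sentence); and on the complementary event $nh^d \lesssim \log(4/\delta)$ where the denominator cannot be controlled, the right observation is that the claimed bound exceeds $2R$ and hence holds trivially — the paper handles this by restricting the allowable range of $\delta$, and you should do the same or note the trivial fallback.
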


%Lemma \ref{ThmApprox2} and Lemma \ref{ThmApprox3} are proved in Appendix \ref{app:proofpropa2} and Appendix \ref{app:proofpropa3}, respectively. 

\subsection{Statistical Error}
\label{sec:stat}
This section focuses on bounding the statistical errors in ${\rm I}$ and ${\rm II}$  in \eqref{eq:errde1}.
We consider a transformer network class $\cT$ with the architecture in Proposition \ref{ThmApprox}. 

\noindent
{\bf Bounding Error ${\rm I}$ in \eqref{eq:errde1}.} The error in ${\rm I}$  in \eqref{eq:errde1} can be bounded by
\[\EE_{\fS} \sup_{\rmT \in \cT}  \Big[\EE_{\fs}\left[\cR_n(\rmT(\fs))\right] - \cR_{n,\Gamma}(\rmT)\Big] \]
since   $\hat \rmT$ belongs to the network class $\cT$.
Any function $f \sim \rho_f$ is bounded, such that $\|f\|_{L^\infty} \le R.$ The transformer network $\rmT$ also yields bounded output such that $\|\rmT\|_{L^\infty} \le R$. 
%As a result, the squared error at each sample is bounded such that the $\cR_n(\rmT(\fs))$ in \eqref{eq:RhatTs} satisfies $\cR_n(\rmT(\fs)) \le 4R^2$.
For the sample $\fs$ in \eqref{eq:promp},  the transformer neural network $\rmT$ takes the input $\left(\{\xb_i,y_i\}_{i=1}^n;\xb_{n+1}\right)$ and outputs $\rmT(\{\xb_i,y_i\}_{i=1}^n;\xb_{n+1}) \in [-R,R]$. In this paper, we take squared loss at each sample: \[\cL(\rmT,\fs,y_{n+1}) = ( \rmT(\{\xb_i,y_i\}_{i=1}^n;\xb_{n+1})- y_{n+1})^2,\] which satisfies $|\cL(\rmT,\fs,y_{n+1})|\le 4R^2$. We define the Rademacher complexity of $\cL\circ\cT$ with respect to the training sample $\fS$
as
\begin{equation} \label{Ram}
\Rad(\cL\circ\cT\circ\fS):=\frac{1}{\Gamma} \EE_{\bxi \sim \{\pm 1\}^\Gamma} \left[\sup_{\rmT \in \cT} \sum_{\gamma=1}^\Gamma \xi_\gamma \Big(\rmT(\{\xb^\gamma_i,y^\gamma_i\}_{i=1}^n;\xb^\gamma_{n+1}) - y_{n+1}^\gamma\Big)^2\right].
\end{equation}
According to \citet[Lemma 26.2]{shalev2014understanding}, we have
\begin{align*}
\EE_{\fS}\sup_{\rmT \in \cT}  \left[ \EE_{\fs}\left[\cR_n(\rmT(\fs))\right] - \cR_{n,\Gamma}(\rmT) \right] 
\le  2\EE_{\fS}\left[ \Rad(\cL\circ\cT\circ\fS)\right].
\end{align*}
To bound the expectation of Rademacher complexity $\Rad(\cL\circ\cT\circ\fS)$, we apply the well known Dudley entropy integral \citep{dudley1967sizes}, which we state in Lemma \ref{lemmaRAD}. 
By Lemma \ref{lemmaRAD}, we have 
\begin{equation} \label{eq:RADbound}
    \EE_{\fS}\left[\Rad(\cL\circ\cT\circ\fS)\right] \leq \inf_{ \epsilon>0}\left(2\epsilon+\frac{12}{\sqrt{\Gamma}}\int_{\epsilon}^{4R^2}\sqrt{\log\mathcal{N}(\delta,\cL\circ\mathcal{T},\|\cdot\|_{L^\infty})}\text{ }d\delta\right)
\end{equation}
where $\mathcal{N}(\delta,\cL\circ\mathcal{T},\|\cdot\|_{L^\infty})$ is the covering number (defined in Appendix \ref{appsub:def}) of the function class $\cL\circ\cT$ under the $L^\infty$ norm.
% Expand (\ref{Ram}), we have 
% \begin{align*}
%  &\Rad(\cL\circ\cT\circ\fS) \\
%  & = \frac{1}{T} \EE_{\bsigma \sim \{\pm 1\}^T} \left[\sup_{\rmT \in \cT} \sum_{t=1}^T \sigma_t \Big(\rmT(\{\xb^t_i,y^t_i\}_{i=1}^n;\xb^t_{n+1})^2 - 2\rmT(\{\xb^t_i,y^t_i\}_{i=1}^n;\xb^t_{n+1})y_{n+1}^t+(y_{n+1}^t)^2\Big)\right] \\
%  & \leq \frac{1}{T} \EE_{\bsigma \sim \{\pm 1\}^T} \left[\sup_{\rmT \in \cT} \sum_{t=1}^T \sigma_t \Big(\rmT(\{\xb^t_i,y^t_i\}_{i=1}^n;\xb^t_{n+1})^2 \Big)\right] 
%  \\ 
%  & + \frac{1}{T} \EE_{\bsigma \sim \{\pm 1\}^T} \left[\sup_{\rmT \in \cT} \sum_{t=1}^T \sigma_t \Big( 2\rmT(\{\xb^t_i,y^t_i\}_{i=1}^n;\xb^t_{n+1})y_{n+1}^t\Big)\right].
% \end{align*}
% The last inequality holds because $\sum_{t=1}^T \sigma_t(y_{n+1}^t)^2$ is independent of the choice ${\rm T}\in\mathcal{T}$ and hence vanishes when taking the expectation over $\bsigma$.
% By Dudley's integral theorem \citep{?}, we have 
% \begin{equation}
%     \frac{1}{T} \EE_{\bsigma \sim \{\pm 1\}^T} \left[\sup_{\rmT \in \cT} \sum_{t=1}^T \sigma_t \Big(\rmT(\{\xb^t_i,y^t_i\}_{i=1}^n;\xb^t_{n+1})^2 \Big)\right]\leq \inf_{\epsilon>0}\frac{12\sqrt{2}}{\sqrt{T}}\int_\epsilon^R\sqrt{\log\mathcal{N}(\delta,\mathcal{T},\|\cdot\|_{\infty})}\text{ }d\delta.
% \end{equation}
%The key is to bound this covering number.  
We follow the proof idea in \citet{Havrilla24} to bound its covering number as follows.

\begin{lemma} \label{lemmacovering}
    For a transformer class $\mathcal{T}(L_{\rm T}, m_{\rm T},d_{embed},\ell,L_{\rm FFN},w_{\rm FFN},R,\kappa)$ with input $\|H\|_{\infty,\infty}\leq U$. Let $\delta>0$, then the covering number of %$\cT$ satisfies
    %\begin{equation*}
    %    \mathcal{N}(\delta,\mathcal{T},\|\cdot\|_{L^\infty})\leq \left(\frac{2^{L^2_{\rm T}+2}L_{\rm FFN}U^{3L_{\rm T}}d_{embed}^{18L_{\rm T}^2L_{\rm FFN}}\kappa^{6L_{\rm T}^2 L_{\rm FFN}+1}m_{\rm T}^{L_{\rm T}^2}\ell^{L_{\rm T}^2}}{\delta}\right)^{P_{\rm T}}.
    %\end{equation*}
    %Furthermore, the covering number of 
    $\cL\circ\cT$ satisfies
    \begin{align*}
    \mathcal{N}(\delta,\cL\circ\cT,\|\cdot\|_{L^\infty})\leq \left(\frac{ 2^{L^2_{\rm T}+4}L_{\rm FFN}U^{3L_{\rm T}}d_{embed}^{18L_{\rm T}^2L_{\rm FFN}}\kappa^{6L_{\rm T}^2 L_{\rm FFN}+1}m_{\rm T}^{L_{\rm T}^2}\ell^{L_{\rm T}^2}R}{\delta}\right)^{P_{\rm T}},
\end{align*}
where $P_{\rm T}=d_{embed}(D+1)+L_{\rm T}(3d_{embed}^2m_{\rm T}+L_{\rm FFN}w_{\rm FFN}^2)$.
\end{lemma}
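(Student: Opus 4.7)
The plan is to use the standard parameter-counting covering number argument for parametric function classes, following the approach of \cite{Havrilla24}. The workflow has three ingredients. First, since the squared loss is $4R$-Lipschitz on $[-R,R]$, it suffices to bound the covering number of $\cT$ itself in $L^\infty$ and then inflate by a factor of $4R$. Second, since every transformer in $\cT$ is determined by a parameter vector $\theta$ of dimension $P_{\rm T}$ lying in the $\ell^\infty$-ball $\{\|\theta\|_\infty \le \kappa\}$, it further suffices to bound the Lipschitz constant $\Lambda$ of the map $\theta \mapsto \rmT_\theta$ from the parameter ball into $L^\infty$ and apply the volumetric estimate: a $\delta/\Lambda$-cover of the parameter ball has at most $(2\kappa\Lambda/\delta)^{P_{\rm T}}$ elements, and yields a $\delta$-cover of $\cT$ by the Lipschitz property. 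Third, compute $\Lambda$ explicitly in terms of the architecture hyperparameters.

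The bulk of the work lies in computing $\Lambda$, which I would do by induction over the $L_{\rm T}$ blocks. The induction tracks two coupled quantities at layer $\ell$: an a priori bound $U_\ell$ on $\|H_\ell\|_{\infty,\infty}$, and the Lipschitz constant $\Lambda_\ell$ of the map $\theta \mapsto H_\ell$ with the input $\fs$ held fixed. Each MHA sublayer $\sum_{j=1}^{m_{\rm T}} V_j H \sigma((K_j H)^\top Q_j H)$ contains $H$ three times, so perturbing a single entry in one of $V_j, K_j, Q_j$ by $\eta$ changes its output by at most a constant times $m_{\rm T}\,\ell\,d_{embed}^2\,\kappa^2\,U_\ell^2\,\eta$ (the quadratic in $U_\ell$ coming from the two unperturbed copies of $H$), while propagating an incoming error amplifies it by a factor of order $m_{\rm T}\,\ell\,d_{embed}^2\,\kappa^3\,U_\ell^2$. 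The FFN sublayer of depth $L_{\rm FFN}$ and width $w_{\rm FFN}$, built with ReLU activations, contributes a multiplicative factor of at most $(\kappa d_{embed})^{L_{\rm FFN}}$ to both the magnitude and the parameter-sensitivity bounds. Residual connections add perturbations linearly; ReLU in the intermediate layers and Softmax in the last layer are $1$-Lipschitz. Iterating these estimates across the $L_{\rm T}$ blocks and collecting exponents gives $\Lambda \lesssim 2^{L_{\rm T}^2} L_{\rm FFN} U^{3 L_{\rm T}} d_{embed}^{18 L_{\rm T}^2 L_{\rm FFN}} \kappa^{6 L_{\rm T}^2 L_{\rm FFN}} m_{\rm T}^{L_{\rm T}^2} \ell^{L_{\rm T}^2}$, and multiplying by $4R$ before substituting into $(2\kappa\Lambda\cdot 4R/\delta)^{P_{\rm T}}$ reproduces the bound displayed in the statement, with all numerical constants absorbed into the $2^{L_{\rm T}^2+4}$ prefactor.

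The main obstacle is precisely the bookkeeping of the coupled recursion for $(U_\ell,\Lambda_\ell)$. Because $H$ appears three times inside each attention block, a naive iteration produces three error branches per layer, each amplified by the product of subsequent layer Lipschitz constants, and the cubic dependence on $U_\ell$ would blow up to $U^{3^{L_{\rm T}}}$ instead of $U^{3L_{\rm T}}$ if handled carelessly. Avoiding this blow-up requires budgeting powers of $U_\ell$ carefully across the parameter-sensitivity estimate and the input-sensitivity estimate, and using the output truncation $\|\rmT_\theta\|_\infty \le R$ together with the residual structure to keep the intermediate magnitudes from compounding multiplicatively. Once this is handled, the remaining steps (volume cover of the parameter ball, composition with the Lipschitz loss, and simplification of the exponents) are routine.
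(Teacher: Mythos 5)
Your proposal takes essentially the same route as the paper: bound the parameter-to-network map's Lipschitz constant, cover the $\kappa$-ball in $\RR^{P_{\rm T}}$ volumetrically, compose with the $4R$-Lipschitz squared loss, and chain the per-block sensitivity estimates following \citet{Havrilla24}. One small caveat: the mechanism you suggest for avoiding the $U^{3^{L_{\rm T}}}$ blow-up---invoking the output truncation $\|\rmT_\theta\|_\infty\le R$ together with the residual structure---is not what the paper actually does; intermediate activations are never truncated, and the paper simply quotes the chained Havrilla-style estimate ``$\|{\rm B}_{L_{\rm T}}\circ\cdots\circ{\rm B}_1(H)-{\rm B}'_{L_{\rm T}}\circ\cdots\circ{\rm B}'_1(H)\|_\infty\le 2^{7L_{\rm T}^2}L_{\rm FFN}U^{3L_{\rm T}}\cdots\eta$'' as a black box without re-deriving how the cubic $U$-dependence is budgeted across layers, so that bookkeeping concern you raise is genuinely left implicit by the paper as well.
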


% For convenience, we let $C_{\rm T}=2^{L^2_{\rm T}+4}L_{\rm FFN}U^{3L_{\rm T}}d_{embed}^{18L_{\rm T}^2L_{\rm FFN}}\kappa^{6L_{\rm T}^2 L_{\rm FFN}+1}m_{\rm T}^{L_{\rm T}^2}\ell^{L_{\rm T}^2}R$.  
Lemma \ref{lemmacovering} is proved in Appendix~\ref{app:proofcovering}. Taking $\epsilon=1/\sqrt{\Gamma}$ in \eqref{eq:RADbound}, we obtain a bound
% \begin{align*}
%     \EE_{\fS}\left[\Rad(\cL\circ\cT\circ\fS)\right] &\leq \frac{2}{\sqrt{\Gamma}}+\frac{12\sqrt{P_{\rm T}}}{\sqrt{\Gamma}}\int_{1/\sqrt{\Gamma}}^{4R^2}\sqrt{\log\left(\frac{C_{\rm T}}{\delta}\right)}d\delta \leq \frac{2}{\sqrt{\Gamma}}+\frac{48R^2\sqrt{P_{\rm T}}}{\sqrt{\Gamma}}\sqrt{\log(C_{\rm T}\sqrt{\Gamma})} \\
%     & \leq O\left(\frac{nD^3\sqrt{\log(nD\Gamma/h)}}{\sqrt{\Gamma}}\right),
% \end{align*}
\begin{align*}
   \EE_{\fS}\sup_{\rmT \in \cT}  \left[ \EE_{\fs}\left[\cR_n(\rmT(\fs))\right] - \cR_{n,T}(\rmT) \right] \leq 2\EE_{\fS}\left[ \Rad(\cL\circ\cT\circ\fS)\right] \leq O\left(\frac{nD^3\sqrt{\log(nD\Gamma/h)}}{\sqrt{\Gamma}}\right),
\end{align*}
where the $O(\cdot)$ hides the dependency on some absolute constants.

% \begin{align*}
%    \EE_{\fS}&\sup_{\rmT \in \cT}  \left[ \EE_{\fs}\left[\cR_n(\rmT(\fs))\right] - \cR_{n,T}(\rmT) \right] \\
%    & \leq \frac{12}{\sqrt{\Gamma}}\left(P_{\rm T}\right)\left(4R^2\log(C_{\rm T})-4R^2\log(4R^2)+4R^2-C_{\rm T}e^{-\frac{\sqrt{\Gamma}}{P_{\rm T}}} \right). 
% \end{align*}

\noindent
{\bf Bounding Error ${\rm II}$ in \eqref{eq:errde1}.}  We can directly apply Hoeffding's inequality to bound this term, with details  in Appendix \ref{app:proofII}.
\[
\EE_{\fS} \Big(\cR_{n,\Gamma}(\rmT^*) -  \EE_{\fs} \left[ \cR_n(\rmT^*(\fs))\right]\Big)\leq O\left(\sqrt{\frac{\log(h^{-1})}{\Gamma}}+h^2\right). \]

\subsection{Putting Approximation Error and Statistical Error Together}

Putting all the error terms together in \eqref{eq:errde1}, we get 
\begin{align*}
\cR_n(\hat\rmT)&\leq {\rm I}+{\rm II}+{\rm III} 
 \leq C_1\left(\frac{nD^3\sqrt{\log(nD\Gamma/h)}}{\sqrt{\Gamma}}\right)+C_2\left(\frac{\left[\log\left(h^{-1}\right)\right]^{1+\frac{3d}{2}}}{nh^{d}}+h^{2\alpha}[\log(h^{-1})]^2\right).
\end{align*}

Finally, choosing $h=n^{-\frac{1}{2\alpha+d}}$ gives rise to \eqref{eq:mainerrbound} in Theorem \ref{Thm}.

\section{Fundamental Lemmas}
\label{app:fundamentallemma}

In this section, we present some fundamental lemmas which are crucial for constructing a transformer to represent the target function. Note that similar results of Lemma \ref{lemmaIA} and \ref{lemmagate} have appeared in \citet{Havrilla24}, but our results are more general in the sense that they accommodate general $d_{embded}$ and general rows for gating. The detailed proofs of Lemma \ref{lemmaIA}, \ref{lemmagate}, and \ref{lemmadecrement} are provided in Appendix \ref{app:proofIA}, \ref{app:proofgate}, and \ref{app:proofdecrement}, respectively. 

In the lemma and the proof, we use subscript to denote column index and superscript to denote row index. For a matrix $H$, we use the notation $\|H\|_{\infty}:=\|H\|_{\infty,\infty}=\max_{i,j}|H_{ij}|$ to denote the infinity-infinity norm of a matrix $H$. When $\theta$ denotes the weight parameters of a neural network, we use $\|\theta\|_\infty$ to denote the largest magnitude in the weight parameters.

\begin{lemma} [Interaction Lemma] \label{lemmaIA}
   Let $H=[h_t]_{1\leq t\leq\ell}\in\mathbb{R}^{d_{embed}\times\ell}$ be an embedding matrix such that $h_t^{(d_{embed}-2):(d_{embed}-1)}=\mathcal{I}_t$ and $h^{d_{embed}}_t=1$. Fix $1\leq t_1,t_2\leq \ell, 1\leq i\leq d_{embed}$, and $\ell\in\mathbb{N}$. Suppose $d_{embed}\geq 5$ and $\|H\|_{\infty,\infty}<U$ for some $U>0$, and the data kernels $Q^{data} \in \RR^{(d_{embed}-3) \times d_{embed}}$ (the first $(d_{embed}-3)$ rows in the query matrix $Q$) and $K^{data}\in \RR^{(d_{embed}-3) \times d_{embed}}$ (the first $(d_{embed}-3)$ rows in the key matrix $K$) satisfy $\max\{\|Q^{data}\|_{\infty,\infty},\|K^{data}\|_{\infty,\infty}\}\leq\kappa$. Then one can construct an attention head $A$ with ReLU activation ($\sigma={\rm ReLU}$)  such that 
   \[ [A(H)]_t = 
   \begin{cases} 
    \sigma(\langle Q^{data}h_t, K^{data}h_{t_2}\rangle)\eb_i & \text{if}\text{  } t=t_1, \\
      0 & \text{otherwise}. \\
   \end{cases}
\]
The weight parameters of this attention head satisfies $\|\theta_A\|_{\infty}=O(d_{embed}^4\kappa^2\ell^2U^2)$.
\end{lemma}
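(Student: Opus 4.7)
The plan is to explicitly construct the value matrix $V$ and the last three rows of $Q$ and $K$ (the first $d_{embed}-3$ rows being forced to $Q^{data}$ and $K^{data}$). First, I take $V = \mathbf{e}_i \mathbf{e}_{d_{embed}}^\top$, so that $Vh_s = \mathbf{e}_i$ for every $s$ via the constant row $(h_s)_{d_{embed}} = 1$. Each output column then collapses to $[A(H)]_t = \mathbf{e}_i \sum_{s=1}^\ell \sigma(\langle Qh_t, Kh_s\rangle)$, and the problem reduces to designing $Q,K$ so that the ReLU'd score is non-zero only at $(t,s) = (t_1,t_2)$, where it equals $\sigma(\langle Q^{data}h_{t_1}, K^{data}h_{t_2}\rangle)$.

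Since the first $d_{embed}-3$ rows of $Q,K$ contribute exactly $\langle Q^{data}h_t, K^{data}h_s\rangle$ to the inner product, the last three rows must realize a position-selecting bias. The key identity is $\mathcal{I}_t \cdot \mathcal{I}_{t_1} = \cos(\tfrac{(t-t_1)\pi}{2\ell})$, which equals $1$ iff $t = t_1$ and is at most $c_0 := \cos(\tfrac{\pi}{2\ell}) < 1$ otherwise (and always nonnegative for $t, t_1 \in \{1,\ldots,\ell\}$). I would choose the remaining rows so that
\[
\sum_{i=d_{embed}-2}^{d_{embed}}(Qh_t)_i(Kh_s)_i = \alpha[(\mathcal{I}_t\cdot\mathcal{I}_{t_1})-1] + \alpha[(\mathcal{I}_s\cdot\mathcal{I}_{t_2})-1],
\]
for a scalar $\alpha>0$ to be fixed. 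This is realized by: row $d_{embed}-2$ with $(Qh_t)_{d_{embed}-2} = \sqrt{\alpha}\cos(\tfrac{t\pi}{2\ell})$ (coefficient $\sqrt{\alpha}$ on the cosine positional row of $h_t$) and $(Kh_s)_{d_{embed}-2} = \sqrt{\alpha}\cos(\tfrac{t_1\pi}{2\ell})$ (coefficient on the constant row of $h_s$); row $d_{embed}-1$ analogously with sines, so that these two rows together produce $\alpha(\mathcal{I}_t\cdot\mathcal{I}_{t_1})$; and row $d_{embed}$ with $(Qh_t)_{d_{embed}} = 1$ from the constant row of $h_t$ and $(Kh_s)_{d_{embed}} = \alpha(\mathcal{I}_s\cdot\mathcal{I}_{t_2}) - 2\alpha$, assembled from the positional and constant rows of $h_s$.

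Combining contributions,
\[
\langle Qh_t,Kh_s\rangle = \langle Q^{data}h_t, K^{data}h_s\rangle + \alpha[(\mathcal{I}_t\cdot\mathcal{I}_{t_1})-1] + \alpha[(\mathcal{I}_s\cdot\mathcal{I}_{t_2})-1].
\]
At $(t,s)=(t_1,t_2)$ both brackets vanish, so after $\sigma$ one recovers exactly $\sigma(\langle Q^{data}h_{t_1}, K^{data}h_{t_2}\rangle)$, and summing against $V$ gives $[A(H)]_{t_1} = \sigma(\langle Q^{data}h_{t_1}, K^{data}h_{t_2}\rangle)\mathbf{e}_i$. Otherwise, at least one bracket is $\leq -(1-c_0)$, so the non-data part is $\leq -\alpha(1-c_0)$. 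The crude bound $|\langle Q^{data}h_t,K^{data}h_s\rangle| \leq (d_{embed}-3)(d_{embed}\kappa U)^2$ together with $1-c_0 = 2\sin^2(\tfrac{\pi}{4\ell}) \geq 1/\ell^2$ shows that choosing $\alpha = O(d_{embed}^3\kappa^2 U^2 \ell^2)$ makes the argument of $\sigma$ non-positive, so $\sigma = 0$ on every non-selected pair and the column output is zero for $t \neq t_1$. The largest weights—the $-2\alpha$, $\alpha\cos(\tfrac{t_2\pi}{2\ell})$, and $\alpha\sin(\tfrac{t_2\pi}{2\ell})$ in row $d_{embed}$ of $K$—give $\|\theta_A\|_\infty = O(d_{embed}^4\kappa^2 \ell^2 U^2)$, matching the claim.

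The main obstacle is the tight budget of three extra rows in which to encode three distinct quantities—the $t$-selector $\alpha(\mathcal{I}_t\cdot\mathcal{I}_{t_1})$, the $s$-selector $\alpha(\mathcal{I}_s\cdot\mathcal{I}_{t_2})$, and the bias $-2\alpha$—subject to the rigid constraint that each row contributes a product of a function of $t$ and a function of $s$. The resolution is to use the constant rows of $h_s$ and $h_t$ as degenerate factors that carry purely-$t$ and purely-$s$ terms as rank-one products, and to bundle the $s$-selector together with the bias into the last row via the choices $(Qh_t)_{d_{embed}} = 1$ and $(Kh_s)_{d_{embed}} = \alpha(\mathcal{I}_s\cdot\mathcal{I}_{t_2}) - 2\alpha$.
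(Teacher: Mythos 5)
Your construction is correct and achieves the same endpoint as the paper, but by a genuinely different design of the position-selecting part of $Q$ and $K$. The paper realizes the non-data contribution multiplicatively: it sets the two ``interaction-kernel'' rows so that $K^{\mathcal I}=P_{\mathcal I_{t_2}}$ is the projection onto $\mathcal I_{t_2}$, and $Q^{\mathcal I}$ is a dilated rotation sending $\mathcal I_{t_1}$ to $C\mathcal I_{t_2}$, while the $d_{embed}$-th row carries a constant $-C$. Unwinding this, the paper's non-data term is $C\big[(\mathcal I_t\cdot\mathcal I_{t_1})(\mathcal I_s\cdot\mathcal I_{t_2})-1\big]$, and the case analysis (Cases I--IV) goes through Cauchy--Schwarz on the projected vectors. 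Your version is additive: you arrange $\alpha\big[(\mathcal I_t\cdot\mathcal I_{t_1})-1\big]+\alpha\big[(\mathcal I_s\cdot\mathcal I_{t_2})-1\big]$ by placing the $t$-selector in rows $d_{embed}-2,d_{embed}-1$ (cosine/sine from $h_t$ against constants $\sqrt\alpha\cos(t_1\pi/2\ell),\sqrt\alpha\sin(t_1\pi/2\ell)$ loaded into $K$ via the constant coordinate of $h_s$), and bundling the $s$-selector and bias into the single scalar $(Kh_s)_{d_{embed}}=\alpha(\mathcal I_s\cdot\mathcal I_{t_2})-2\alpha$ read against $(Qh_t)_{d_{embed}}=1$. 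Both forms vanish exactly at $(t_1,t_2)$ and drop below $-\alpha(1-\cos(\tfrac{\pi}{2\ell}))=\Omega(\alpha/\ell^2)$ off the diagonal because $\mathcal I_a\cdot\mathcal I_b\in(0,1]$, so after adding the data term and applying ReLU the off-diagonal scores are killed by the same $\alpha=O(d_{embed}^3\kappa^2\ell^2 U^2)$ threshold. Your entrywise bound $|\langle Q^{data}h_t,K^{data}h_s\rangle|\le(d_{embed}-3)(d_{embed}\kappa U)^2$ is in fact slightly tighter than the paper's $d_{embed}^4\kappa^2U^2$ (which passes through $\|\cdot\|_{1,1}$), so your weight bound $O(d_{embed}^3\kappa^2\ell^2U^2)$ is a hair sharper while still matching the stated $O(d_{embed}^4\kappa^2\ell^2U^2)$. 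The main trade-off: the paper's rotation--projection construction is a single linear map on the positional coordinates and does not need to exploit the rank-one degeneracy of the constant row, whereas your additive design squeezes two pieces of data (the $s$-selector and the $-2\alpha$ bias) into the one scalar slot of the last row, which you correctly justify by noting the constant coordinate acts as a degenerate rank-one factor; this is the genuinely clever step, and you flag it explicitly. Both proofs are sound.
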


Lemma \ref{lemmaIA} is proved in Appendix \ref{app:proofIA}. Lemma \ref{lemmaIA} is called an interaction lemma, which allows tokens to interact and therefore outputs a pair-wise interaction result.

\begin{lemma} [Gating Lemma] \label{lemmagate}
Let $d_{embed}\geq 5$ and $H=[h_t]_{1\leq t\leq\ell}\in\mathbb{R}^{d_{embed}\times\ell}$, be an embedding matrix such that $h_t^{(d_{embed}-2):(d_{embed}-1)}=(\mathcal{I}_t^1,\mathcal{I}_t^2)=\mathcal{I}_t$ and $h^{d_{embed}}_t=1$. Then for any $r_1$ and $r_2$ with $1\leq r_1\leq r_2\leq d_{embed}-3$ and any $k_1,k_2$ with $1\leq k_1,k_2\leq \ell$, there exist both two-layer feed-forward networks $({\rm FFN})$ such that
\begin{equation} \label{eq:gate1}
     \text{\rm FFN}_1(h_t) = 
   \begin{cases} 
    h_t & \text{if}\text{  } t\in\{1,\cdots,k_1\} \\
      \begin{bmatrix}

      (h_t)_{1} \\
          \vdots \\
          (h_t)_{r_1-1} \\ 
          \mathbf{0} \\
          (h_t)_{r_2+1} \\
          \vdots \\
          (h_t)_{d_{embed}-3} \\ 
          \mathcal{I}_t^1 \\
          \mathcal{I}_t^2 \\
          1
      \end{bmatrix} & \text{otherwise}\text{ }  \\
   \end{cases}
\end{equation}
and
\begin{equation} \label{eq:gate2}
    \text{\rm FFN}_2(h_t) = 
   \begin{cases} 
 h_t & \text{if}\text{  } t\in\{k_2,\cdots,\ell\} \\
      \begin{bmatrix}
      (h_t)_{1} \\
          \vdots \\
          (h_t)_{r_1-1} \\
          \mathbf{0} \\
          (h_t)_{r_2+1} \\
          \vdots \\
          (h_t)_{d_{embed}-3} \\
          \mathcal{I}_t^1 \\
          \mathcal{I}_t^2 \\
          1
      \end{bmatrix} & \text{otherwise}\text{ }  \\
   \end{cases}
\end{equation}
Additionally, we have $\|\theta_{\rm FFN}\|_{\infty}\leq O(\ell\|H\|_{\infty,\infty})$.
\end{lemma}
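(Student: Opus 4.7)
The plan is to implement each gate as a two-layer ReLU network whose hidden layer simultaneously (i) reads the positional-encoding rows of $h_t$ to decide whether token $t$ belongs to the keep range, and (ii) combines this decision with the data rows via the identity $v=\sigma(v)-\sigma(-v)$. Since the hidden layer sees the full token $h_t$ in one pre-activation, I will bake the decision in by adding a large multiple of a suitable linear combination of $\mathcal{I}_t^1,\mathcal{I}_t^2$, so that the same ReLU both enables/disables the gate and passes the data through.

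The key technical step is to turn the encodings into a strictly monotone scalar with uniform gap $\Omega(1/\ell)$, so that the required multiplier is only $O(\ell)$. I take
\[
f_t := \mathcal{I}_t^1 - \mathcal{I}_t^2 = \sqrt{2}\cos\!\bigl(t\pi/(2\ell) + \pi/4\bigr).
\]
For $t\in\{1,\dots,\ell\}$ the argument lies in $(\pi/4,3\pi/4)$, so $|f_t'|\geq \pi/(2\ell)$ uniformly and the mean value theorem gives $\Delta := \min_t(f_t - f_{t+1}) = \Omega(1/\ell)$. For $\text{FFN}_1$ set $T_1 := (f_{k_1}+f_{k_1+1})/2$, so $f_t - T_1 \geq \Delta/2$ for $t \leq k_1$ and $f_t - T_1 \leq -\Delta/2$ for $t \geq k_1+1$. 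With $U := \|H\|_{\infty,\infty}$ and $M := 2U/\Delta = O(\ell U)$, for each gated row $i\in\{r_1,\dots,r_2\}$ I include two hidden neurons with pre-activations $\pm h_t^{(i)} + M(f_t-T_1)$, and let $W_2$ write half their difference into row $i$ of the output. When $t\leq k_1$ the additive term is $\geq U$: both pre-activations are nonnegative, the ReLUs act linearly, and the output recovers $h_t^{(i)}$. When $t\geq k_1+1$ the additive term is $\leq -U$: both pre-activations are nonpositive, the ReLUs vanish, and the output is $0$. For every other row $j$ (including the positional-encoding rows $d_{embed}-2,d_{embed}-1$ and the constant row $d_{embed}$) I pass $h_t^{(j)}$ through via $\sigma(h_t^{(j)})-\sigma(-h_t^{(j)})=h_t^{(j)}$.

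The construction of $\text{FFN}_2$ is symmetric: use threshold $T_2 := (f_{k_2-1}+f_{k_2})/2$ and additive term $M(T_2-f_t)$, so rows $r_1,\dots,r_2$ now survive exactly when $t\geq k_2$. In both FFNs one has $L_{\text{FFN}}=2$, hidden width $2d_{embed}$, and the only large entries sit in $W_1$ (equal to $\pm M$) and $b_1$ (equal to $\mp M T_j$), yielding $\|\theta_{\text{FFN}}\|_\infty = O(M) = O(\ell U)$, matching the stated bound. The one delicate point is the choice of $f_t$: a sloppier linear combination --- for instance $\mathcal{I}_t^1$ alone --- has gap only $\Omega(1/\ell^2)$ near the endpoints of $\{1,\dots,\ell\}$ and would force $M=O(\ell^2 U)$, violating the claimed weight bound.
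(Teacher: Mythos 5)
Your proof is correct, and it takes a genuinely different route from the paper's, one that is actually more robust. The paper's construction uses a single ReLU neuron per gated row: $W_1$ adds $C\,\mathcal{I}_t\cdot\mathbf{v}$ to row $r_i$, the ReLU zeroes the entry when $\mathcal{I}_t\cdot\mathbf{v}<0$, and then the \emph{linear} second layer subtracts $C\,\mathcal{I}_t\cdot\mathbf{v}$ back. This only works on the branch where the ReLU fired: on the other branch, $(W_2 z_1)_{r_i}=0 - C\,\mathcal{I}_t\cdot\mathbf{v} = -C\,\mathcal{I}_t\cdot\mathbf{v}>0$, a nonzero residual the paper silently drops. (The paper's pass-through of the ungated rows via $\sigma((h_t)_j)=(h_t)_j$ likewise needs $(h_t)_j\ge 0$, which is not guaranteed.) Your double-neuron trick $\tfrac12\bigl[\sigma(v+M(f_t-T)) - \sigma(-v+M(f_t-T))\bigr]$ sidesteps both issues: when both ReLUs fire the large bias $M(f_t-T)$ cancels exactly, and when neither fires the output is identically $0$; the ungated rows are passed by the same $\sigma(v)-\sigma(-v)=v$ identity. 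The cost is hidden width $2d_{embed}$ rather than the paper's $d_{embed}$, which the lemma itself does not forbid but which would change the $w_{\rm FFN}$ parameter in Theorem~\ref{Thm} by a factor of~$2$. Both your fixed threshold functional $f_t=\mathcal{I}_t^1-\mathcal{I}_t^2$ and the paper's adaptive direction $\mathbf{v}$ perpendicular to the bisector of $\mathcal{I}_{k_1},\mathcal{I}_{k_1+1}$ give the required $\Omega(1/\ell)$ separation at the cut, so both yield $\|\theta_{\rm FFN}\|_\infty = O(\ell\|H\|_{\infty,\infty})$; your observation that a careless choice (e.g.\ $\mathcal{I}_t^1$ alone) degrades to $\Omega(1/\ell^2)$ near $t=1$ is a correct and worthwhile caution.
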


Lemma \ref{lemmagate} is proved in \ref{app:proofgate}. Lemma \ref{lemmagate} uses the feedforward layers to set certain rows in specified tokens zero.

\begin{lemma} [Decrementing Lemma] \label{lemmadecrement}
Let $d_{embed}\geq 5$ and $H=[h_t]_{1\leq t\leq\ell}\in\mathbb{R}^{d_{embed}\times\ell}$, be an embedding matrix such that $h_t^{(d_{embed}-2):(d_{embed}-1)}=(\mathcal{I}_t^1,\mathcal{I}_t^2)=\mathcal{I}_t$ and $h^{d_{embed}}_t=1$. Then for any $r_1,r_2$ with  $1\leq r_1\leq r_2\leq d_{embed}-3$ and any $k_1,k_2$ with $1\leq k_1,k_2\leq \ell$ and any $M>0$, there exists a six-layer residual feed-forward network $({\rm FFN})$ such that
\[ \text{\rm FFN}(h_t) + h_t = 
   \begin{cases} 
    h_t & \text{if}\text{  } t\in\{1,\cdots,k_1\}\cup\{k_2,\cdots,\ell\} \\
      \begin{bmatrix}
      (h_t)_{1} \\
          \vdots \\
          (h_t)_{r_1-1} \\
          (h_t)_{r_1}-M \\
          \vdots \\
          (h_t)_{r_2}-M \\
          (h_t)_{r_2+1} \\
          \vdots \\
          (h_t)_{d_{embed}-3} \\
          \mathcal{I}_t \\
          1
      \end{bmatrix} & \text{otherwise}\text{ }  \\
   \end{cases}
\]
Additionally, we have $\|\theta_{\rm FFN}\|_{\infty}\leq O(\ell M)$.
\end{lemma}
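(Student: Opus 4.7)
The plan is to have the residual FFN compute, token-wise, a $\{0,1\}$-valued indicator $g_t$ of the event $k_1 < t < k_2$ from the positional-encoding rows of $h_t$, and then broadcast $-Mg_t$ into rows $r_1,\ldots,r_2$ of the output while outputting zero elsewhere, so that after the skip connection only the targeted rows of the targeted tokens are decremented by exactly $M$.

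The core of the construction builds $g_t$ from the positional encoding, which is the only information inside a single $h_t$ that depends on $t$. The coordinates $\mathcal{I}_t^1 = \cos(t\pi/(2\ell))$ and $\mathcal{I}_t^2 = \sin(t\pi/(2\ell))$ are strictly monotone in $t$ on $\{1,\ldots,\ell\}$, and for every index $k$ at least one of the two consecutive gaps $|\mathcal{I}^i_{k+1}-\mathcal{I}^i_k|$, $i\in\{1,2\}$, is of order $\Omega(1/\ell)$. Picking such an index $i_1$ at $k_1$, letting $a_1 := \tfrac12(\mathcal{I}^{i_1}_{k_1} + \mathcal{I}^{i_1}_{k_1+1})$, and choosing the slope $C = \Theta(\ell)$ so that $C\,|\mathcal{I}^{i_1}_t - a_1| \ge 1$ whenever $t\notin\{k_1,k_1+1\}$, the function
\[
u_t := \sigma\!\bigl(C(\mathcal{I}^{i_1}_t - a_1)\bigr) - \sigma\!\bigl(C(\mathcal{I}^{i_1}_t - a_1) - 1\bigr) \in [0,1]
\]
is a sharp step: $u_t = 0$ for $t \le k_1$ and $u_t = 1$ for $t \ge k_1+1$. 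This is a one-hidden-layer sub-FFN using only rows $d_{embed}-2,\,d_{embed}-1,\,d_{embed}$ of $h_t$, with weights of magnitude $O(\ell)$. An analogous construction at $k_2$ produces $v_t \in \{0,1\}$ with $v_t = 1$ iff $t \le k_2-1$. The AND is then $g_t := \sigma(u_t + v_t - 1)$, a single additional ReLU, and a final linear layer scales by $-M$ and writes the result into rows $r_1,\ldots,r_2$, zero-padding all other output coordinates so that the residual preserves them. Counting linear layers gives two for $u_t$, two for $v_t$, one for the AND, and one for the projection-and-scaling, for a total of six; the largest weights are the sharpening constant of order $\ell$ and the final factor $M$, whose product bounds $\|\theta_{\rm FFN}\|_\infty = O(\ell M)$.

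The main obstacle is the saturation argument, which fails near $t = 1$ or $t = \ell$ along whichever of $\cos$ or $\sin$ has a vanishing derivative at that endpoint. Selecting, near each $k_i$, the coordinate with the larger local gap keeps the separation bounded below by $\Omega(1/\ell)$ and hence $C$ bounded above by $O(\ell)$; the elementary identity $\sin((t+1)\pi/(2\ell)) - \sin(t\pi/(2\ell)) = 2\cos(\tfrac{(2t+1)\pi}{4\ell})\sin(\tfrac{\pi}{4\ell})$ and its cosine counterpart confirm that at least one of the two coordinates always attains this gap. A minor but necessary bookkeeping step is to verify that no layer of the sub-FFN writes into the positional-encoding rows or the constant-$1$ row, which is ensured by zero-padding the corresponding rows of every output projection, so the residual leaves those rows intact on every token, including those outside $(k_1, k_2)$.
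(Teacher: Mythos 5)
Your proof is correct but takes a genuinely different route from the paper's. The paper first uses one FFN layer to overwrite the data rows of \emph{every} token with the constant $M$, then invokes its Gating Lemma (Lemma~\ref{lemmagate}) twice — once to zero out rows $r_1{:}r_2$ for tokens past $k_2$, once for tokens up to $k_1$ — and finally negates rows $r_1{:}r_2$ with one more linear layer. Each gating step is a two-layer FFN, so the total is $1 + 2 + 2 + 1 = 6$ layers, and the gating threshold must dominate the current data magnitude $\|H\|_\infty = M$, which is what forces the paper's weight bound $O(\ell M)$. You instead build the interval indicator $g_t = \mathbf{1}[k_1 < t < k_2]$ directly from the positional encoding via the hard-sigmoid trick $\sigma(x) - \sigma(x-1)$ applied at each endpoint, then an AND via $\sigma(u_t + v_t - 1)$, and only at the final linear layer multiply by $-M$ and scatter into rows $r_1{:}r_2$. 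This cleanly separates the sharpening constant $C = O(\ell)$ (used in the step functions) from the scale $M$ (used only in the last projection), so your construction actually achieves the sharper bound $\|\theta_{\rm FFN}\|_\infty = O(\max(\ell, M))$, of which the stated $O(\ell M)$ is a loose corollary. Your gap argument — that between any two consecutive indices at least one of the $\cos$/$\sin$ coordinates moves by $\Omega(1/\ell)$, combined with global monotonicity of each coordinate on $(0,\pi/2]$ — is sound and plays the same role as the paper's choice of a projection direction $\vb$ in the Gating Lemma. Two small cautions: first, "whose product bounds $\|\theta_{\rm FFN}\|_\infty$" is the wrong framing — the sup-norm of the weights is a max over entries, not a product, though the stated $O(\ell M)$ remains true; second, your layer accounting ("two for $u_t$, two for $v_t$, …") implicitly computes $u_t$ and $v_t$ sequentially with pass-through of the positional rows, which is indeed what forces six layers at width $d_{embed}$ — worth saying explicitly since a wider hidden layer would allow computing them in parallel and fewer layers.
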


Lemma \ref{lemmadecrement} is proved  in Appendix  \ref{app:proofdecrement}. Lemma \ref{lemmadecrement} utilizes feedforward layers to substract $M$ from certain rows in specified tokens.

We next state the Dudley Entropy Integral \citep{dudley1967sizes} which is used to derive \eqref{eq:RADbound}, and refer its proof to \citet{chen2020distribution} and \citet{van1996weak}.

\begin{lemma}[Dudley Entropy Integral] \label{lemmaRAD}
Let $M>0$. Suppose $\sup_{f\in\mathcal{F}}\|f\|_{L^\infty}\leq M$ for some function class $\mathcal{F}$. Then
\begin{equation}
    \EE_{x,\xi}\left[\sup_{f\in\mathcal{F}}\frac{1}{n}\sum_{i=1}^n \xi_if(x_i)\right]\leq \inf_{\epsilon>0} \left(2\epsilon+\frac{12}{\sqrt{n}}\int_{\epsilon}^{M}\sqrt{\log\mathcal{N}(\delta,\mathcal{F},\|\cdot\|_{L^\infty})}\text{ }d\delta\right).
\end{equation}
where $\mathcal{N}(\delta,\mathcal{T},\|\cdot\|_{L^\infty})$ is the $\delta$-covering number of $\mathcal{F}$ with respect to $L^{\infty}$ norm.
    
\end{lemma}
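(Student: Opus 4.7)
The plan is to establish this classical result by a chaining argument along a dyadic sequence of $L^\infty$-covers, then convert the resulting telescoping sum into the Dudley integral. Fix $\epsilon>0$ and define the scales $\delta_k = M \cdot 2^{-k}$ for $k=0,1,\ldots,K$, where $K$ is the smallest integer with $\delta_K\le \epsilon$. Let $\mathcal{F}_k$ be a minimal $\delta_k$-cover of $\mathcal{F}$ in $\|\cdot\|_{L^\infty}$, so $|\mathcal{F}_k|=\mathcal{N}(\delta_k,\mathcal{F},\|\cdot\|_{L^\infty})$, and for each $f\in\mathcal{F}$ let $\pi_k(f)\in\mathcal{F}_k$ be a nearest element, so $\|f-\pi_k(f)\|_{L^\infty}\le \delta_k$. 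Since $\|f\|_{L^\infty}\le M=\delta_0$, one may take $\pi_0(f)\equiv 0$.

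Next I would write the telescoping decomposition
\[
f \;=\; \sum_{k=1}^K \bigl(\pi_k(f)-\pi_{k-1}(f)\bigr) \;+\; \bigl(f-\pi_K(f)\bigr),
\]
and plug this into the Rademacher sum. The residual contributes at most $\|f-\pi_K(f)\|_{L^\infty}\le \epsilon$ uniformly in $f$, so its expectation over $\xi$ is at most $\epsilon$ after taking the supremum and using $|\xi_i|=1$. At each level $k$, the increments $\pi_k(f)-\pi_{k-1}(f)$ lie in a finite set of size at most $|\mathcal{F}_k|\cdot|\mathcal{F}_{k-1}|\le \mathcal{N}(\delta_k,\mathcal{F},\|\cdot\|_{L^\infty})^2$ and have $L^\infty$ norm at most $\delta_k+\delta_{k-1}=3\delta_k$.

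Then I would apply Massart's finite class lemma: for any finite collection of $N$ functions $g$ with $\|g\|_{L^\infty}\le B$, one has $\EE_\xi\sup_g \tfrac{1}{n}\sum_i \xi_i g(x_i) \le B\sqrt{2\log N /n}$, which uses $L^\infty$ to bound the $L^2(P_n)$ norm and Hoeffding's lemma on sub-Gaussian maxima. At level $k$ this yields a contribution of at most $3\delta_k\sqrt{2\log(|\mathcal{F}_k|^2)/n}\le 6\delta_k\sqrt{\log \mathcal{N}(\delta_k,\mathcal{F},\|\cdot\|_{L^\infty})/n}$. Using monotonicity of $\mathcal{N}(\cdot,\mathcal{F},\|\cdot\|_{L^\infty})$ in its first argument together with $\delta_k-\delta_{k+1}=\delta_k/2$, I bound the discrete sum by an integral:
\[
\sum_{k=1}^K 6\delta_k \sqrt{\tfrac{\log \mathcal{N}(\delta_k,\mathcal{F},\|\cdot\|_{L^\infty})}{n}} \;\le\; \frac{12}{\sqrt{n}}\int_\epsilon^M \sqrt{\log \mathcal{N}(\delta,\mathcal{F},\|\cdot\|_{L^\infty})}\, d\delta.
\]
Combining with the residual $\epsilon$ (doubled to absorb rounding on the upper endpoint $\delta_K$) and taking the infimum over $\epsilon>0$ gives the stated bound.

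The main technical point to watch is the bookkeeping of constants: one must choose $\pi_0\equiv 0$ (or any fixed element) so the chain sum starts cleanly, ensure the $L^\infty$ increment bound $3\delta_k$ is correctly pushed through Massart's inequality, and verify that the dyadic-to-integral passage produces exactly the factor $12$ rather than a larger constant. The rest is routine; no manifold geometry, kernel structure, or attention-specific machinery is involved, which is why the result can be cited directly from \citet{van1996weak} and \citet{chen2020distribution}.
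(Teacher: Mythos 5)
The paper does not prove Lemma~\ref{lemmaRAD}; it states it as a classical result and cites \citet{dudley1967sizes}, \citet{chen2020distribution}, and \citet{van1996weak} for the proof, so there is no in-paper argument to compare against. Your chaining proof is precisely the standard argument those references give: dyadic $L^\infty$-covers, telescoping through nearest-point projections with $\pi_0\equiv 0$, Massart's finite-class lemma applied to the increment sets of size at most $\mathcal{N}(\delta_k,\mathcal{F},\|\cdot\|_{L^\infty})^2$ with $L^\infty$-diameter $3\delta_k$, and a Riemann-sum comparison to the entropy integral. The structure and all the key inequalities are correct.

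The one place where you are hand-waving is the final constant matching, and you flag it yourself. The $2\epsilon$ term does not actually come from ``doubling the residual to absorb rounding on the upper endpoint'': the residual $\|f-\pi_K(f)\|_{L^\infty}\le\delta_K\le\epsilon$ already contributes only $\epsilon$. The genuine subtlety is at the lower endpoint of the integral: the dyadic Riemann sum gives $\sum_{k=1}^{K}6\delta_k\sqrt{\log\mathcal{N}(\delta_k)/n}\le\frac{12}{\sqrt{n}}\int_{\delta_{K+1}}^{\delta_1}\sqrt{\log\mathcal{N}(\delta)}\,d\delta$, and $\delta_{K+1}$ is strictly below $\epsilon$, so this does not literally sit below $\frac{12}{\sqrt{n}}\int_\epsilon^{M}\sqrt{\log\mathcal{N}(\delta)}\,d\delta$. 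The fix used in standard treatments is to prove the bound for $\epsilon$ at a dyadic scale $\delta_{K+1}$ (where the residual then contributes $\delta_K=2\delta_{K+1}=2\epsilon$, which is where the factor $2$ really originates) and then argue that this suffices, or to slightly adjust the truncation level and accept the factor $12$ in front of the integral. Since you explicitly defer this bookkeeping to the cited references, your sketch is acceptable, but the attribution of the factor $2$ to the residual rather than to the truncation/integral endpoint should be corrected if you intend the proof to be self-contained.
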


\section{Deferred Proofs} \label{app:deferredproofs}

\subsection{Proof of Lemma~\ref{ThmApprox1} }
\label{app:proofpropa1}
\begin{proof}  [Proof of Lemma~\ref{ThmApprox1}]  

First, we embed the sample $\fs=\{(\xb_{i},y_i)_{i=1}^n;\xb_{n+1}\}$ into the embedding matrix $H$ such that
\begin{equation*} 
    {\rm PE}+{\rm E}\left(\fs \right) = H = 
\begin{bmatrix}
    \xb_{1} & \cdots & \xb_{n} & \xb_{n+1} & \mathbf{0}  \\
    y_1 & \cdots & y_n & 0 & \mathbf{0} \\ 
    0 & \cdots & \cdots & \cdots & 0 \\
    \mathcal{I}_1 & \cdots & \cdots & \cdots & \mathcal{I}_{2n+1} \\
    1 & \cdots & \cdots & \cdots & 1
\end{bmatrix}
\in \mathbb{R}^{d_{embed}\times \ell}.
\end{equation*}
We denote the $i$-th column/token by $h_i$ in the following proof. Throughout the proof, we let $U=\|H\|_{\infty,\infty}$, which is the largest entry-wise magnitude of the matrix.

Next, let us demonstrate the construction of $\cK_h(\fs)$  step-by-step using our fundamental lemma in Appendix \ref{app:fundamentallemma}.

\vspace{5mm}
\noindent
$\bullet$ Copying of $(\xb_{n+1})_i$, $1\leq i\leq D$, to the next column (constant multiplication by 1).
\\
\noindent
Let us define each attention head $A_i$, $1\leq i\leq D$, with $V_i=\eb_i\eb_{d_{embed}}^{\top}$, and data kernel in the form 

\begin{equation*}
Q^{data}_i = 
\left[\begin{array}{cccccc|ccc} 
	0 & & & &  & & 0 & 0 & 0 \\ 
     & \ddots & & & & & \vdots & \vdots & \vdots \\
	 & & \ddots & & & & 0 & 0 & 1 \\ 
     & & & 0 & & &  \vdots & \vdots & \vdots\\
     & & & & 0 & & 0 & 0 & 0 \\
      & & & &  & 0 & 0 & 0 & 1
\end{array}\right], 
\quad
K^{data}_i = 
\left[\begin{array}{cccccc|ccc} 
	1 & & & &  & & 0 & 0 & 0 \\ 
     & \ddots & & & & & \vdots & \vdots & \vdots \\
	 & & \ddots & & & & 0 & 0 & 0 \\ 
     & & & 1 & & &  \vdots & \vdots & \vdots\\
     & & & & 0 & & 0 & 0 & 0 \\
      & & & &  & 0 & 0 & 0 & M
\end{array}\right] 
\end{equation*}
where $Q^{data}_i,K^{data}_i\in\mathbb{R}^{(D+2)\times (D+5)}$. The $Q^{data}_i$ has the $i$-th position and last position of the last column equal to $1$. By the Interaction Lemma \ref{lemmaIA}, we can construct $A_i$, $1\leq i\leq D$, such that $h_{n+2}$ interacts with $h_{n+1}$ only, i.e.,
\begin{equation*}
    [A_i(H)]_{n+2} = \sigma\langle Q^{data}_ih_{n+2}, K^{data}_ih_{n+1}\rangle V_ih_{n+1} = \sigma((\xb_{n+1})_i+M)\eb_i = ((\xb_{n+1})_i+M)\eb_i
\end{equation*}
and $[A_i(H)]_t=0$ when $t\neq n+2$.  Then the residual multi-head attention yields 

\begin{equation*}
    {\rm MHA}(H)+H = \begin{bmatrix}
    \xb_{1} & \cdots & \xb_{n} & \xb_{n+1} & \xb_{n+1}+M & \mathbf{0}  \\
    y_1 & \cdots & y_n & 0 & 0 & \mathbf{0} \\ 
    0 & \cdots & \cdots & \cdots & \cdots & 0 \\
    \mathcal{I}_1 & \cdots & \cdots & \cdots & \cdots & \mathcal{I}_{2n+1} \\
    1 & \cdots & \cdots & \cdots & \cdots & 1
\end{bmatrix}.
\end{equation*}

\noindent
Similarly, we can copy $(\xb_{n+1})_i$, $1\leq i\leq D$, to the $k$-th column for $k= {n+2},\cdots,2n+1$. This gives

\begin{equation*}
    {\rm MHA}(H)+H = \begin{bmatrix}
    \xb_{1} & \cdots & \xb_{n} & \xb_{n+1} & \xb_{n+1}+M & \cdots & \xb_{n+1}+M  \\
    y_1 & \cdots & y_n & 0 & 0 & \cdots & 0 \\ 
    0 & \cdots & \cdots & \cdots & \cdots & \cdots &  0 \\
    \mathcal{I}_1 & \cdots & \cdots & \cdots & \cdots & \cdots & \mathcal{I}_{2n+1} \\
    1 & \cdots & \cdots & \cdots & \cdots & \cdots & 1
\end{bmatrix}.
\end{equation*}

\noindent
Then we can apply Lemma \ref{lemmadecrement} to subtract off the constant $M$ to get
\begin{equation*}
    H_1:={\rm B}_1(H) = \begin{bmatrix}
    \xb_{1} & \cdots & \xb_{n} & \xb_{n+1} & \xb_{n+1} & \cdots & \xb_{n+1}  \\
    y_1 & \cdots & y_n & 0 & 0 & \cdots & 0 \\ 
    0 & \cdots & \cdots & \cdots & \cdots & \cdots &  0 \\
    \mathcal{I}_1 & \cdots & \cdots & \cdots & \cdots & \cdots & \mathcal{I}_{2n+1} \\
    1 & \cdots & \cdots & \cdots & \cdots & \cdots & 1
\end{bmatrix}.
\end{equation*}

\noindent
In total, this process needs ${\rm B}_1\in\mathcal{B}(nD,6,D+5)$. The upper bound of the weights parameter in ${\rm B}_1$ is $\|\theta_{{\rm B}_1}\|_{\infty}\leq O(D^4\ell^2U^2b^2)$.

\vspace{10mm}
\noindent
$\bullet$ Implementation of $(\xb_{n+1})_i - (\xb_{j})_i$ for $1\leq i\leq D$ and $1\leq j\leq n$

\noindent
Let us define each attention head $A_{i,j}$ with $V_i=\eb_i\eb_{d_{embed}}^{\top}$, and data kernel in the form 
\begin{equation*}
Q^{data}_{i} = 
\left[\begin{array}{cccccc|ccc} 
	0 & & & &  & & 0 & 0 & 0 \\ 
     & \ddots & & & & & \vdots & \vdots & \vdots \\
	 & & \ddots & & & & 0 & 0 & -1 \\ 
     & & & 0 & & &  \vdots & \vdots & \vdots\\
     & & & & 0 & & 0 & 0 & 0 \\
      & & & &  & 0 & 0 & 0 & 1
\end{array}\right], 
\quad
K^{data}_{i} = 
\left[\begin{array}{cccccc|ccc} 
	1 & & & &  & & 0 & 0 & 0 \\ 
     & \ddots & & & & & \vdots & \vdots & \vdots \\
	 & & \ddots & & & & 0 & 0 & 0 \\ 
     & & & 1 & & &  \vdots & \vdots & \vdots\\
     & & & & 0 & & 0 & 0 & 0 \\
      & & & &  & 0 & 0 & 0 & M
\end{array}\right] 
\end{equation*}
where $Q^{data}_{i},K^{data}_{i}\in\mathbb{R}^{(D+2)\times (D+5)}$. The $Q^{data}_{i}$ has the $i$-th position of last column equals to $-1$ and last position of the last column equals to $1$. By the Interaction Lemma~\ref{lemmaIA}, we can construct $A_{i,j}$ such that $h_{n+1+j}$ interacts with $h_{j}$ only, i.e.,

\begin{equation*}
    A_{i,j}(h_{n+1+j}) = \sigma\langle Q^{data}_{i}h_{n+1+j}, K^{data}_{i}h_{j}\rangle V_ih_{j} = \sigma(-(\xb_{j})_i+M)\eb_i = (-(\xb_{j})_i+M)\eb_i
\end{equation*}
and $A_{i,j}(h_{t})=0$ for $t\neq n+1+j$, where $M\geq b\geq \|\xb\|_{\infty}$. Then the residual multi-head attention yields
\begin{equation*}
    {\rm MHA}(H_1) + H_1 = \begin{bmatrix}
    \xb_{1} & \cdots & \xb_{n} & \xb_{n+1} & \xb_{n+1}-\xb_1+M & \cdots & \xb_{n+1}-\xb_n+M  \\
    y_1 & \cdots & y_n & 0 & \cdots & \cdots & 0  \\ 
    0 & \cdots & \cdots & \cdots & \cdots & \cdots &  0 \\
    \mathcal{I}_1 & \cdots & \cdots & \cdots & \cdots & \cdots & \mathcal{I}_{2n+1} \\
    1 & \cdots & \cdots & \cdots & \cdots & \cdots & 1
\end{bmatrix}.
\end{equation*}

\noindent
Then we can apply Lemma \ref{lemmadecrement} to subtract off the constant $M$ to get 

\begin{equation*}
    H_2:={\rm B_2}(H_1) = \begin{bmatrix}
    \xb_{1} & \cdots & \xb_{n} & \xb_{n+1} & \xb_{n+1}-\xb_1 & \cdots & \xb_{n+1}-\xb_n  \\
    y_1 & \cdots & y_n & 0 & \cdots & \cdots & 0  \\ 
    0 & \cdots & \cdots & \cdots & \cdots & \cdots &  0 \\
    \mathcal{I}_1 & \cdots & \cdots & \cdots & \cdots & \cdots & \mathcal{I}_{2n+1} \\
    1 & \cdots & \cdots & \cdots & \cdots & \cdots  & 1
\end{bmatrix}.
\end{equation*}
\noindent
In total, this process needs ${\rm B}_2\in\mathcal{B}(Dn,6,D+5)$. The upper bound of the weights parameter in ${\rm B}_2$ is $\|\theta_{{B}_2}\|\leq O(D^4\ell^2U^2b^2)$.

\vspace{10mm}
\noindent
$\bullet$ Implementation of $-\frac{\|\xb_{n+1}-\xb_j\|^2}{h^2}$ for $1\leq j\leq n$

\noindent
Let us define each attention head $A_{j}$ with $V_j=\eb_{D+1}\eb_{d_{embed}}^{\top}$, and data kernel in the form 
\begin{equation*}
Q^{data}_{j} = 
\left[\begin{array}{cccccc|ccc} 
	-\frac{1}{h} & & & &  & & 0 & 0 & 0 \\ 
     & \ddots & & & & & \vdots & \vdots & \vdots \\
	 & & \ddots & & & & 0 & 0 & 0 \\ 
     & & & -\frac{1}{h} & & &  \vdots & \vdots & \vdots\\
     & & & & 0 & & 0 & 0 & 0 \\
      & & & &  & 0 & 0 & 0 & 1
\end{array}\right], 
\quad 
K^{data}_{j} = 
\left[\begin{array}{cccccc|ccc} 
	\frac{1}{h} & & & &  & & 0 & 0 & 0 \\ 
     & \ddots & & & & & \vdots & \vdots & \vdots \\
	 & & \ddots & & & & 0 & 0 & 0 \\ 
     & & & \frac{1}{h} & & &  \vdots & \vdots & \vdots\\
     & & & & 0 & & 0 & 0 & 0 \\
      & & & &  & 0 & 0 & 0 & M
\end{array}\right] 
\end{equation*}
where $Q^{data}_{j},K^{data}_{j}\in\mathbb{R}^{(D+2)\times (D+5)}$. By the Interaction Lemma \ref{lemmaIA}, we can construct $A_{j}$ such that $h_{n+1+j}$ interacts with itself only, i.e.,

\begin{align*}
    A_{j}(h_{n+1+j}) &= \sigma(\langle Q^{data}_{j}h_{n+1+j}, K^{data}_{j}h_{n+1+j}\rangle) V_jh_{n+1+j} \\ 
    &= \sigma\left(-\frac{1}{h^2}\langle \xb_{n+1}-\xb_j, \xb_{n+1}-\xb_j\rangle+M\right)\eb_{D+1} = \left(-\frac{\|\xb_{n+1}-\xb_j\|^2}{h^2}+M\right)\eb_{D+1}
\end{align*}
and $A_{j}(h_{t})=0$ for $t\neq n+1+j$, where $M\geq \max\left(\frac{4b^2D}{h^2},R\right)$. Then the residual multi-head 
attention yields
\begin{align*}
    H_3:&={\rm B}_3(H_2) 
    = {\rm MHA}(H_2) + H_2 \\
    &= \begin{bmatrix}
    \xb_{1} & \cdots & \xb_{n} & \xb_{n+1} & \xb_{n+1}-\xb_1 & \cdots & \xb_{n+1}-\xb_n & \\
    y_1 & \cdots & y_n & 0 & -\frac{\|\xb_{n+1}-\xb_1\|^2}{h^2}+M & \cdots & -\frac{\|\xb_{n+1}-\xb_n\|^2}{h^2}+M  \\ 
    0 & \cdots & \cdots & \cdots & \cdots & \cdots & 0 \\
    \mathcal{I}_1 & \cdots & \cdots & \cdots & \cdots & \cdots &  \mathcal{I}_{2n+1} \\
    1 & \cdots & \cdots & \cdots & \cdots & \cdots & 1
\end{bmatrix},
\end{align*}
where ${\rm B}_3\in\mathcal{B}(n,1,D+5)$. The upper bound of the weights parameter in ${\rm B}_3$ is $\|\theta_{{\rm B}_3}\|_{\infty}\leq O(D^4\ell^2U^2M^2)$.

\vspace{10mm}
\noindent
$\bullet$ copying $y_1,\cdots,y_n$ from columns $1,\cdots,n$ to columns $n+2,\cdots,2n+1$

\noindent
Similar as before, there exists ${\rm B}_4\in \mathcal{B}(n,1,D+5)$ such that 
\begin{align*}
    {\rm MHA}(H_3) + H_3 
     = \begin{bmatrix}
    \xb_{1} & \cdots & \xb_{n} & \xb_{n+1} & \xb_{n+1}-\xb_1 & \cdots & \xb_{n+1}-\xb_n & \\
    y_1 & \cdots & y_n & 0 & -\frac{\|\xb_{n+1}-\xb_1\|^2}{h^2}+M & \cdots & -\frac{\|\xb_{n+1}-\xb_n\|^2}{h^2}+M \\ 
    0 & \cdots & \cdots & \cdots & y_1+M & \cdots & y_n+M \\
    \mathcal{I}_1 & \cdots & \cdots & \cdots & \cdots & \cdots & \mathcal{I}_{2n+1} \\
    1 & \cdots & \cdots & \cdots & \cdots & \cdots & 1
\end{bmatrix}.
\end{align*}

\noindent
Then we can apply Lemma \ref{lemmadecrement} to subtract off the constant $M\geq \max\left(\frac{4b^2D}{h^2},R\right)$ to get 

\begin{equation*}
    H_4:={\rm B_4}(H_3) = \begin{bmatrix}
    \mathbf{x}_{1} & \cdots & \mathbf{x}_{n} & \mathbf{x}_{n+1} & \mathbf{x}_{n+1}-\mathbf{x}_1 & \cdots & \mathbf{x}_{n+1}-\mathbf{x}_n \\
    y_1 & \cdots & y_n & 0 & -\frac{\|\mathbf{x}_{n+1}-\mathbf{x}_1\|^2}{h^2} & \cdots & -\frac{\|\mathbf{x}_{n+1}-\mathbf{x}_n\|^2}{h^2}  \\ 
    0 & \cdots & \cdots & \cdots & y_1 & \cdots & y_n \\
    \mathcal{I}_1 & \cdots & \cdots & \cdots & \cdots & \cdots & \mathcal{I}_{2n+1} \\
    1 & \cdots & \cdots & \cdots & \cdots & \cdots & 1
\end{bmatrix}.
\end{equation*}

\noindent
In total, this process needs ${\rm B}_4\in\mathcal{B}(n,6,D+5)$. The upper bound of the weights parameter in ${\rm B}_4$ is $\|\theta_{{\rm B}_4}\|_{\infty}\leq O(D^4\ell^2M^2\cdot M^2)=O(D^4\ell^2M^4)$.

\vspace{10mm}
\noindent
$\bullet$ Implementation of $\cK_h(\fs)$

\noindent
In the last layer of transformer, we use Softmax instead of ReLU for this part of construction (with a mask of size $n$). Then, we can construct $V=\eb_{D+1}\eb^{\top}_{D+2}$ and  
\begin{equation*}
Q^{data} = 
\left[\begin{array}{ccccccc|ccc} 
	0 & & & &  & & &  0 & 0 & 0 \\ 
     & \ddots & & & & & & \vdots & \vdots & \vdots \\
     & & 0 & & & & & 0 & 0 & 1 \\ 
	 & & & \ddots & & & & 0 & 0 & 0 \\ 
     & & &  & 0 & & & \vdots & \vdots & \vdots\\
     & & & & & 0 & & 0 & 0 & 0 \\
      & & & &  & & 0 & 0 & 0 & 0
\end{array}\right], 
\quad 
K^{data} = 
\left[\begin{array}{ccccccc|ccc} 
	0 & & & &  & & &  0 & 0 & 0 \\ 
     & \ddots & & & & & & \vdots & \vdots & \vdots \\
     & & 1 & & & & & 0 & 0 & 0 \\ 
	 & & & \ddots & & & & 0 & 0 & 0 \\ 
     & & &  & 0 & & & \vdots & \vdots & \vdots\\
     & & & & & 0 & & 0 & 0 & 0 \\
      & & & &  & & 0 & 0 & 0 & 0
\end{array}\right] 
\end{equation*}
where $Q^{data}\in\mathbb{R}^{(D+2)\times(D+5)}$ has $(D+1)$-th position in the last column equals to $1$ and all the other entries are $0$, and $K^{data}\in\mathbb{R}^{(D+2)\times(D+5)}$ has $(D+1,D+1)$-th position equals to $1$ and all the other entries are $0$,
such that 
\begin{align*}
    [A(H_4)]_{n+1} &= \sum_{j=n+2}^{2n+1}{\rm softmax}\left(\langle Q^{data}h_{n+1}, K^{data}H_4\rangle\right)_{j} Vh_{j} \\ 
    &= \sum_{j=1}^n\frac{y_je^{-\|\mathbf{x}_{n+1}-\mathbf{x}_j\|^2/h^2}}{\sum_{j=1}^n e^{-\|\mathbf{x}_{n+1}-\mathbf{x}_j\|^2/h^2}}\cdot \eb_{D+1}=\cK_h(\{\xb_i,y_i\}_{i=1}^n;\xb_{n+1})\cdot \eb_{D+1}= \cK_h(\fs)\cdot \eb_{D+1}.
\end{align*}
Therefore, there exists ${\rm B}_5\in\mathcal{B}(1,1,D+5)$ such that 
\begin{align*}
    H_5:={\rm B_5}(H_4) = \begin{bmatrix}
    \mathbf{x}_{1} & \cdots & \mathbf{x}_{n} & \mathbf{x}_{n+1} & \mathbf{x}_{n+1}-\mathbf{x}_1 & \cdots & \mathbf{x}_{n+1}-\mathbf{x}_n \\
    y_1 & \cdots & y_n & \cK_h(\fs) & -\frac{\|\mathbf{x}_{n+1}-\mathbf{x}_1\|^2}{h^2} & \cdots & -\frac{\|\mathbf{x}_{n+1}-\mathbf{x}_n\|^2}{h^2} \\ 
    0 & \cdots & \cdots & \cdots & y_1 & \cdots & y_n \\
    \mathcal{I}_1 & \cdots & \cdots & \cdots & \cdots & \cdots & \mathcal{I}_{2n+1} \\
    1 & \cdots & \cdots & \cdots & \cdots & \cdots & 1 
\end{bmatrix}.
\end{align*}

The upper bound of the weights parameter in ${\rm B}_5$ is $\|\theta_{{\rm B}_5}\|_{\infty}\leq O(D^4\ell^2M^2\cdot 1)=O\left(D^4\ell^2M^2\right)$. 

Finally, we apply a decoding layer ${\rm DE}$ to output the element $\cK_h(\fs)$ as desired. The uniform upper bound for the weight parameters in ${\rm B}_5\circ {\rm B}_4\circ {\rm B}_3\circ {\rm B}_2\circ{\rm B}_1$ is $\kappa\leq O\left(D^4 \ell^2 M^4\right)\leq O\left(\frac{D^8\ell^2b^8R^4}{h^8}\right)=O\left(\frac{D^8 n^2b^8R^4}{h^8}\right)$.
\end{proof}

\subsection{Proof of Lemma~\ref{ThmApprox2}}
\label{app:proofpropa2}

\begin{proof} [Proof of Lemma~\ref{ThmApprox2}]

Lemma~\ref{ThmApprox2} estimates the bias of kernel manifold regression. Our kernel estimator uses the Gaussian kernel, which has infinite support. To deal with the infinite support of the Gaussian kernel, we decompose the integral to nearby regions and far-away regions. For the $\xb$ close to the center $\xb_{n+1}$, we use the Lipchitz property of $f$ to estimate the bias; For  the $\xb$ far from the center $\xb_{n+1}$, we use the Gaussian tail to bound the bias.

We first rewrite the bias in an integral form:
\begin{align*}
\bar{\cK}_h(f;\xb_{n+1}) -f(\xb_{n+1})
&= \frac{\mathbb{E}\left[K_h(\xb_{n+1}-\xb)f(\xb)\right]}{\mathbb{E}\left[K_h(\xb_{n+1}-\xb)\right]} -f(\xb_{n+1})
\\
&= \frac{\int K_h(\xb_{n+1}-\xb)f(\xb)d\xb}{\int K_h(\xb_{n+1}-\xb)d\xb} - f(\xb_{n+1}) \\
& = \frac{\int K_h(\xb_{n+1}-\xb)(f(\xb)-f(\xb_{n+1}))d\xb}{\int K_h(\xb_{n+1}-\xb)d\xb}.
\end{align*}

We consider the set of points on the manifold $\mathcal{M}$ which is $\tilde{h}$ distance to $\xb_{n+1}$:
% \[B_{\tilde{h}}(\xb_{n+1})\cap\mathcal{M}:=\{\xb\in\mathcal{M}:d_{\mathcal{M}}(\xb,\xb_{n+1})\leq \tilde{h}\}.\] 
\[B_{\tilde{h}}(\xb_{n+1}):=\{\xb\in\mathcal{M}:\|\xb-\xb_{n+1}\|\leq \tilde{h}=Ch\}.\] 
The choice of $\tilde{h}$ will be specified later in the proof.

So we can write 
\begin{align*}
     \bar{\cK}_h(f;\xb_{n+1}) -f(\xb_{n+1}) &= \frac{\int_{ B_{\tilde{h}}(\xb_{n+1})} K_h(\xb_{n+1}-\xb)(f(\xb)-f(\xb_{n+1}))d\xb}{\int K_h(\xb_{n+1}-\xb)d\xb} \\
     & + \frac{\int_{\mathcal{M}\setminus B_{\tilde{h}}(\xb_{n+1})} K_h(\xb_{n+1}-\xb)(f(\xb)-f(\xb_{n+1}))d\xb}{\int K_h(\xb_{n+1}-\xb)d\xb} \\
     &\leq \frac{\int_{ B_{\tilde{h}}(\xb_{n+1})} K_h(\xb_{n+1}-\xb)L(2\tilde{h})^{\alpha}d\xb}{\int K_h(\xb_{n+1}-\xb)d\xb} + \frac{2R\int_{\mathcal{M}\setminus B_{\tilde{h}}(\xb_{n+1})} K_h(\xb_{n+1}-\xb)d\xb}{\int K_h(\xb_{n+1}-\xb)d\xb} \\
     &\leq 4L\tilde{h}^{\alpha}+\frac{2R\int_{\mathcal{M}\setminus B_{\tilde{h}}(\xb_{n+1})} K_h(\xb_{n+1}-\xb)d\xb}{\int K_h(\xb_{n+1}-\xb)d\xb}.
\end{align*}

In the calculation above, we used the Lipchistz property of $f$ for the integral inside the ball $B_{\tilde{h}}(\xb_{n+1})$, where the geodesic distance and Euclidean distance are equivalent metrics. 
By Proposition 11 in \citep{maggioni2016multiscale}, when $\|\xb_{n+1}-\xb\|\le \tau_{\cM}/2$, we have $ d_{\cM}(\xb_{n+1},\xb)\leq 2\|\xb_{n+1}-\xb\|$.

%\zhai{We probably don't need this above if the ball is defined using Euclidean distance.} \wen{If we switch to Euclidean ball, we still need to use this lemma inside the ball} \zhai{Agree with that we need to use the lemma inside the ball}

We next bound the integral outside  the ball $B_{\tilde{h}}(\xb_{n+1})$. When $h$ is small, i.e. $h<\tau_{\cM}/2$, the integral satisfies
\begin{align*}
    \int_{\cM} h^{-d}K_h(\xb_{n+1}-\xb)d\xb&=\int_{\cM} h^{-d}e^{-\frac{\|\xb_{n+1}-\xb\|^2}{h^2}}d\xb\geq \int_{B_h(\xb_{n+1})} h^{-d}e^{-\frac{\|\xb_{n+1}-\xb\|^2}{h^2}}d\xb\\
    &\geq \int_{B_h(\xb_{n+1})}h^{-d}e^{-1}d\xb=h^{-d}e^{-1}C_Bh^d=e^{-1}C_B
\end{align*}
with  
\begin{equation}
C_B\geq \cos^d(\arcsin(\frac{h}{2\tau})) \geq \cos^d(\arcsin(\frac{1}{4})),
\label{eq:lemma3cb}
\end{equation}
according to \citep[Lemma 5.3]{niyogi2008finding}.

% For the second term, we can multiply the denominator by some normalizing factor $C_1h^{-d}$ 
% so that 
% \[\frac{1}{2}\leq \lim_{h\to 0}\int_{\mathcal{M}} C_1h^{-d}K_h(\xb_{n+1}-\xb)d\xb\leq \lim_{h\to 0}\int_{\mathcal{M}} C_1h^{-d}e^{-\frac{\|\xb_{n+1}-\xb\|^2}{h^2}}d\xb=1\]

Therefore, when $h<\tau/2$, the integral outside  the ball $B_{\tilde{h}}(\xb_{n+1})$ can be bounded as follows:
\begin{align*}
\frac{2R\int_{\mathcal{M}\setminus B_{\tilde{h}}(\xb_{n+1})} K_h(\xb_{n+1}-\xb)d\xb}{\int_{\cM} K_h(\xb_{n+1}-\xb)d\xb} 
   & = \frac{2R\int_{\mathcal{M}\setminus B_{\tilde{h}}(\xb_{n+1})} h^{-d}K_h(\xb_{n+1}-\xb)d\xb}{\int_{\cM} h^{-d}K_h(\xb_{n+1}-\xb)d\xb} \\
   & \leq  2eRC_B^{-1}\int_{\mathcal{M}\setminus B_{\tilde{h}}(\xb_{n+1})} h^{-d}K_h(\xb_{n+1}-\xb)d\xb \\
   &\leq 2eRC_B^{-1}\cdot(\rho_{\xb}(\cM)-\rho_{\xb}(B_{\tilde{h}}(\xb_{n+1})))\cdot h^{-d}e^{-\frac{\tilde{h}^2}{h^2}} \\
   & \overset{\text{let}\text{ }\tilde{h}=Ch}{=\joinrel=\joinrel=\joinrel=\joinrel=\joinrel=}2eRC_B^{-1}\cdot(\rho_{\xb}(\cM)-\rho_{\xb}(B_{\tilde{h}}(\xb_{n+1})))\cdot h^{-d}e^{-C^2} \\
   & \leq 2eRC_B^{-1}\cdot h^{-d}e^{-C^2} \\
   & = O(h^{-d}e^{-C^2}),
\end{align*}
where $O$ hides constants about $R$ and $d$.

Hence 
\begin{align*}
    \bar{\cK}_h(f;\xb_{n+1}) -f(\xb_{n+1}) \leq O(C^\alpha h^\alpha)+O(h^{-d}e^{-C^2}).
\end{align*}

For $h\in (0,1)$, by choosing $C=\sqrt{(d+1)\log(\frac{1}{h})}$, we have $h^{-d}e^{-C^2}=h$. Therefore,
\begin{align*}
    \bar{\cK}_h(f;\xb_{n+1}) -f(\xb_{n+1}) \leq O\left(h^{\alpha}\log\left(\frac{1}{h}\right)\right)+O(h)= O\left(h^{\alpha}\log\left(\frac{1}{h}\right)\right).
\end{align*}
as desired. The notation $O(\cdot)$ hides the constants depending on $d,L,R,\tau_{\cM}$.

\end{proof}

\subsection{Proof of Lemma~\ref{ThmApprox3}} \label{app:proofpropa3}
\begin{proof} [Proof of Lemma~\ref{ThmApprox3}]

Lemma \ref{ThmApprox3} estimates the variance of kernel manifold regression. We prove it using a series of concentration inequalities \citep{hoeffding1994probability,vershynin2018high}.

Let us define some empirical quantities used in kernel estimator and their  counterparts in expectation.
\[
\hat{N}_n (\xb_{n+1}) := \frac{1}{n}\sum_{i=1}^n K_h\left(\xb_{n+1}-\xb_i\right) f(\xb_i), \quad \hat{D}_n (\xb_{n+1}) :=\frac{1}{n}\sum_{i=1}^n K_h\left(\xb_{n+1}-\xb_i\right)
\]
\[
N(\xb_{n+1}) := \mathbb{E}_{\xb}\left[K_h\left(\xb_{n+1}-\xb\right) f(\xb)\right], \quad D(\xb_{n+1}) := \mathbb{E}_\xb\left[K_h\left(\xb_{n+1}-\xb\right)\right]
\]
%\rj{there might be typos in the above equation}

% Then
% \begin{align*}
% \cK_h(\fs)-\bar{\cK}_h(f;\xb_{n+1})
% &= \frac{\hat{N}_n(\xb_{n+1})}{\hat{D}_n(\xb_{n+1})} - \frac{N(\xb_{n+1})}{D(\xb_{n+1})} = \frac{\hat{N}_n(\xb_{n+1}) D(\xb_{n+1}) - N(\xb_{n+1}) \hat{D}_n(\xb_{n+1})}{\hat{D}_n(\xb_{n+1}) D(\xb_{n+1})} \\
% &= \frac{(\hat{N}_n(\xb_{n+1}) - N(\xb_{n+1})) D(\xb_{n+1}) + N(\xb_{n+1}) (D(\xb_{n+1}) - \hat{D}_n(\xb_{n+1}))}{\hat{D}_n(\xb_{n+1}) D(\xb_{n+1})}
% \end{align*}

We first decompose the variance as follows:
\begin{align}
    & |\cK_h(\fs)-\bar{\cK}_h(f;\xb_{n+1})| \nonumber \\
    & \leq \frac{1}{|\hat{D}_n(\xb_{n+1})||D(\xb_{n+1})|} \left( |D(\xb_{n+1})||\hat{N}_n(\xb_{n+1}) - N(\xb_{n+1})| + |N(\xb_{n+1})||\hat{D}_n(\xb_{n+1}) - D(\xb_{n+1})| \right)\nonumber \\
    & =\frac{1}{|\hat{D}_n(\xb_{n+1})|} \left( |\hat{N}_n(\xb_{n+1}) - N(\xb_{n+1})| \right)+\frac{|N(\xb_{n+1})|}{|\hat{D}_n(\xb_{n+1})|\cdot |D(\xb_{n+1})|} \left(|\hat{D}_n(\xb_{n+1}) - D(\xb_{n+1})| \right).
    \label{eq:lemma3p1}
\end{align}

We will bound \eqref{eq:lemma3p1} in the following steps.

$\bullet$ {\bf Estimating $\hat{D}_n(\xb_{n+1})$ in the denominator.}
We consider the following ball 
\begin{equation} B_{\tilde{h}}:=B_{\tilde{h}}(\xb_{n+1}):=\{\xb\in\mathcal{M}:\|\xb-\xb_{n+1}\|\leq \tilde{h}\}
\label{eq:lemma3pball}
\end{equation}
with $\tilde{h}=Ch$. Let $n_B$ be the number of samples in $B_{\tilde{h}}(\xb_{n+1})$.
%Then $\hat{D}_n(\xb_{n+1})$ satisfies
%\begin{align*}
%    \hat{D}_n(\xb_{n+1})&=\sum_{\xb_i\in  B_{\tilde{h}}(\xb_{n+1})}K_h\left(\xb_{n+1}-\xb_i\right)+\sum_{\xb_i\in \mathcal{M}\setminus B_{\tilde{h}}(\xb_{n+1})}K_h\left(\xb_{n+1}-\xb_i\right) \\
%    & \geq \sum_{\xb_i\in  B_{\tilde{h}}(\xb_{n+1})}K_h\left(\xb_{n+1}-\xb_i\right).
%\end{align*}
By   \citet[Lemma 30]{liao2019adaptive}, we can estimate $n_B$ as follows: 
\[\mathbb{P}\Big\{\left|\frac{n_B}{n}-\rho_{\xb}(B_{\tilde{h}}(\xb_{n+1}))\right|\geq \frac{1}{2}\rho_{\xb}(B_{\tilde{h}}(\xb_{n+1}))\Big\}\leq 2e^{-\frac{3n\cdot\rho_{\xb}(B_{\tilde{h}}(\xb_{n+1}))}{28}},\] 
where $\rho_{\xb}(B_{\tilde{h}}(\xb_{n+1}))=C_B\tilde{h}^d=C_BC^dh^d$, for some constant $C_B$ which satisfies \eqref{eq:lemma3cb}.
Therefore, with probability at least $1-2e^{-\frac{3n\cdot\rho_{\xb}(B_{\tilde{h}}(\xb_{n+1}))}{28}}$, it holds 
\[\frac{1}{2}\rho_{\xb}(B_{\tilde{h}}(\xb_{n+1}))\leq \frac{n_B}{n}\leq \frac{3}{2}\rho_{\xb}(B_{\tilde{h}}(\xb_{n+1})).\]

We next re-write $\hat{D}_n(\xb_{n+1})$ as 
\[\hat{D}_n(\xb_{n+1})=\frac{n_B}{n}\cdot\frac{1}{n_B}\sum_{i=1}^nK_h(\xb_{n+1}-\xb_i) \ge \frac{n_B}{n}\cdot\frac{1}{n_B}\sum_{\xb_i \in B_{\tilde h}(\xb_{n+1})}K_h(\xb_{n+1}-\xb_i)\] 
where the continuous counterpart of $\frac{1}{n_B}\sum_{\xb_i \in B_{\tilde h}(\xb_{n+1})} K_h(\xb_{n+1}-\xb_i)$ is 
\begin{align*}
\Phi&:=\frac{1}{\rho_{\xb}(B_{\tilde{h}})}\int_{B_{\tilde h}} K_h(\xb-\xb_{n+1})d\xb
 \ge \frac{1}{\rho_{\xb}(B_{\tilde h})} \int_{B_{ h}} K_h(\xb-\xb_{n+1})d\xb
 \\
 & \ge \frac{e^{-1}\rho_{\xb}(B_{ h})}{\rho_{\xb}(B_{\tilde h})} \ge C_{\Phi} C^{-d}
\end{align*}
where $C_\Phi$ is a constant depending on $\tau_{\cM}$.

By the Hoeffding's inequality \citep{hoeffding1994probability}, with probability $1-\delta$, $\hat{D}_n(\xb_{n+1})$ satisfies
\begin{align*}
\hat{D}_n(\xb_{n+1})
&\geq \frac{n_B}{n}\cdot\left(\Phi-\sqrt{\frac{\log(2/\delta)}{n_B}}\right)
\geq \frac{n_B}{n}\left(C_{\Phi}C^{-d}-\sqrt{\frac{\log(2/\delta)}{n_B}}\right).
\end{align*}

{\bf Bounding the first term in \eqref{eq:lemma3p1}.} The numerator of the first term can be decomposed as
\[\left| \hat{N}_n(\xb_{n+1}) - N(\xb_{n+1}) \right|\leq \left|\hat{N}^{(1)}_n(\xb_{n+1}) - N^{(1)}(\xb_{n+1})  \right|+\left|\hat{N}^{(2)}_n(\xb_{n+1}) - N^{(2)}(\xb_{n+1})  \right|,\]

where \[\hat{N}^{(1)}_n(\xb_{n+1})=\frac{1}{n}\sum_{\xb_i\in B_{\tilde{h}}(\xb_{n+1})} K_h\left(\xb_{n+1}-\xb_i\right) f(\xb_i)\]
and \[N^{(1)}(\xb_{n+1}) = \int_{\xb\in B_{\tilde{h}}(\xb_{n+1})} K_h(\xb_{n+1}-\xb)f(\xb)d\xb\]

and \[\hat{N}^{(2)}_n(\xb_{n+1})=\frac{1}{n}\sum_{\xb_i\in \cM\setminus B_{\tilde{h}}(\xb_{n+1})} K_h\left(\xb_{n+1}-\xb_i\right) f(\xb_i)\]
and 
\[N^{(2)}(\xb_{n+1}) = \int_{\xb\in \cM\setminus B_{\tilde{h}}(\xb_{n+1})} K_h(\xb_{n+1}-\xb)f(\xb)d\xb.\]

Therefore, we just need to bound 
\begin{align}
   &\frac{\left|\hat{N}^{(1)}_n(\xb_{n+1}) - N^{(1)}(\xb_{n+1})  \right|+\left|\hat{N}^{(2)}_n(\xb_{n+1}) - N^{(2)}(\xb_{n+1})  \right|}{\hat{D}_n(\xb_{n+1})} 
   \nonumber
   \\
   &
 \leq \frac{\left|\hat{N}^{(1)}_n(\xb_{n+1}) - N^{(1)}(\xb_{n+1})  \right|}{\frac{n_B}{n}\left(C_{\Phi}C^{-d}-\sqrt{\frac{\log(2/\delta)}{n_B}}\right)}+\frac{\left|\hat{N}^{(2)}_n(\xb_{n+1}) - N^{(2)}(\xb_{n+1})  \right|}{\frac{n_B}{n}\left(C_{\Phi}C^{-d}-\sqrt{\frac{\log(2/\delta)}{n_B}}\right)}
 %\leq \frac{\left|\hat{N}^{(1)}_n(\xb_{n+1}) - N^{(1)}(\xb_{n+1})  \right|+\left|\hat{N}^{(2)}_n(\xb_{n+1}) - N^{(2)}(\xb_{n+1})  \right|}{\frac{n_B}{n}\left(C_{\Phi}C^{-d}-\sqrt{\frac{\log(2/\delta)}{n_B}}\right)} 
   \label{eq:lemma3p2}
\end{align}
% \[\frac{\left|\hat{N}^{(1)}_n(\xb_{n+1}) - N^{(1)}(\xb_{n+1})  \right|+\left|\hat{N}^{(2)}_n(\xb_{n+1}) - N^{(2)}(\xb_{n+1})  \right|}{\hat{D}_n(\xb_{n+1})}\leq \frac{\left|\hat{N}^{(1)}_n(\xb_{n+1}) - N^{(1)}(\xb_{n+1})  \right|+\left|\hat{N}^{(2)}_n(\xb_{n+1}) - N^{(2)}(\xb_{n+1})  \right|}{n_B/(3n)}\]

The first term in \eqref{eq:lemma3p2} can be written as 
\begin{align*}
&\frac{\left|\hat{N}^{(1)}_n(\xb_{n+1}) - N^{(1)}(\xb_{n+1})  \right|}{\frac{n_B}{n}\left(C_{\Phi}C^{-d}-\sqrt{\frac{\log(2/\delta)}{n_B}}\right)}
\\
=&\frac{\left|\frac{1}{n_B}\sum_{\xb_i\in B_{\tilde{h}}(\xb_{n+1})} K_h\left(\xb_{n+1}-\xb_i\right) f(\xb_i)-\frac{n}{n_B}\int_{\xb\in B_{\tilde{h}}(\xb_{n+1})} K_h(\xb_{n+1}-\xb)f(\xb)d\xb\right|}{C_{\Phi}C^{-d}-\sqrt{\frac{\log(2/\delta)}{n_B}}} \\
 \leq &\frac{\left|\frac{1}{n_B}\sum_{\xb_i\in B_{\tilde{h}}(\xb_{n+1})} K_h\left(\xb_{n+1}-\xb_i\right) f(\xb_i)-\frac{1}{\rho_\xb(B_{\tilde{h}})}\int_{\xb\in B_{\tilde{h}}(\xb_{n+1})} K_h(\xb_{n+1}-\xb)f(\xb)d\xb\right|}{C_{\Phi}C^{-d}-\sqrt{\frac{\log(2/\delta)}{n_B}}} \\ 
& + \frac{\left|\frac{1}{\rho_\xb(B_{\tilde{h}})}\int_{\xb\in B_{\tilde{h}}(\xb_{n+1})} K_h(\xb_{n+1}-\xb)f(\xb) d\xb-\frac{n}{n_B}\int_{\xb\in B_{\tilde{h}}(\xb_{n+1})} K_h(\xb_{n+1}-\xb)f(\xb) d\xb\right|}{C_{\Phi}C^{-d}-\sqrt{\frac{\log(2/\delta)}{n_B}}}. 
\end{align*}

By the Hoeffding's inequality, we know that with probability at least $1-\delta$,
\begin{align*}
    \left |\frac{1}{n_B}\sum_{\xb_i\in B_{\tilde{h}}(\xb_{n+1})} K_h\left(\xb_{n+1}-\xb_i\right) f(\xb_i)-\frac{1}{\rho_\xb(B_{\tilde{h}})}\int_{\xb\in B_{\tilde{h}}(\xb_{n+1})} K_h(\xb_{n+1}-\xb)f(\xb)d\xb\right | \leq R\sqrt{\frac{\log(2/\delta)}{n_B}}.
\end{align*}
By \citet[Lemma 30]{liao2019adaptive}, with probability $1-\delta$,
\begin{align*}
    \left|\rho_{\xb}(B_{\tilde{h}}(\xb_{n+1}))-\frac{n_B}{n}\right|\leq O\left(\sqrt{\frac{\log(2/\delta)\rho_{\xb}(B_{\tilde{h}}(\xb_{n+1}))}{n}}\right)
\end{align*}
which gives rise to
\begin{align*}
&\left|\frac{1} {\rho_\xb(B_{\tilde{h}})}\int_{\xb\in B_{\tilde{h}}(\xb_{n+1})} K_h(\xb_{n+1}-\xb)f(\xb) d\xb-\frac{n}{n_B}\int_{\xb\in B_{\tilde{h}}(\xb_{n+1})} K_h(\xb_{n+1}-\xb)f(\xb) d\xb\right|
\\
\le & \left|\frac{1} {\rho_\xb(B_{\tilde{h}})} - \frac{n}{n_B}\right| \cdot \left|\int_{\xb\in B_{\tilde{h}}(\xb_{n+1})} K_h(\xb_{n+1}-\xb)f(\xb) d\xb\right|
\le R  \rho_\xb(B_{\tilde{h}})\left|\frac{1} {\rho_\xb(B_{\tilde{h}})} - \frac{n}{n_B}\right|
\\
=& R\left|1-\frac{n}{n_B} \rho_\xb(B_{\tilde{h}}) \right| = R \frac{n}{n_B} \left|\frac{n_B}{n}-\rho_\xb(B_{\tilde{h}})\right| \le O\left(\frac{R}{\rho_{\xb}(B_{\tilde{h}}(\xb_{n+1}))}\sqrt{\frac{\log(2/\delta)\rho_{\xb}(B_{\tilde{h}}(\xb_{n+1}))}{n}}\right)
\\
& = O\left(R\sqrt{\frac{\log(2/\delta)}{n\rho_{\xb}(B_{\tilde{h}}(\xb_{n+1}))}}\right).
\end{align*}
Therefore, with probability at least $1-2\delta$, the first term in \eqref{eq:lemma3p2} satisfies
\begin{align}
    \frac{\left|\hat{N}^{(1)}_n(\xb_{n+1}) - N^{(1)}(\xb_{n+1})  \right|}{\hat{D}_n(\xb_{n+1})} 
    &\leq O\left(RC^d\sqrt{\frac{\log(2/\delta)}{n_B}}\right) + O\left(RC^d\sqrt{\frac{\log(2/\delta)}{n\rho_{\xb}(B_{\tilde{h}}(\xb_{n+1}))}}\right) \nonumber \\
    & =O\left(RC^d\sqrt{\frac{\log(2/\delta)}{n\rho_{\xb}(B_{\tilde{h}}(\xb_{n+1}))}}\right)=O\left(\frac{RC^{d/2}}{\sqrt{C_B}}\sqrt{\frac{\log(2/\delta)}{nh^d}}\right), \label{eq:lemma3eq2inball}
\end{align}
where the constant $C_B$ satisfies \eqref{eq:lemma3cb}.

For the second term in \eqref{eq:lemma3p2}, it satisfies
\begin{align*}
    &\frac{\left|\hat{N}^{(2)}_n(\xb_{n+1}) - N^{(2)}(\xb_{n+1})  \right|}{\hat{D}_n(\xb_{n+1})}
    \\
    \leq
    &\frac{\left|\frac{1}{n}\sum_{\xb_i\in \cM\setminus B_{\tilde{h}}(\xb_{n+1})} K_h\left(\xb_{n+1}-\xb_i\right) f(\xb_i)-\int_{\xb\in \cM\setminus B_{\tilde{h}}(\xb_{n+1})} K_h(\xb_{n+1}-\xb)f(\xb) d\xb\right|}{\frac{n_B}{n}\left(C_{\Phi}C^{-d}-\sqrt{\frac{\log(2/\delta)}{n_B}}\right)}
\end{align*}
By the Hoeffding's inequality, we have,  with probability at least $1-\delta$, 
\begin{align*}
    \left |\frac{1}{n}\sum_{\xb_i\in\cM\setminus B_{\tilde{h}}(\xb_{n+1})} K_h\left(\xb_{n+1}-\xb_i\right) f(\xb_i)-\int_{\xb\in\cM\setminus B_{\tilde{h}}(\xb_{n+1})} K_h(\xb_{n+1}-\xb)f(\xb)d\xb\right | \leq Re^{-C^2}\sqrt{\frac{\log(2/\delta)}{n}},
\end{align*}
where we bound $|K_h(\xb_{n+1}-\xb)f(\xb)| \le Re^{-C^2}$ for all $\xb\in \cM\setminus B_{\tilde h}$. Therefore, the second term in \eqref{eq:lemma3p2} can be further bounded as
\begin{align}
    \frac{\left|\hat{N}^{(2)}_n(\xb_{n+1}) - N^{(2)}(\xb_{n+1})  \right|}{\hat{D}_n(\xb_{n+1})}
    &\leq\frac{Re^{-C^2}\sqrt{\frac{\log(2/\delta)}{n}}}{\frac{n_B}{n}\left(C_{\Phi}C^{-d}-\sqrt{\frac{\log(2/\delta)}{n_B}}\right)}
    \le O\left(C^d e^{-C^2}\frac{n}{n_B}\sqrt{\frac{\log(2/\delta)}{n}} \right)
    \nonumber \\
    & = O\left(\frac{ e^{-C^2}}{h^{d/2}}\sqrt{\frac{\log(2/\delta)}{nh^d}} \right).
    \label{eq:lemma3eq2outball}
\end{align}
In summary, the first term in \eqref{eq:lemma3p1} can be split into two terms according to \eqref{eq:lemma3p2}: one term is inside the ball $B_{\tilde h}$ and the other is outside the ball. We have bounded the term inside the ball $B_{\tilde h}$ in \eqref{eq:lemma3eq2inball} and the term outside the ball in \eqref{eq:lemma3eq2outball}. Combining \eqref{eq:lemma3eq2inball} and \eqref{eq:lemma3eq2outball} gives rise to
\begin{align*}
    \frac{ |\hat{N}_n(\xb_{n+1}) - N(\xb_{n+1})| }{|\hat{D}_n(\xb_{n+1})|} &\leq O\left(\frac{RC^{d/2}}{\sqrt{C_B}}\sqrt{\frac{\log(2/\delta)}{nh^d}}\right)
+O\left(\frac{ e^{-C^2}}{h^{d/2}}\sqrt{\frac{\log(2/\delta)}{nh^d}} \right) \\
&=O\left(\left[\log\left(h^{-1}\right)\right]^{\frac{d}{4}}\sqrt{\frac{\log(2/\delta)}{nh^{d}}}\right).
\end{align*}
where the last line results from choosing \begin{equation}
C=\sqrt{d\log(1/h)},
\label{eq:lemma3pc}
\end{equation}
so that $e^{-C^2}=h^d<h^{d/2}$ when $h$ is small. 

%Notice that with probability $1-2e^{-\frac{3n\cdot\rho_{\xb}(B_{\tilde{h}}(\xb_{n+1}))}{28}}$, we have $\frac{n_B}{n}=O(\rho_{\xb}(B_{\tilde{h}}(\xb_{n+1})))=O(C^dh^d)$. Hence, with probability at least $1-\delta-2e^{-\frac{3n\cdot\rho_{\xb}(B_{\tilde{h}}(\xb_{n+1}))}{28}}$, the second term satisfies
%\begin{align*}
%    \frac{\left|\hat{N}^{(2)}_n(\xb_{n+1}) - N^{(2)}(\xb_{n+1})  \right|}{\hat{D}_n(\xb_{n+1})}\leq O\left(\frac{Re^{-C^2}\sqrt{\frac{\log(2/\delta)}{n}}}{h^dC^d}\right)=O\left(\frac{Re^{-C^2}}{C^dh^{d/2}}\cdot\sqrt{\frac{\log(2/\delta)}{nh^d}}\right)
%\end{align*}

{\bf Bounding the second term in \eqref{eq:lemma3p1}.} 
The second term in \eqref{eq:lemma3p1} can be bounded similarly to the first term, with an additional estimate on $\frac{N(\xb_{n+1})}{D(\xb_{n+1})}$. We define the ball $B_h(\xb_{n+1})$ and $B_{\tilde h}(\xb_{n+1})$ as in \eqref{eq:lemma3pball} with $\tilde h = Ch$.
\begin{align*}
\frac{|N(\xb_{n+1})|}{|D(\xb_{n+1})|}
& =\frac{\left|\int_{\cM}K_h(\xb_{n+1}-\xb)f(\xb)d\xb\right|}{\left|\int_{\cM}K_h(\xb_{n+1}-\xb)d\xb\right|}
\\
&\le \frac{\left|\int_{B_{\tilde h}(\xb_{n+1})}K_h(\xb_{n+1}-\xb)f(\xb)d\xb\right|+\left|\int_{\cM\setminus B_{\tilde h}(\xb_{n+1})}K_h(\xb_{n+1}-\xb)f(\xb)d\xb\right|}{\left|\int_{B_h(\xb_{n+1})}K_h(\xb_{n+1}-\xb)d\xb\right|}
\\
& \le  \frac{R\rho_{\xb}(B_{\tilde h}(\xb_{n+1}))+Re^{-C^2}\rho_{\xb}(\cM)}{e^{-1}\rho_{\xb}(B_h(\xb_{n+1}))} \le O(RC^d),
\end{align*}
where the last inequality holds with $C$ chosen according to \eqref{eq:lemma3pc} and when $h$ is sufficiently small.

Applying a similar argument above, the second term in \eqref{eq:lemma3p1} can be bounded by 
\begin{align*}
\frac{|N(\xb_{n+1})|}{|\hat{D}_n(\xb_{n+1})|\cdot |D(\xb_{n+1})|} \left(|\hat{D}_n(\xb_{n+1}) - D(\xb_{n+1})| \right) \le O\left(\log^{3d/4}\left(\frac 1 h\right)\sqrt{\frac{\log(2/\delta)}{nh^{d}}}\right)
\end{align*}

{\bf Putting the two terms in \eqref{eq:lemma3p1} together.}
Putting the two terms in in \eqref{eq:lemma3p1} together, for $\delta > 2e^{-\frac{3n\cdot\rho_{\xb}(B_{\tilde{h}}(\xb_{n+1}))}{28}}$, we have with probability at least  $1-2\delta$,
\[
|\cK_h(\fs)-\bar{\cK}_h(f;\xb_{n+1})| \leq O\left(\log^{3d/4}\left(\frac 1 h\right)\sqrt{\frac{\log(2/\delta)}{nh^{d}}}\right).
\]

By abusing the notation, rewrite $2\delta$ as $\delta$, we get with at least probability $1-\delta$,
\[
|\cK_h(\fs)-\bar{\cK}_h(f;\xb_{n+1})| \leq O\left(\log^{3d/4}\left(\frac 1 h\right)\sqrt{\frac{\log(4/\delta)}{nh^{d}}}\right).
\]
as desired. The notation $O(\cdot)$ hides  constants depending on $d,R,\tau_\cM$.
\end{proof}

\subsection{Proof of Proposition~\ref{ThmApprox}} \label{app:proofthm1}
\begin{proof} [Proof of Proposition~\ref{ThmApprox}]

By Lemma \ref{ThmApprox1}, \ref{ThmApprox2}, \ref{ThmApprox3} and equation \eqref{eq:ApproxDecompose}, 
\begin{align*}
    \EE_{\fs}\left[\cR_n(\rmT^*(\fs))\right]
    &\leq (1-\delta)\cdot O\left(\left[\log\left(h^{-1}\right)\right]^{3d/2}{\frac{\log(4/\delta)}{nh^{d}}}\right)+\delta\cdot(2R)^2+O(h^{2\alpha}[\log(h^{-1})]^2) \\
    & \overset{\text{let}\text{ }\delta=4h^2}{=\joinrel=\joinrel=\joinrel=\joinrel=\joinrel=}
    (1-4h^2)\cdot O\left(\left[\log\left(h^{-1}\right)\right]^{3d/2}{\frac{\log(h^{-1})}{nh^{d}}}\right)+16h^2R^2+O(h^{2\alpha}[\log(h^{-1})]^2) \\
    &\leq O\left(\frac{\left[\log\left(h^{-1}\right)\right]^{1+3d/2}}{nh^{d}}\right)+O(h^{2\alpha}[\log(h^{-1})]^2).
\end{align*}
The last inequality holds because $0<h<1$. The notation $O(\cdot)$ hides the constants depending on $d,L,R,\tau_\cM$.
\end{proof}

\subsection{Proof of Lemma~\ref{lemmacovering}} \label{app:proofcovering}
\begin{proof} [Proof of Lemma~\ref{lemmacovering}]

Through the proof, we use the notation $\|H\|_{\infty}:=\|H\|_{\infty,\infty}$ to denote the infinity-infinity norm of a matrix $H$.

    Since our transformer has softmax as activation function in the last layer and ReLU as activation from the first to the penultimate layers, we need to consider those two cases separately.

Set $\eta>0$, we choose ${\rm T}$ with parameters $\theta$, and ${\rm T}'$ with parameters $\theta'$ such that $\|\theta-\theta'\|_{\infty}\leq\eta$.

We first bound the Multi-head Attention (${\rm MHA}$) layer in a transformer block.
    For the ReLU activation layer, according to \citet[Lemma 2]{Havrilla24}, for $\|H\|_{\infty}\leq U$, we have 
    \begin{equation*}
        \|{\rm MHA}_1(H) - {\rm MHA}_2(H)\|^{\rm ReLU}_{\infty}\leq 3\kappa^3d^6_{embed}U^3m_{\rm T}\ell\eta.
    \end{equation*}
    By the similar argument, for the softmax activation layer, since it takes normalization, we can bound  \begin{equation*}
        \|{\rm MHA}_1(H) - {\rm MHA}_2(H)\|^{\rm softmax}_{\infty}\leq \|{\rm MHA}_1(H) - {\rm MHA}_2(H)\|^{\rm ReLU}_{\infty}\leq 3\kappa^3d^6_{embed}U^3m_{\rm T}\ell\eta.
    \end{equation*}
%\rj{may need to double check since softmax may have Lipschitz constant >>1}
Therefore, for the ${\rm MHA}$ layer, we have 
\begin{equation*}
    \|{\rm MHA}_1(H) - {\rm MHA}_2(H)\|_{\infty}\leq 3\kappa^3d^6_{embed}U^3m_{\rm T}\ell\eta.
\end{equation*}
Next, we bound the ${\rm FFN}$ layer. According to \citet[Lemma 2]{Havrilla24}, we have 
\begin{align*}
    & \|{\rm FFN}_1(H+{\rm MHA}_1(H))-{\rm FFN}_2(H+{\rm MHA_2(H)})\|_{\infty} \\
    & \leq 3\kappa^{3+L_{\rm FFN}}w_{\rm FFN}^{2L_{\rm FFN}}d_{embed}^6U^3m_{\rm T}\ell\eta+L_{\rm FFN}(w_{\rm FFN}(2d^6_{embed}\kappa^3 U m_{\rm T}\ell )+2)(\kappa w_{\rm FFN})^{L_{\rm FFN}-1}\eta.
\end{align*}
Therefore, putting together the ${\rm MHA}$ and ${\rm FFN}$ layer together, we get the estimate on the difference of the transformer block $\|{\rm B}_1(H)-{\rm B}_2(H)\|_{\infty}$ ( for both ReLU and softmax activation) as 
\begin{align*}
    &\|{\rm B}_1(H)-{\rm B}_2(H)\|_{\infty}
    \\
     = &\|(H+{\rm MHA}_1(H)+{\rm FFN}_1(H+{\rm MHA}_1(H))) \\
    & -(H+{\rm MHA}_2(H)+{\rm FFN}_2(H+{\rm MHA_2(H)}))\|_{\infty} \\
    \leq & \|{\rm MHA}_1(H) - {\rm MHA}_2(H)\|_{\infty} \\
    &+\|{\rm FFN}_1(H+{\rm MHA}_1(H))-{\rm FFN}_2(H+{\rm MHA_2(H)})\|_{\infty} \\
    \leq & 3\kappa^3d^6_{embed}U^3m_{\rm T}\ell\eta + 3\kappa^{3+L_{\rm FFN}}w_{\rm FFN}^{2L_{\rm FFN}}d_{embed}^6U^3m_{\rm T}\ell\eta \\
    &+L_{\rm FFN}(w_{\rm FFN}(2d^6_{embed}\kappa^3 U m_{\rm T}\ell )+2)(\kappa w_{\rm FFN})^{L_{\rm FFN}-1}\eta \\
    \leq & (4\kappa^{3+L_{\rm FFN}}w_{\rm FFN}^{2L_{\rm FFN}}d_{embed}^6U^3m_{\rm T}\ell + L_{\rm FFN}(w_{\rm FFN}(2d^6_{embed}\kappa^3 U m_{\rm T}\ell )+2)(\kappa w_{\rm FFN})^{L_{\rm FFN}-1})\eta.
\end{align*}

Then, we can chain the multi-block together and have the difference
\begin{align*}
   \|{\rm B}_{L_T}\circ\cdots{\rm B}_1(H)-{\rm B}_{L_T}'\circ\cdots\circ{\rm B}_1'(H)\|_{\infty}\leq 2^{7L_T^2}L_{\rm FFN}U^{3L_T}d_{embed}^{18L_T^2L_{\rm FFN}}\kappa^{6L_T^2L_{\rm FFN}}m_{\rm T}^{L_T^2}\ell^{L_T^2}\eta.
\end{align*}

Recall that the decoder layer ${\rm D}:\mathbb{R}^{d_{embed}\times\ell}\to\mathbb{R}$ is fixed and it outputs the last element in the first row.
For the encoding layer $H={\rm PE}+{\rm E}(\fs)$, both ${\rm PE}$ and ${\rm E}$ are fixed and we have $\|{\rm PE}+{\rm E}\left(\fs\right)\|_{\infty}=\|\fs\|_{\infty}+1\leq U+1$. Thus, together this gives the total error bound between $T,T'\in \mathcal{T}(L_{\rm T}, m_{\rm T},d_{embed},\ell,L_{\text{FFN}},w_{\text{FFN}},R,\kappa)$ with $\|\theta-\theta'\|_{\infty}\leq \eta$ as
\begin{align*}
    \|{\rm T}(\fs)-{\rm T}'(\fs)\|_{\infty} &= \|{\rm D}\circ{\rm B}_{L_T}\circ\cdots{\rm B}_1\circ({\rm PE}+{\rm E}(\fs))-{\rm D'}\circ{\rm B}_{L_T}'\circ\cdots\circ{\rm B}_1'({\rm PE}+{\rm E}'(\fs))\|_{\infty} \\
    &\leq 2^{L^2_T+1}L_{\rm FFN}U^{3L_T}d_{embed}^{18L_T^2L_{\rm FFN}}\kappa^{6L_T^2 L_{\rm FFN}}m_{\rm T}^{L_T^2}\ell^{L_T^2}\eta.
\end{align*}

Notice that the total number of parameters in the transformer class $\mathcal{T}$ is 
\begin{align*}
    |\theta| = |\theta_D|+\sum_{i=1}^{L_{\rm T}}|\theta_{{\rm B}_i}|+|\theta_E| 
    &=d_{embed}+L_{\rm T}(3d_{embed}^2m_{\rm T}+L_{\rm FFN}w_{\rm FFN}^2)+d_{embed}D \\
    &\leq d_{embed}(D+1)+L_{\rm T}(3d_{embed}^2m_{\rm T}+L_{\rm FFN}w_{\rm FFN}^2).
\end{align*}

Since the number of steps for each parameter is $\frac{2\kappa}{\eta}$, then the covering number is

\begin{align*}
    &\mathcal{N}(\delta,\mathcal{T},\|\cdot\|_{\infty}) \\ \leq & \left(\frac{2\kappa\cdot 2^{L^2_T+1}L_{\rm FFN}U^{3L_T}d_{embed}^{18L_T^2L_{\rm FFN}}\kappa^{6L_T^2 L_{\rm FFN}}m_{\rm T}^{L_T^2}\ell^{L_T^2}}{\delta}\right)^{d_{embed}(D+1)+L_T(3d_{embed}^2m_{\rm T}+L_{\rm FFN}w_{\rm FFN}^2)} \\
    = &\left(\frac{ 2^{L^2_T+2}L_{\rm FFN}U^{3L_T}d_{embed}^{18L_T^2L_{\rm FFN}}\kappa^{6L_T^2 L_{\rm FFN}+1}m_{\rm T}^{L_T^2}\ell^{L_T^2}}{\delta}\right)^{d_{embed}(D+1)+L_T(3d_{embed}^2m_{\rm T}+L_{\rm FFN}w_{\rm FFN}^2)}.
\end{align*}

For the covering number of $\cL\circ\cT$, we have 
\begin{align*}
    \|\cL({\rm T},\fs,y)-\cL({\rm T}',\fs,y)\|_{\infty} = ({\rm T}(\fs)-y_{n+1})^2-({\rm T}'(\fs)-y_{n+1})^2\leq 4R\|{\rm T}(\fs)-{\rm T}'(\fs)\|_{\infty}.
\end{align*}
Therefore, the covering number
\begin{align*}
   & \mathcal{N}(\delta,\cL\circ\cT,\|\cdot\|_{\infty}) \\
   \leq & \left(\frac{4R\cdot 2\kappa\cdot 2^{L^2_T+1}L_{\rm FFN}U^{3L_T}d_{embed}^{18L_T^2L_{\rm FFN}}\kappa^{6L_T^2 L_{\rm FFN}}m_{\rm T}^{L_T^2}\ell^{L_T^2}}{\delta}\right)^{d_{embed}(D+1)+L_T(3d_{embed}^2m_{\rm T}+L_{\rm FFN}w_{\rm FFN}^2)} \\
    =&\left(\frac{ 2^{L^2_T+4}L_{\rm FFN}U^{3L_T}d_{embed}^{18L_T^2L_{\rm FFN}}\kappa^{6L_T^2 L_{\rm FFN}+1}m_{\rm T}^{L_T^2}\ell^{L_T^2}R}{\delta}\right)^{d_{embed}(D+1)+L_T(3d_{embed}^2m_{\rm T}+L_{\rm FFN}w_{\rm FFN}^2)}
\end{align*}
as desired.

\end{proof}

\subsection{Proof of Lemma \ref{lemmaIA}} \label{app:proofIA}

\begin{proof}[Proof of Lemma \ref{lemmaIA}]
    
For convenience,  we denote the $i$-th token in the output by 
%The next formulation (\ref{attentionhi}) is particularly useful for analyzing token-token interactions, which is our primary focus in the paper.   
%\rj{No need to use this in the main body part, we may can move this in the appendix if more space is needed} \wen{if we have space, maybe we can keep this, since this is the formula of attention most related with kernel methods, I  added some explanation about this connection below}
\begin{equation} \label{attentionhi}
\textstyle    {\rm A}(h_i):=[{\rm A}(H)]_i=\sum_{j=1}^{\ell}\sigma(\langle Qh_i, Kh_j\rangle)Vh_j, 
\end{equation} 
This formula illustrates that the attention mechanism performs a weighted average of token values based on their pairwise interactions.

Let us defined the query, key, and value matrices as

\begin{equation*}
Q = 
\left[\begin{array}{cccccc}
       & & & Q^{data}  \\
	 0 & \cdots & 0 & (Q^{\mathcal{I}})_{1,1} & (Q^{\mathcal{I}})_{1,2} & 0 \\ 
         0 & \cdots & 0 & (Q^{\mathcal{I}})_{2,1} & (Q^{\mathcal{I}})_{2,2} & 0 \\
     0 & \cdots & 0 & 0 & 0 & 1
\end{array}\right] \quad\quad\text{and}\quad\quad
K = 
\left[\begin{array}{cccccc}
       & & & K^{data}  \\
	 0 & \cdots & 0 & (K^{\mathcal{I}})_{1,1} & (K^{\mathcal{I}})_{1,2} & 0 \\ 
         0 & \cdots & 0 & (K^{\mathcal{I}})_{2,1} & (K^{\mathcal{I}})_{2,2} & 0 \\
     0 & \cdots & 0 & 0 & 0 & -C
\end{array}\right] 
\end{equation*}
and 
$V = \eb_{i}\eb^{\top}_{d_{embed}}$. Here we call $Q^{data},K^{data}\in\mathbb{R}^{(d_{embed}-3)\times d_{embed}}$ the data kernels, $Q^{\mathcal{I}}:=\left[\begin{array}{cc}
    (Q^{\mathcal{I}})_{1,1} & (Q^{\mathcal{I}})_{1,2} \\
   (Q^{\mathcal{I}})_{2,1}  & (Q^{\mathcal{I}})_{2,2}
\end{array}\right]\in\mathbb{R}^{2\times 2}$ and $K^{\mathcal{I}}:=\left[\begin{array}{cc}
    (K^{\mathcal{I}})_{1,1} & (K^{\mathcal{I}})_{1,2} \\
   (K^{\mathcal{I}})_{2,1}  & (K^{\mathcal{I}})_{2,2}
\end{array}\right]\in\mathbb{R}^{2\times 2}$
the interaction kernels, and $C>0$ is a large positive number.

Let us choose $Q^{\cI},K^{\cI}$ such that $K^{\cI}=P_{\cI_{t_2}}$ is a projection onto $\cI_{t_2}$, and $Q^{\cI}$ is a dilation and rotation of $\cI_{t_1}$ onto $\cI_{t_2}$, i.e., $Q^{\cI}\cI_{t_1}=C\cI_{t_2}$. Now let us compute $A(h_t)$ for $t=t_1$ and $t\neq t_1$.

For any $1\leq t\leq\ell$, we can write the action $A_h$ on $h_t$ as 
\begin{align*}
    A(h_t) = \sum_{k=1}^{\ell}\sigma(\langle Qh_t, Kh_k\rangle)Vh_k = \sum_{k=1}^{\ell}\sigma\left(\langle Q^{data}h_t, K^{data}h_k\rangle+\langle Q^{\cI}\cI_t, K^{\cI}\cI_k\rangle-C\right)\eb_i.
\end{align*}

\textbf{Case I:} $t=t_1$ and $k=t_2$. By construction, we have $\langle Q^{\cI}\cI_{t_1}, K^{\cI}\cI_{t_2}\rangle=\langle C\cI_{t_2},\cI_{t_2}\rangle=C$. Therefore 
\begin{align*}
     \sigma\left(\langle Q^{data}h_{t_1}, K^{data}h_{t_2}\rangle+\langle Q^{\cI}\cI_{t_1}, K^{\cI}\cI_{t_2}\rangle-C\right) 
    & =\sigma\left(\langle Q^{data}h_{t_1}, K^{data}h_{t_2}\rangle+C-C\right) \\
    & = \sigma\left(\langle Q^{data}h_{t_1}, K^{data}h_{t_2}\rangle\right).
\end{align*}

\textbf{Case II:} $t=t_1$ and $k\neq k_2$.  We have $\langle Q^{\cI}\cI_{t_1}, K^{\cI}\cI_{k}\rangle\leq \|Q^{\cI}\cI_{t_1}\|_2\|K^{\cI}\cI_k\|_2=C\|P_{\cI_{t_2}}\cI_k\|_2<C$. The last inequality holds since $\|P_{\cI_{t_2}}\cI_k\|_2<1$ when $k\neq t_2$. Thus, for large $C$, we have 
\begin{align*}
    \sigma\left(\langle Q^{data}h_{t_1}, K^{data}h_{k}\rangle+\langle Q^{\cI}\cI_{t_1}, K^{\cI}\cI_{k}\rangle-C\right)\leq \sigma\left(\langle Q^{data}h_{t_1}, K^{data}h_{k}\rangle+C\|P_{\cI_{t_2}}\cI_k\|_2-C\right).
\end{align*}
By choosing $\langle Q^{data}h_{t_1}, K^{data}h_{k}\rangle+C\|P_{\cI_{t_2}}\cI_k\|_2-C<0$, or equivalently, $C>\frac{\langle Q^{data}h_{t_1}, K^{data}h_{k}\rangle}{1-\|P_{\cI_{t_2}}\cI_k\|_2}$, we get 
\begin{align*}
    \sigma\left(\langle Q^{data}h_{t_1}, K^{data}h_{k}\rangle+\langle Q^{\cI}\cI_{t_1}, K^{\cI}\cI_{k}\rangle-C\right)\leq \sigma\left(\langle Q^{data}h_{t_1}, K^{data}h_{k}\rangle+C\|P_{\cI_{t_2}}\cI_k\|_2-C\right)=0.
\end{align*}
Combining Case I and II, we conclude $A(h_{t})=\sigma\left(\langle Q^{data}h_{t}, K^{data}h_{t_2}\rangle\right)\eb_i$ when $t=t_1$.

\textbf{Case III:} $t\neq t_1$ and $k = t_2$. We have
\begin{align*}
    \langle Q^{\cI}\cI_{t}, K^{\cI}\cI_{t_2}\rangle=\|Q^{\cI}\cI_t\|_2\|K^{\cI}\cI_{t_2}\|_2\cos(\theta_{t,t_2}),
\end{align*}
where $\theta_{t,t_2}$ is the angle between $Q^{\cI}\cI_t$ and $K^{\cI}{\cI}_{t_2}$.
Since $t\neq t_1$, $Q^{\cI}\cI_{t}\neq C\cI_{t_2}$, $\cos(\theta_{t,t_2})<1$. Then by choosing $C>\frac{\langle Q^{data}h_t,K^{data}h_{t_2}\rangle}{1-\cos(\theta_{t,t_2}
)}$, we have 
\begin{align*}
    \sigma\left(\langle Q^{data}h_{t}, K^{data}h_{t_2}\rangle+\langle Q^{\cI}\cI_{t}, K^{\cI}\cI_{t_2}\rangle-C\right)= \sigma\left(\langle Q^{data}h_{t}, K^{data}h_{t_2}\rangle+C\cos(\theta_{t,t_2})-C\right)=0
\end{align*}

\textbf{Case IV:} $t\neq t_1$ and $k\neq t_2$. In this case, we have $(\langle Q^{data}h_{t}, K^{data}h_{k}\rangle+\langle Q^{\cI}\cI_{t}, K^{\cI}\cI_{k}\rangle-C<0$, so the argument follows the same way as Case 2.

Combining Case III and Case IV, we conclude $A(h_t)=0$ when $t\neq t_1$.

To obtain the bound on the constant $C$, we need $C>\max\left(\frac{\langle Q^{data}h_{t_1}, K^{data}h_{k}\rangle}{1-\|P_{\cI_{t_2}}\cI_k\|_2},\frac{\langle Q^{data}h_t,K^{data}h_{t_2}\rangle}{1-\cos(\theta_{t,t_2}
)}\right)$.

Both numerators can be bounded by 
\begin{align*}
    |\langle Q^{data}h_{t}, K^{data}h_{k}\rangle|
    &\leq \|Q^{data}h_{t}\|_2 \|K^{data}h_{k}\|_2\leq \|Q^{data}\|_{1,1}\|h_{t}\|_{\infty}\|K^{data}\|_{1,1}\|h_t\|_{\infty} \\
    &\leq \|Q^{data}\|_{\infty,\infty}d_{embed}^2\|K^{data}\|_{\infty,\infty}d_{embed}^2U^2\leq d_{embed}^4\kappa^2 U^2.
\end{align*}
The two denominators can be bounded by 
\begin{align*}
1-\|P_{\cI_{t_2}}\cI_k\|_2\geq1-\cos(\frac{\pi}{2\ell})\geq 1-(1-O(\ell^{-2}))=O(\ell^{-2}),
\end{align*}
and 
\begin{align*}
    1-\cos(\theta_{t,t_2})=1-\langle \cI_{t+t_2-t_1}, \cI_{t_2}\rangle\geq 1-\cos(\frac{\pi}{2\ell})\geq 1-(1-O(\ell^{-2}))=O(\ell^{-2}).
\end{align*}
The $O(\cdot)$ hides the dependency on some absolute constant. So we conclude $C=O(d_{embed}^4\kappa^2\ell^2U^2)$.
\end{proof}

\subsection{Proof of Lemma~\ref{lemmagate}} \label{app:proofgate}
\begin{proof} [Proof of Lemma~\ref{lemmagate}]
We denote the $i$-th column/token by $h_i$ and $j$-th component of $h_i$ by $(h_i)_j$ in the proof.
Recall that $\mathcal{I}_t$ is the sinusoid positional encoding, it is easy to see there exists some $\vb=(\vb_1,\vb_2)\in \mathbb{S}^1$ such that $\mathcal{I}_t\cdot\vb>0$ for $t=\{1,\cdots,k_1\}$ and $\mathcal{I}_t\cdot\vb<0$ for $t=\{k_1,\cdots,\ell\}$. Then for large $C$, we can construct (all the blank places are filled with zeros) with $C\bv_1, C\bv_2$ appears in $d_{embed-2}$-th and $d_{embed-3}$-th columns, from row $r_1$ to row $r_2$ for $1\leq r_1\leq r_2\leq d_{embed-3}$. 
\begin{equation*}
    W_1 = \left[\begin{array}{ccccccccc} 
	1 & & & &  &  & & &  \\ 
     & \ddots & & &  &  & C\vb_1 & C\vb_2 & \\
     &  & \ddots & &  &  & \vdots & \vdots & \\
     & &  & 1 &  &  & C\vb_1 & C\vb_2 &   \\ 
	 & & &  & \ddots &  &  & & \\ 
     & & &  & & \ddots &  & & \\ 
     & & &  &  &  & 1 & &  \\
     & & & & &  & & 1 &   \\
      & & & &  & & & & 1 
\end{array}\right]\in\mathbb{R}^{d_{embed}\times d_{embed}},
\quad \bb_1 = \mathbf{0}\in\mathbb{R}^{d_{embed}}
\end{equation*}
\begin{equation*}
    W_2 = \left[\begin{array}{ccccccccc} 
	1 & & & &  &  & & &  \\ 
     & \ddots & & &  &  & -C\vb_1 & -C\vb_2 & \\
     &  & \ddots & &  &  & \vdots & \vdots & \\
     & &  & 1 &  &  & -C\vb_1 & -C\vb_2 &   \\ 
	 & & &  & \ddots &  &  & & \\ 
     & & &  & & \ddots &  & & \\ 
     & & &  &  &  & 1 & &  \\
     & & & & &  & & 1 &   \\
      & & & &  & & & & 1 
\end{array}\right]\in\mathbb{R}^{d_{embed}\times d_{embed}},
\quad \bb_2 = \mathbf{0}\in\mathbb{R}^{d_{embed}},
\end{equation*}
so that 
\begin{equation*}
    z_1 = \sigma(W_1h_t+\bb_1)=\left[\begin{array}{c} 
	(h_t)_{1} \\
\vdots \\
(h_t)_{r_1-1} \\\sigma((h_t)_{r_1}+C\mathcal{I}_t\cdot\vb)  \\ 
      \vdots  \\ 
\sigma((h_t)_{r_2}+C\mathcal{I}_t\cdot\vb)  \\ 
(h_t)_{r_2+1} \\
\vdots \\
(h_t)_{d_{embed}-3} \\
      \mathcal{I}_{t}^1 \\
      \mathcal{I}_{t}^2   \\
      1 
\end{array}\right] \overset{\text{if}\text{ }\mathcal{I}_t\cdot\vb<0}{=\joinrel=\joinrel=\joinrel=\joinrel=\joinrel=}  \left[\begin{array}{c} 
	(h_t)_{1} \\
\vdots \\
(h_t)_{r_1-1} \\ 0  \\ 
      \vdots  \\ 
0 \\ 
(h_t)_{r_2+1} \\
\vdots \\
(h_t)_{d_{embed}-3} \\
      \mathcal{I}_{t}^1 \\
      \mathcal{I}_{t}^2   \\
      1 
\end{array}\right].
\end{equation*}
and 
\begin{equation*}
    z_1 = \sigma(W_1h_t+\bb_1)=\left[\begin{array}{c} 
	(h_t)_{1} \\
\vdots \\
(h_t)_{r_1-1} \\\sigma((h_t)_{r_1}+C\mathcal{I}_t\cdot\vb)  \\ 
      \vdots  \\ 
\sigma((h_t)_{r_2}+C\mathcal{I}_t\cdot\vb)  \\ 
(h_t)_{r_2+1} \\
\vdots \\
(h_t)_{d_{embed}-3} \\
      \mathcal{I}_{t}^1 \\
      \mathcal{I}_{t}^2   \\
      1 
\end{array}\right] \overset{\text{if}\text{ }\mathcal{I}_t\cdot\vb>0}{=\joinrel=\joinrel=\joinrel=\joinrel=\joinrel=}  \left[\begin{array}{c} 
	(h_t)_{1} \\
\vdots \\
(h_t)_{r_1-1} \\(h_t)_{r_1}+C\mathcal{I}_t\cdot\vb  \\ 
      \vdots  \\ 
(h_t)_{r_2}+C\mathcal{I}_t\cdot\vb  \\ 
(h_t)_{r_2+1} \\
\vdots \\
(h_t)_{d_{embed}-3} \\
      \mathcal{I}_{t}^1 \\
      \mathcal{I}_{t}^2   \\
      1 
\end{array}\right].
\end{equation*}
Then apply the second layer yields 
\begin{align*}
    z_2=W_2z_1+\bb_2 = \left[\begin{array}{c} 
    (h_t)_{1} \\
          \vdots \\
          (h_t)_{r_1-1} \\
	0  \\ 
      \vdots  \\ 
0  \\ 
(h_t)_{r_2+1} \\
\vdots \\
(h_t)_{d_{embed}-3} \\
      \mathcal{I}_{t}^1 \\
      \mathcal{I}_{t}^2   \\
      1 
\end{array}\right] 
\quad \text{and} \quad
    z_2=W_2z_1+\bb_2 = \left[\begin{array}{c} 
    (h_t)_{1} \\
          \vdots \\
          (h_t)_{r_1-1} \\
	(h_t)_{r_1}  \\ 
      \vdots  \\ 
(h_t)_{r_2}  \\ 
(h_t)_{r_2+1} \\
\vdots \\
(h_t)_{d_{embed}-3} \\
      \mathcal{I}_{t}^1 \\
      \mathcal{I}_{t}^2   \\
      1 
\end{array}\right] 
\end{align*}
respectively. This shows (\ref{eq:gate1}). Similarly, there exists some $\vb=(\vb_1,\vb_2)\in \mathbb{S}^1$ such that $\mathcal{I}_t\cdot\vb<0$ for $t=\{1,\cdots,k_1\}$ and $\mathcal{I}_t\cdot\vb>0$ for $t=\{k_1,\cdots,\ell\}$. Applying the same argument we get (\ref{eq:gate2}).

To obtain a bound on the  constant $C$, we need $|C\mathcal{I}_t\cdot\vb|>\|H\|_{\infty}$. Hence $C>\frac{\|H\|_{\infty}}{|\mathcal{I}_t\cdot\vb|}=O(\ell\|H\|_{\infty})$.
\end{proof}

\subsection{Proof of Lemma \ref{lemmadecrement}} \label{app:proofdecrement}
\begin{proof} [Proof of Lemma \ref{lemmadecrement}]
Given an $H=[h_t]_{1\leq t\leq\ell}$, we apply the first layer of ${\rm FFN}$ with 
\begin{equation*}
    W_1 = 
    \begin{bmatrix}
     &  &  &  &  & & \\
     & &  & &  & & \\ 
      & &  & &  & & \\ 
     & &  & & 1 &  &  \\
    & &  & & & 1 &  \\
    &  &  & &  &  & 1
\end{bmatrix}\in\mathbb{R}^{d_{embed}\times d_{embed}}
\quad \text{and} \quad
\bb_1=\begin{bmatrix}
     M \\
     \vdots\\ 
     M \\
    0  \\
     0 \\
     0
\end{bmatrix}\in\mathbb{R}^{d_{embed}},
\end{equation*}
so that the output after the first layer of ${\rm FFN}$ is 
\begin{equation*} 
    H_1 = 
\begin{bmatrix}
    M & \cdots & M &   \\
    \vdots &  & \vdots  \\ 
    M & \cdots & M \\
    \mathcal{I}_1 & \cdots & \mathcal{I}_{\ell} \\
    1  & \cdots & 1
\end{bmatrix}
\in \mathbb{R}^{d_{embed}\times \ell}.
\end{equation*}
Then by Lemma~\ref{lemmagate}, there exists a two-layer ${\rm FFN}$ such that the output after applying these two layers become 
\begin{equation*} 
    H_3 = 
\begin{bmatrix}
    M & \cdots & M &  0 & \cdots & 0 \\
    \vdots & & \vdots & \vdots &  & \vdots \\ 
    M & \cdots & M & 0 & \cdots & 0  \\
    \mathcal{I}_1 & \cdots & \mathcal{I}_{k_2} & \mathcal{I}_{k_2+1} & \cdots & \mathcal{I}_{\ell} \\
    1  & \cdots & 1 & 1  & \cdots & 1
\end{bmatrix}
\in \mathbb{R}^{d_{embed}\times \ell}.
\end{equation*}
Again by Lemma~\ref{lemmagate}, there exists a two-layer ${\rm FFN}$ such that the output after applying these two layers become 
\begin{equation*} 
    H_5 = 
\begin{bmatrix}
    0 & \cdots & 0 & M & \cdots & M &  0 & \cdots & 0 \\
    \vdots & & \vdots & \vdots &  & \vdots & \vdots &  & \vdots \\ 
    0 & \cdots & 0 & M & \cdots & M &  0 & \cdots & 0 \\
    \mathcal{I}_1 & \cdots & \mathcal{I}_{k_1} & \mathcal{I}_{k_1+1} & \cdots & \mathcal{I}_{k_2} & \mathcal{I}_{k_2+1} & \cdots & \mathcal{I}_{\ell} \\
    1  & \cdots & 1 & 1  & \cdots & 1 & 1 & \cdots & 1
\end{bmatrix}\in \mathbb{R}^{d_{embed}\times \ell}.
\end{equation*}
Finally, we apply a ${\rm FFN}$ with 
\begin{equation*}
    W_6 = 
    \begin{bmatrix}
     &  &  &  &  & & & & &\\
     & &  & &  & & & & &\\ 
      & & -1 & &  & & & & &\\ 
      &  &  & \ddots  &  & & & & &\\
     & &  & & -1  & & & & & \\ 
      & &  & &  & & & & & \\ 
      & &  & &  & & & & & \\ 
     & & & & &  & & 1 &  &  \\
   & & & & &  & & & 1 &  \\
   & & & &  &  & &  &  & 1
\end{bmatrix}\in\mathbb{R}^{d_{embed}\times d_{embed}}
\quad \text{and} \quad
\bb_6=\mathbf{0}\in\mathbb{R}^{d_{embed}},
\end{equation*}
where the entries $-1$ appear in $r_1$-th row to $r_2$-th row and $r_1$-th column to $r_2$-th column. Therefore, the output after applying $W_6$ and $\bb_6$ is
\begin{equation*} 
    H_6 = 
\begin{bmatrix}
0 & \cdots & 0 & 0 & \cdots & 0 & 0 & \cdots & 0 \\
     \vdots & & \vdots & \vdots & & \vdots  & \vdots & & \vdots \\
     0 & \cdots & 0 & 0 & \cdots & 0 & 0 & \cdots & 0 \\
    0 & \cdots & 0 & -M & \cdots & -M &  0 & \cdots & 0 \\
    \vdots & & \vdots & \vdots &  & \vdots & \vdots &  & \vdots \\ 
    0 & \cdots & 0 & -M & \cdots & -M &  0 & \cdots & 0 \\
     0 & \cdots & 0 & 0 & \cdots & 0 & 0 & \cdots & 0 \\
     \vdots & & \vdots & \vdots & & \vdots  & \vdots & & \vdots \\
     0 & \cdots & 0 & 0 & \cdots & 0 & 0 & \cdots & 0
 \\
    \mathcal{I}_1 & \cdots & \mathcal{I}_{k_1} & \mathcal{I}_{k_1+1} & \cdots & \mathcal{I}_{k_2} & \mathcal{I}_{k_2+1} & \cdots & \mathcal{I}_{\ell} \\
    1  & \cdots & 1 & 1  & \cdots & 1 & 1 & \cdots & 1
\end{bmatrix}\in \mathbb{R}^{d_{embed}\times \ell},
\end{equation*}
where the entries $-M$ appear in $r_1$-th row to $r_2$-th row and $k_1$-th column to $k_2$-th column.
Therefore, the residual ${\rm FFN}$ gives the output 
\[ \text{\rm FFN}(h_t) + h_t = 
   \begin{cases} 
    h_t & \text{if}\text{  } t\in\{1,\cdots,k_1\}\cup\{k_2,\cdots,\ell\} \\
      \begin{bmatrix}
      (h_t)_{1} \\
          \vdots \\
          (h_t)_{r_1-1} \\
          (h_t)_{r_1}-M \\
          \vdots \\
          (h_t)_{r_2}-M \\
          (h_t)_{r_2+1} \\
          \vdots \\
          (h_t)_{d_{embed}-3} \\
          \mathcal{I}_t \\
          1
      \end{bmatrix} & \text{otherwise}\text{ }  \\
   \end{cases}
\]
as desired.
\end{proof}

\subsection{Bounding the error {\rm II} in \eqref{eq:errde1}}
\label{app:proofII}
Since $0\leq (\rmT^*(\{\xb^\gamma_i,y^\gamma_i\}_{i=1}^n\};\xb^\gamma_{n+1}) - y^\gamma_{n+1})^2\leq 4R^2$, by Hoeffding's inequality, for any $t>0$, it satisfies
\begin{align*}
    \mathbb{P}(\cR_{n,\Gamma}(\rmT^*) -  \EE_{\fs} \left[ \cR_n(\rmT^*(\fs))\right]\geq t)\leq e^{-\frac{t^2\Gamma}{8R^4}}.
\end{align*}
Hence with probability at least $1-\delta$, it satisfies 
\[\cR_{n,\Gamma}(\rmT^*) -  \EE_{\fs} \left[ \cR_n(\rmT^*(\fs))\right]\leq R^2\sqrt{\frac{8\log(1/\delta)}{\Gamma}}.\]

Let $\delta=h^2$, we get 
\[
\EE_{\fS} \Big(\cR_{n,\Gamma}(\rmT^*) -  \EE_{\fs} \left[ \cR_n(\rmT^*(\fs))\right]\Big)\leq O\left(\sqrt{\frac{\log(h^{-1})}{\Gamma}}+h^2\right), \]
where $O(\cdot)$ hides the dependency on $R$.

\section{Additional Experiments and Details} \label{app:experiments}

\begin{equation} \label{eq:Hexperiments}
    H =
\begin{bmatrix}
    x_{1,1}^{\gamma} & \cdots & x_{1,n}^{\gamma} & x_{1,n+1}^{\gamma}& \mathbf{0} \\
    x_{2,1}^{\gamma} & \cdots & x_{2,n}^{\gamma} & x_{2,n+1}^{\gamma} & \mathbf{0} \\
    x_{3,1}^{\gamma} & \cdots & x_{3,n}^{\gamma} & x_{3,n+1}^{\gamma} & 
    \mathbf{0}  \\
    y^{\gamma}_1 & \cdots & y^{\gamma}_n & 0 & \mathbf{0} \\ 
    0 & \cdots & \cdots & \cdots & 0 \\
    \mathcal{I}_1 & \cdots & \cdots & \cdots & \mathcal{I}_{\ell} \\
    1 & \cdots & \cdots & \cdots & 1
\end{bmatrix},
\end{equation}

\subsection{Additional Experimental Details}
For the transformer architecture we used for the experiments in Section \ref{sec:main}, we fix $d_{embed} = 8$,
$L_{\rm T}=5$, $L_{\rm FNN}=6$. The number of attention heads is $m=1$ for $n=4,8,16,32$. We generate $\Gamma=50000$ for both training and testing. The model is trained with batch size $100$, using Adam with learning rate $0.0005$ for $100$ epochs.

For experiments in Section \ref{sec:errorbound}, we fix $d_{embed} = 8$,
$L_{\rm T}=5$, $L_{\rm FNN}=6$, and the number of attention heads $m=2,4,8$ for $n=16,64,256$ respectively. We generate $\Gamma=400,1600, 6400$ for both training and testing. The model is trained with batch size $100$, using Adam with learning rate $0.0005$ for $100$ epochs.

To make the experiment setup the as close as to our theory suggests, we apply the softmax activation in the last layer of our transformer model, and ReLU activation in all the layers before the last layer. The activation function for the feed-forward components are ReLU activation.
\\
\\
\\
The following sentences are used to generate the attention score curves in Figure \ref{fig:AttenScoreReal}. Sentences are cut in the end so that all the sentences have the same length. \\

Sentence 1: "In the quiet town by the river, a curious child spent the afternoon reading stories about distant galaxies and dreaming of becoming an astronaut one day." \\

Sentence 2: "The professor walked slowly across the lecture hall, carefully explaining how black holes bend space and time while students scribbled furiously in their notebooks." \\

Sentence 3: "On a rainy evening in Paris, a young artist painted the city’s rooftops in dazzling colors, imagining how the world might look if dreams could shape reality." \\

Sentence 4: "The spacecraft drifted silently beyond the orbit of Saturn, transmitting faint signals back to Earth as scientists waited anxiously for news of its discoveries." \\

Sentence 5: "In the heart of the ancient forest, an owl watched quietly from a high branch, while a fox padded softly across the moss-covered ground below."

\subsection{Additional Experimental Results}
   
\begin{figure}[t]
       \centering
       \begin{tabular}{ccc}       \includegraphics[width=0.32\linewidth]{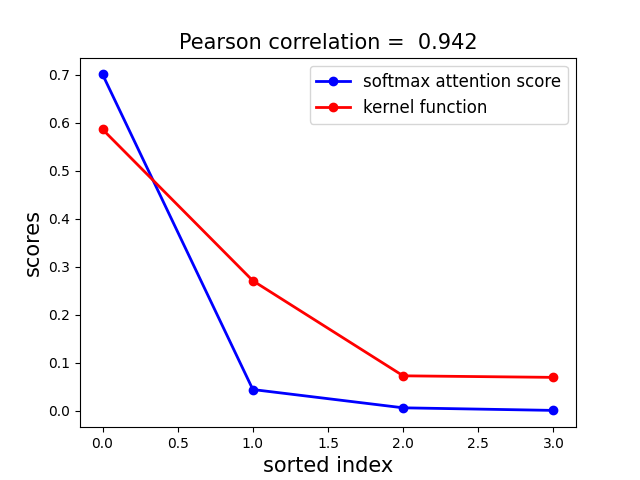}   & \includegraphics[width=0.32\linewidth]{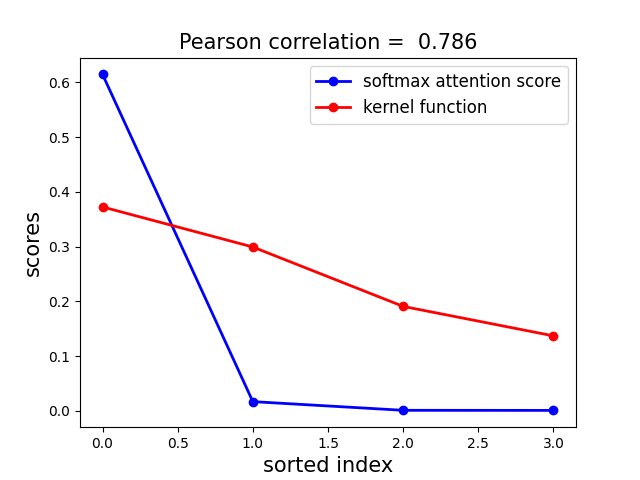} &\includegraphics[width=0.32\linewidth]{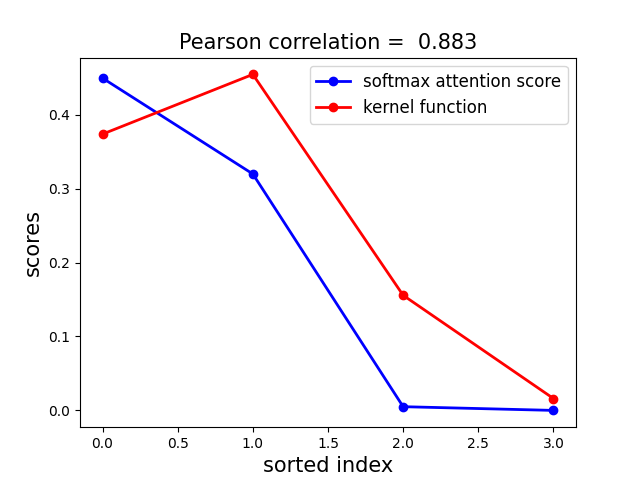} \\
       \includegraphics[width=0.32\linewidth]{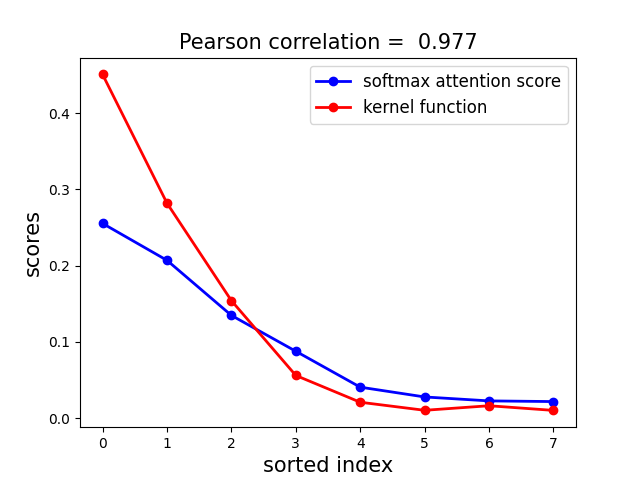}   & \includegraphics[width=0.32\linewidth]{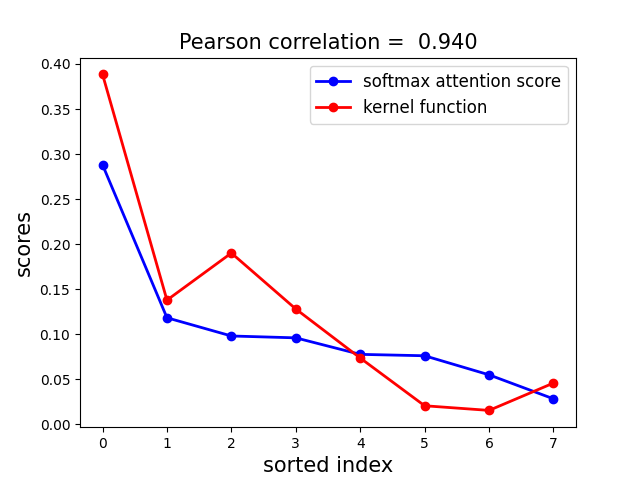}  &
       \includegraphics[width=0.32\linewidth]{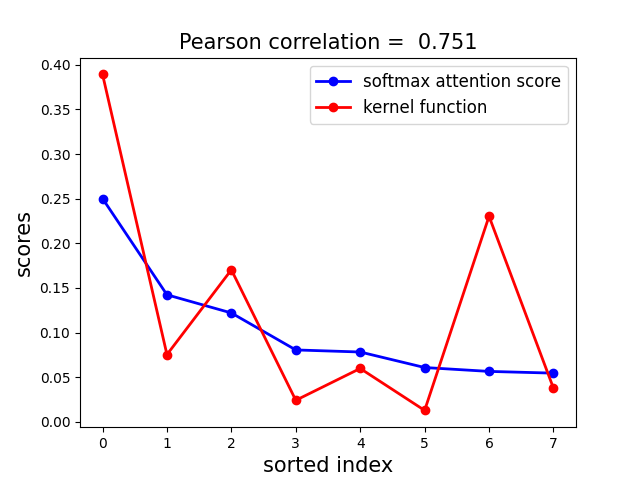} \\
       \includegraphics[width=0.32\linewidth]{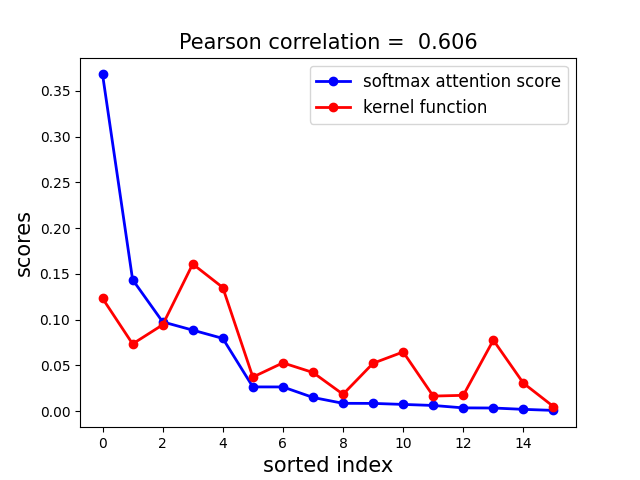}   & \includegraphics[width=0.32\linewidth]{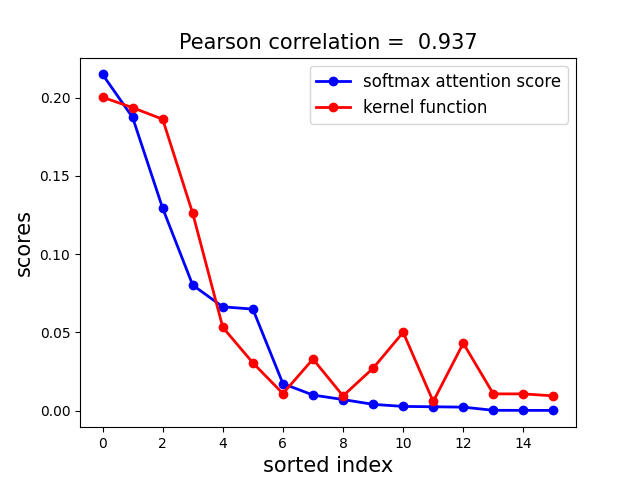}  &
       \includegraphics[width=0.32\linewidth]{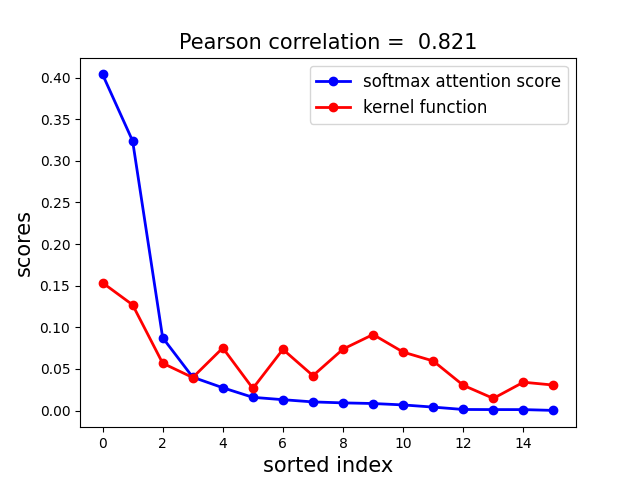} \\
       \includegraphics[width=0.32\linewidth]{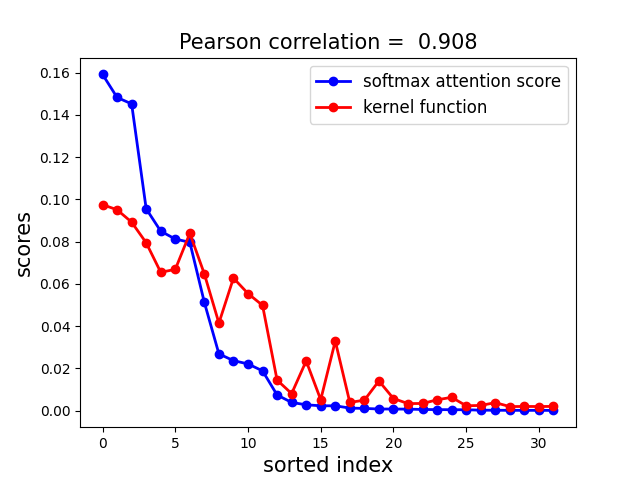}   & \includegraphics[width=0.32\linewidth]{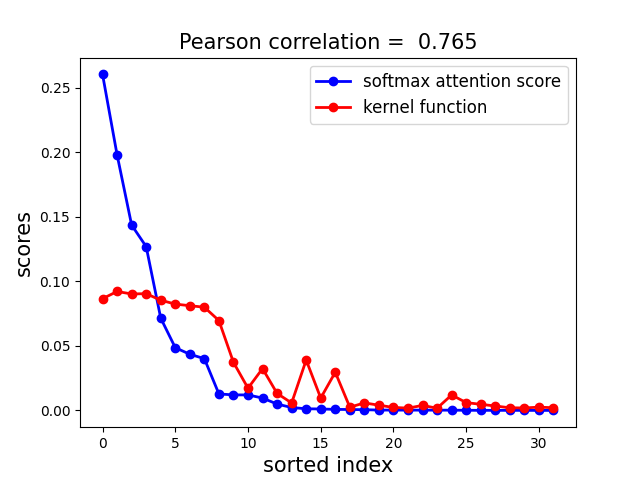}  &
       \includegraphics[width=0.32\linewidth]{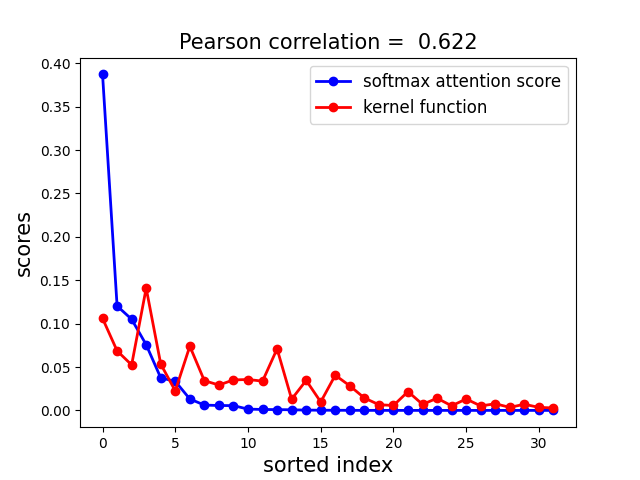} 
       \end{tabular}
       \caption{More examples of attention scores and Gaussian kernel function with in-context length $n=4, 8,16,32$ respectively.}
       \label{fig:AttenScoreAppendix_nonlinear}
   \end{figure} 

% \begin{figure}[t]
%        \centering
%        \begin{tabular}{ccc}       \includegraphics[width=0.32\linewidth]{figures/plot_scores_linear_8_linear_kernel_1.png}   & \includegraphics[width=0.32\linewidth]{figures/plot_scores_linear_8_isotropic_kernel_1.png} &\includegraphics[width=0.32\linewidth]{figures/plot_scores_linear_8_nonisotropic_kernel_1.png} \\
%        \includegraphics[width=0.32\linewidth]{figures/plot_scores_linear_16_linear_kernel_1.png}   & \includegraphics[width=0.32\linewidth]{figures/plot_scores_linear_16_isotropic_kernel_1.png} &\includegraphics[width=0.32\linewidth]{figures/plot_scores_linear_16_nonisotropic_kernel_1.png} \\
%        \includegraphics[width=0.32\linewidth]{figures/plot_scores_linear_32_linear_kernel_1.png}   & \includegraphics[width=0.32\linewidth]{figures/plot_scores_linear_32_isotropic_kernel_1.png} &\includegraphics[width=0.32\linewidth]{figures/plot_scores_linear_32_nonisotropic_kernel_1.png} \\
%        \end{tabular}
%        \caption{Comparing attention scores with different Kernel functions for in-context length $n=8,16,32$ respectively for the linear case (First column: linear kernel. Second column: Isotropic Gaussian kernel. Third column: nonisotropic Gaussian kernel). The testing relative errors for $n=8, 16, 32$ are $2.3\times 10^{-2}$, $1.9\times 10^{-2}$, $1.2\times 10^{-2}$ respectively.} 
%        \label{fig:AttenScoreAppendix_linear_different_kernels}
%    \end{figure} 

\end{document}